\newtheorem{property}{Property}
\lstdefinestyle{vscode-dark}{
    backgroundcolor=\color{black}, % Background color
    basicstyle=\ttfamily\footnotesize\color{white}, % Monospaced font, smaller size, white text
    keywordstyle=\bfseries\color{cyan}, % Keywords in bold and cyan
    commentstyle=\itshape\color{gray}, % Comments in italic and gray
    stringstyle=\color{yellow}, % Strings in yellow
    numberstyle=\tiny\color{gray}, % Line numbers in tiny font and gray
    stepnumber=1, % Line numbers at every step
    numbersep=10pt, % Space between line numbers and code
    showstringspaces=false, % Don't show spaces in strings
    breaklines=true, % Automatic line breaking 
    frame=single, % Frame around the code
    framerule=0.5pt, % Thickness of the frame
    captionpos=b, % Caption position at the bottom
}
\newcommand{\rd}{\mathrm{d}}
\newcommand{\N}{\mathcal{N}} 
\newcommand{\R}{\mathbb{R}}
\begin{document}

\title{Symplectic Neural Networks Based on Dynamical Systems}

\author{\name Benjamin Kwanen Tapley \email bentapley@hotmail.com \\
       \addr Department of Mathematics and Cybernetics\\ 
       SINTEF Digital\\
       0373 Oslo, Norway
      }

\editor{} 
 
\maketitle  

\begin{abstract}%   
  We present and analyze a framework for designing symplectic neural networks (SympNets) based on geometric integrators for Hamiltonian differential equations. The SympNets are universal approximators in the space of Hamiltonian diffeomorphisms, interpretable and have a non-vanishing gradient property. We also give a representation theory for linear systems, meaning the proposed P-SympNets can exactly parameterize any symplectic map corresponding to quadratic Hamiltonians. Extensive numerical tests demonstrate increased expressiveness and accuracy --- often several orders of magnitude better --- for lower training cost over existing architectures. Lastly, we show how to perform symbolic Hamiltonian regression with SympNets for polynomial systems using backward error analysis.
\end{abstract}

\begin{keywords} 
	Structure-Preserving Machine Learning; Physics-Informed Machine Learning; Symplectic Neural Networks; Hamiltonian Neural Networks; Dynamical Systems; Scientific Machine Learning. 
\end{keywords}
  
\section{Introduction}
Structure-preserving neural networks have recently garnered interest among the machine-learning community due to their effectiveness in learning solutions to complex problems from sparse data sets. A neural model is called ``structure-preserving" when it incorporates inductive biases, such as physics or geometry, into either the model architecture or training. This can be achieved by either penalizing non-structure-preserving model candidates in the loss function like when training a physics-informed neural network \citep{raissi2019physics} or by incorporating such structures directly into the neural architecture. The former approach is useful when the structure one wants to enforce is complex, such as a partial differential equation but results in models that are not exactly structure-preserving. The latter approach is useful when the structure is simpler, such as a symplectic structure, and can result in more parameter efficient models that are applicable to a wide range of problems. This latter approach is the focus of this paper.

In particular, we address the problem of learning a symplectic mapping from time-series data $\{x(t_i), x(t_i+h)\}_{i=1}^{n_{\text{data}}}$, where $\phi_{h}(x(t_i))=x(t_i+h)$ is an unknown map that generates the data. We assume that $\phi_{h}$ preserves a canonical symplectic structure denoted on the conjugate phase space coordinates $x(t)=(p(t),q(t))\in\R^{2n}$ by the closed, non-degenerate, canonical two-form 
$$\omega=\sum_{i=1}^{n} \rd p^i\wedge \rd q^i.$$ 
Such a map is called a canonical symplectic transformation, which by definition leaves the symplectic matrix $J=$ {\tiny ${ \left(\begin{array}{cc}
	O & -I \\
	I & O \\
	\end{array}\right)}$ } $\in\R^{2n\times 2n}$
invariant
\begin{equation}
(\frac{\partial\phi_h(x)}{\partial x})^TJ(\frac{\partial\phi_h(x)}{\partial x})=J,
\end{equation} 
where $\frac{\partial\phi_h(x)}{\partial x}$ is the Jacobian matrix of the flow map at the point $x$ and $I$, $O$ are the $n\times n$ identity and zero matrices, respectively. Canonical symplectic transformations often arise from the solution of a Hamiltonian ordinary differential equation (ODE) of the form 
\begin{equation}\label{hode}
\dot{p} = - \frac{\partial H}{\partial q},
\quad \dot{q} = \frac{\partial H}{\partial p},
\end{equation}

for Hamiltonian function $H:\R^{2n}\rightarrow\R$. If this is the case, then we call $\phi^H_h:=\phi_h$ a Hamiltonian flow or map, which is a subset of symplectic maps. 

The primary goal of this study is to develop a framework for parameterizing arbitrary Hamiltonian maps $\phi^{H}_h$ by neural networks that are symplectic by construction. The motivation for learning symplectic dynamics is threefold. First, Hamiltonian systems arise naturally in many physical scenarios, such as chemical reactions, accelerator physics, electrodynamics or ideal mechanical systems, the dynamics of which can be measured and information about the system can be learned. Second, symplectic maps can be used as a building block for learning dynamics of more realistic physical systems. Symplectic maps (or integrators) are one of the cornerstones of geometric numerical methods \citep{hairer2006geometric} and are used to solve a wide range of realistic problems in the physical sciences and engineering through their use in composition and splitting methods (e.g., \citep{tapley2019novel,tapley2022computational}). There are many examples where structure-preserving neural networks, including symplectic neural networks (SympNets), have been applied, including systems with Poisson structure \citep{jin2022learning}, systems with forces such as dissipation \citep{eidnes2023pseudo}, volume-preserving systems \citep{bajars2023locally} or optimal control \citep{meng2022sympocnet}. Third, Hamiltonian flows and dynamical system-based neural networks are beginning to play a useful role in more general machine machine-learning applications such as normalizing flows \citep{chen2018neural}, equivariant flows \citep{rezende2019equivariant}, deep neural networks with non-vanishing gradients \citep{galimberti2023hamiltonian}, generative flows \citep{toth2019hamiltonian} or classification problems \citep{haber2017stable}, to name a few. In fact, it has been recently shown in \cite{zakwan2023universal} that Hamiltonian deep neural networks have a universal approximation property for arbitrary maps, which further motivates their useful role outside purely physical systems.

The core idea behind our proposed framework is simply to approximate the unknown Hamiltonian as a sum $H\approx \sum_{i=1}^kH_i(x)$, then construct a symplectic splitting method by composing the exact Hamiltonian flows on each $H_i(x)$. By leveraging many known results from geometric numerical integration, we can prove a number of interesting properties. These properties as well as our main contributions are briefly summarized:
\begin{itemize}
	\item We develop a framework for SympNets and propose several novel SympNet architectures for parameterizing a Hamiltonian flow map that we refer to as P-SympNets, R-SympNets and GR-SympNets.
	\item We give universal approximation results for these architectures and others within the proposed framework (Theorem \ref{thm: universality}).
	\item We give a representation result for linear Hamiltonian systems for P-SympNets (Theorem \ref{thm: representation of linear hamiltonian flows}). 
	\item We demonstrate that our SympNets are interpretable and are amenable to symbolic regression algorithms using backward error analysis to identify the true Hamiltonian (Section \ref{sec: symbolic regression}).
	\item We develop a Python package that implements the SympNets and symbolic regression algorithms that can be installed via \texttt{pip install strupnet}. 
\end{itemize}

\subsection{Previous Work and Motivation}\label{sec: previous work and motivation}
The theoretical foundations of approximating symplectic maps stem from a result due to \cite{turaev2002polynomial}, who show that iterations of H\'enon-like maps of the form $(p,q)\mapsto (q+\eta, -p+\nabla V(q))$ are dense in the space symplectomorphisms in the $C^\infty$-topology. This result was used in \cite{jin2020sympnets} to prove symplectic universality of SympNet architectures of the following form\footnote{Throughout the paper we will always use $\phi^{H}_h$ to denote the \textit{exact} flow of a Hamiltonian ODE $\dot{x}=J\nabla H$. Furthermore, $\Phi^H_h$ will always be used to denote a \textit{composition} of Hamiltonian flows.}
\begin{equation}\label{SympNet fixed directions}
	\Phi^{H^\theta}_h=\phi^{T^\theta_k(p)}\circ\phi^{V^\theta_k(q)}\circ\cdots\circ\phi^{T^\theta_1(p)}\circ\phi^{V^\theta_1(q)}
\end{equation}
where $T^\theta_i(p)$ and $V^\theta_i(q)$ are trainable functions (e.g., neural networks) depending on either $p$ or $q$ only. The functions $\phi^{T^\theta_i(p)}$ and $\phi^{V^\theta_i(q)}$ can be thought of as symplectic residual neural network layers as they take the form $x+F(x)$, where $F$ is a symplectic transformation. Since the development of this novel architecture, several other authors propose learning symplectic dynamics with SympNets of the form \eqref{SympNet fixed directions} \citep[e.g.,][]{chen2019symplectic, tong2021symplectic, valperga2022learning, burby2020fast, maslovskaya2024symplectic, horn4555181generalized}.

As Hamiltonian systems are so ubiquitous, several more efficient algorithms for learning symplectic dynamics have since been proposed including \citep{chen2021data, david2023symplectic, xiong2020nonseparable, offen2022symplectic,chen2023variational}, however these methods learn the dynamics through maps that are defined implicitly, and hence require solving a set of non-linear equations to infer new dynamics. Although many of these methods yield excellent algorithms for learning symplectic structure, a primary motivation of this study to develop SympNets that can be used as building blocks for more complex neural architectures. Therefore, implicit methods will yield intractable algorithms. For this reason we will focus the discussion only on explicit maps of which there are plenty already available in the aforementioned literature.

The common conclusion throughout these studies is that using neural models that are intrinsically symplectic is generally advantageous when the data is also generated by a symplectic processes (e.g., a Hamiltonian ODE). This is largely due to the fact that learning a flow map $\phi_h:\R^{2n}\rightarrow\R^{2n}$, in general, involves learning all $2n$ components of $\phi_h$, whereas assuming canonical symplectic structure in the data uniquely defines the flow map by a \textit{single} scalar function $H:\R^{2n}\rightarrow\R$ that can be learned instead, which significantly reduces the size of the model's hypothesis space. 
  
A disadvantage of the SympNets of the form \eqref{SympNet fixed directions} is that they are less interpretable when the true Hamiltonian is generated by a non-separable Hamiltonian. This is illustrated by the following toy example. We train two SympNets, each with four layers, on a data set generated by a $2n=2$ dimensional Hamiltonian ODE \eqref{hode} with $H(p, q) = \frac{1}{2}  (p^2 + 2pq + 3q^2)$. The first SympNet is a G-SympNet \citep{jin2020sympnets} which is of the form \eqref{SympNet fixed directions} where $T^\theta_i(p)$ and $V^\theta_i(q)$ are neural networks with one hidden layer and certain activation functions. The second SympNet, proposed in this paper, is called a P-SympNet. Over 50000 training epochs, the G-SympNet reaches an MSE training set loss of about 1e-7. The P-SympNet reaches a training set loss of 1e-16 after about 400 epochs. 

\begin{figure}
	\centering 
	\begin{subfigure}{\linewidth}
		\centering  
		\includegraphics[width=0.32\linewidth]{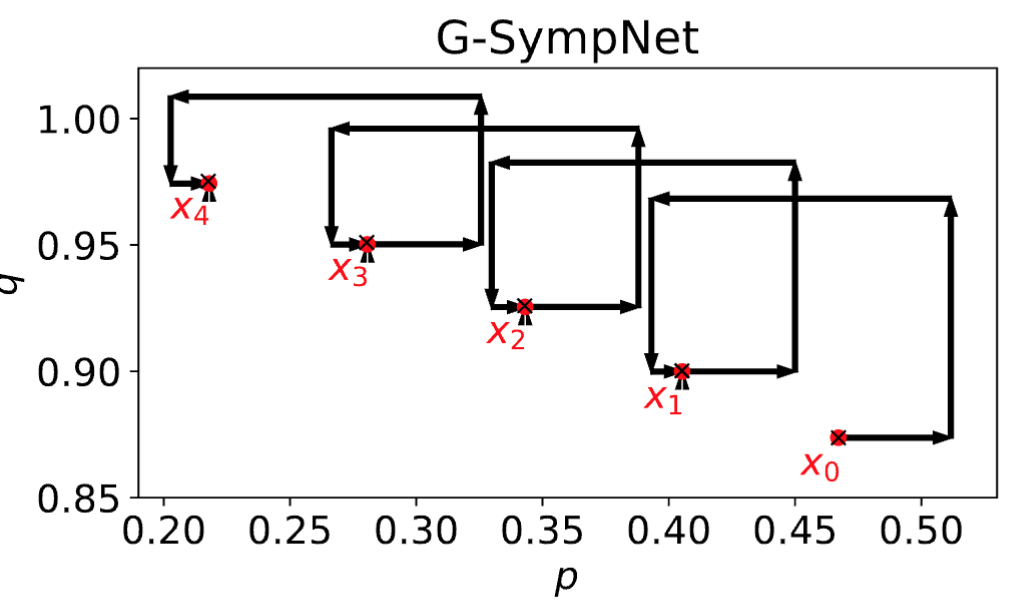}
		\includegraphics[width=0.32\linewidth]{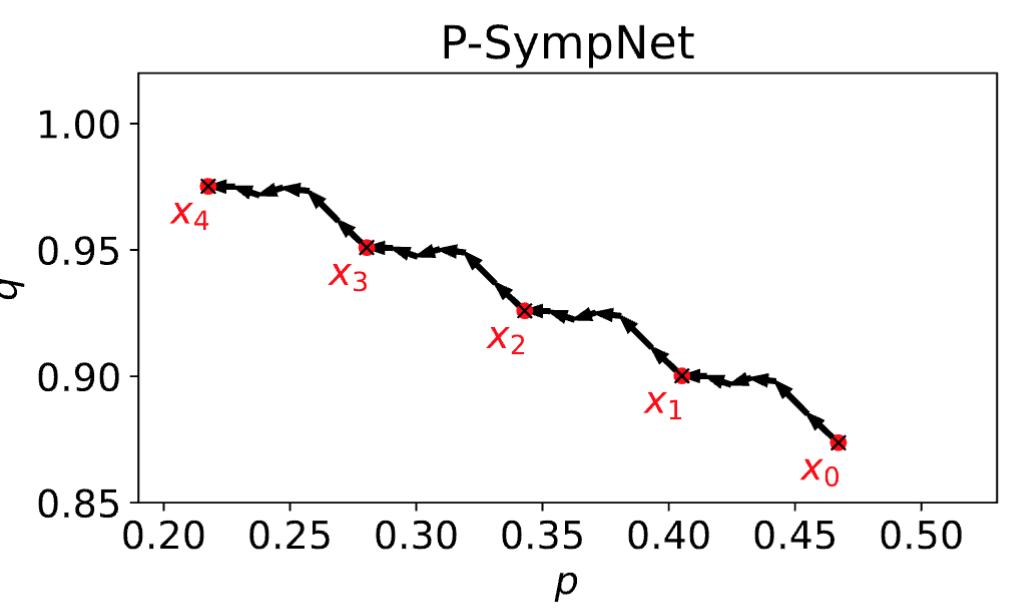}
		\caption{A visualization of how the input $x$ is propagated through the six layers of the SympNets. Each arrow represents a layer and the SympNet implements the map $x_i\mapsto x_{i+1}$. Four iterations of the map is shown, and the black crosses are the exact solution.}\label{fig: traj}
	\end{subfigure}
	\begin{subfigure}{\linewidth}
		\centering
		\includegraphics[width=0.32\linewidth]{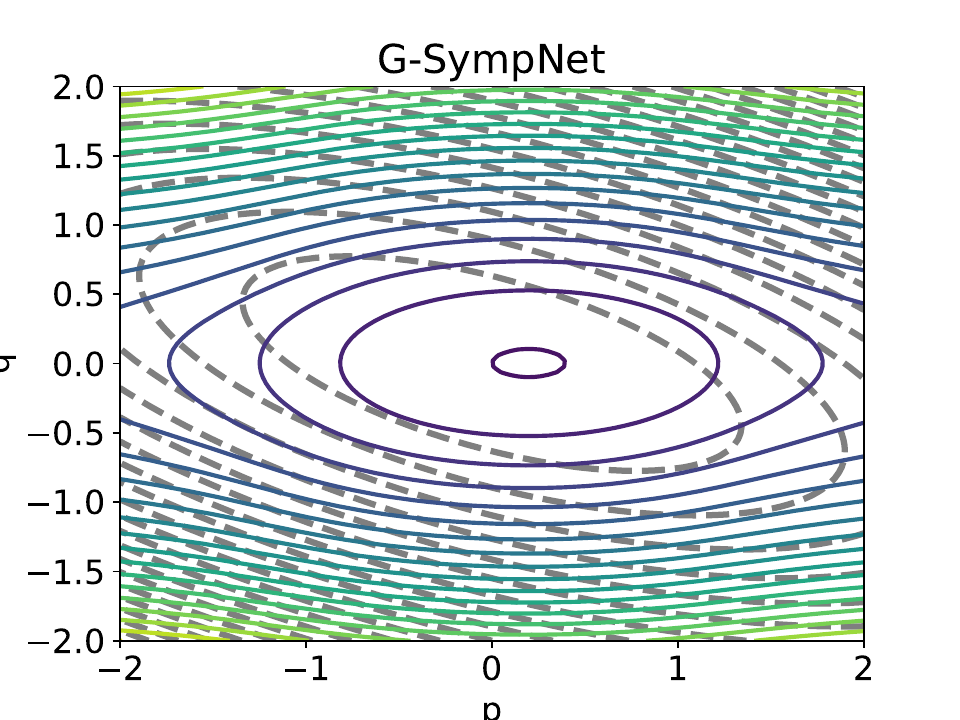}
		\includegraphics[width=0.32\linewidth]{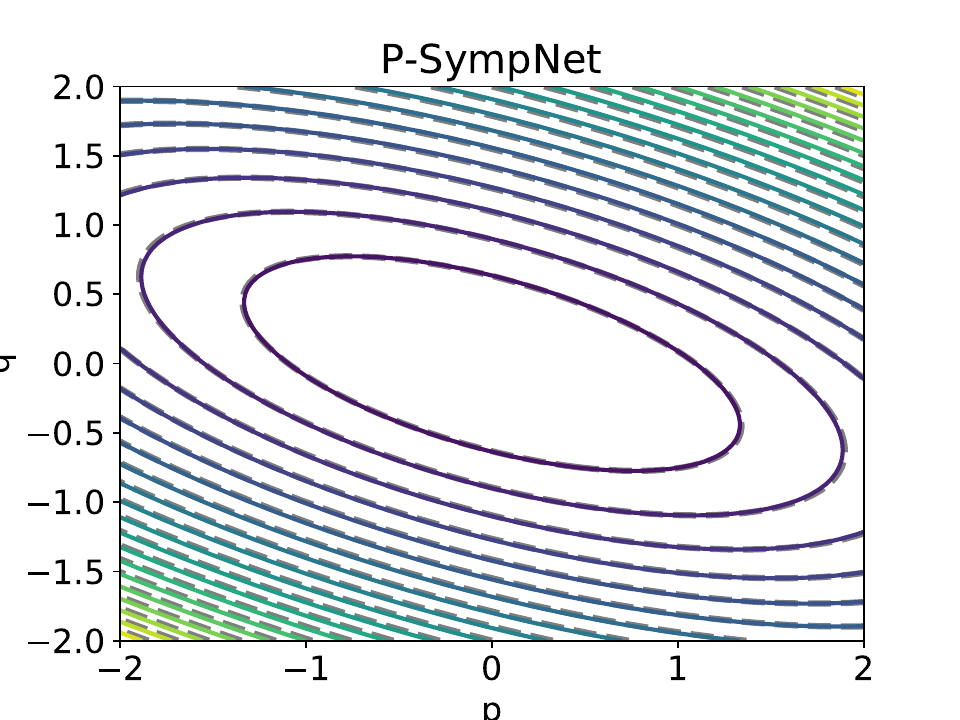}	
		\caption{The level sets of the learned \textit{inverse modified Hamiltonians}. The dashed line is the true solution.}\label{fig: hams}
	\end{subfigure}
	\caption{}\label{fig: comparison}
\end{figure} 

While the G-SympNet yields good predictions for the dynamics, it learns large values for $|T_i^\theta(x)|$ and $|V_i^\theta(x)|$ relative to the size of the true Hamiltonian, resulting in layers $\phi^{V^\theta_i(q)}$ and $\phi^{T^\theta_i(p)}$ that map their inputs in large and opposite directions. This can be seen by figure \ref{fig: traj} that depicts how a data point $x$ is propagated through the layers of the G-SympNet. This point is important because it means the usual tools available to use from numerical analysis do not converge and is evident in figure \ref{fig: hams} by the fact that the learned (inverse modified) Hamiltonian is not close to the true Hamiltonian. The P-SympNet, on the other hand, learns a series of symplectic layers that can be analyzed as though it were a splitting method. This means we can sum up the learned Hamiltonians of the symplectic layers to get an $O(h)$ approximation to the true Hamiltonian. This is called the inverse modified Hamiltonian and will be central to our discussion. Furthermore, we can recover the true Hamiltonian from the inverse modified Hamiltonian by expanding it in its Baker-Cambpell-Hausdorff (BCH) series, which is not possible for the G-SympNet. 

In fact, the G-SympNet \textit{necessarily} learns Hamiltonian layers that are large, so that the BCH formula doesn't converge because if it did, it would learn an inverse modified Hamiltonian close to a separable one, which doesn't agree with the true solution. Due to this, the parameters of the neural networks are biased \textit{away} from zero, which makes them difficult to initialize and harder to train. For these and similar problems, the P-SympNet often learns layers that are close to the identity transformation, and hence initializing them with small random numbers is an appropriate inductive bias. This is evident by small arrows in figure \ref{fig: traj}, representing near-identiy transformations for each layer of the P-SympNet. 

This motivates the present paper, which aims to extend and unify many of these SympNets into a common framework using tools from geometric numerical integration. Within this framework, we can also propose several new SympNets of which we present three that we refer to as P-SympNets (polynomial), R-SympNets (ridge) and GR-SympNets (generalized ridge). These SympNets are of the form,
\begin{equation} \label{eq: my sympnets}
	\Phi^{H^{\theta}}_h=\phi^{{H}^{\theta}_k(x)}_h\circ\dots\circ\phi^{{H}^{\theta}_1(x)}_h,
\end{equation}
where the main difference from \eqref{SympNet fixed directions} is that the layers $\phi^{{H}^{\theta}_i(x)}_h$ use information from both $p$ and $q$ \textit{simultaneously}, allowing for increased expressiveness and interpretability. Instead of relying on the symplectic universality results of \citep{turaev2002polynomial} to prove universality of the SympNets, the above framework allows us to leverage the vast body of literature from dynamical systems theory, namely geometric numerical integration \citep{hairer2006geometric}, to analyze their properties. 

In the remainder of the paper we will recall some theory from backward error analysis and geometric numerical integration of Hamiltonian vector fields. We then outline a framework for constructing SympNets, and then suggest a few models that fit within it. We then present numerical experiments including Hamiltonian system identification and give concluding remarks in the final section.

\section{Parameterizing a Hamiltonian Map} \label{sec: background}
The SympNets proposed in this paper exploit many ideas from the field of geometric numerical integration. This field is concerned with creating structure-preserving maps that possess geometric properties, for example, preservation of first integrals \citep{tapley2022geometric}, second integrals \citep{tapley2023preservation}, volume forms \citep{kang1995volume}, measures \citep{celledoni2019using} and so forth. In recent years, many studies have shown that dynamical systems theory and numerical methods are intricately linked to neural networks, see e.g., \citep{haber2017stable, chang2019antisymmetricrnn, sherry2024designing,celledoni2023dynamical,chen2018neural,galimberti2023hamiltonian}. Hence, it is natural to use similar tools to develop and analyze structure-preserving neural networks as we will do here. We refer to \citep{hairer2006geometric} for a comprehensive introduction to geometric numerical integration. In particular, we will assume some familiarity splitting methods, see e.g., \citep{mclachlan2002splitting,mclachlan2004explicit}.

The concept of the (inverse) backward error analysis is central to the problem of learning dynamics and is presented Section \ref{sec:bea}. Here, we begin with recalling some known theory to do with the classical backward error analysis for splitting methods. We then outline the inverse case, which is relevant for our situation. Next, we leverage this theory and discuss how a composition of symplectic maps can approximate an arbitrary Hamiltonian flow. This leads to our universal approximation result and is a key component to our SympNet framework. Lastly, we show how one can define the layers in equation \eqref{eq: my sympnets} using solutions of Hamiltonian shear vector fields to construct efficient SympNet architectures within this framework. 

\subsection{Backward Error Analysis}\label{sec:bea}
Here we will discuss some concepts that are central to parameterizing an unknown map $\phi^{H}_h$, namely backward error analysis (BEA), which is a tool to analyze the behavior of numerical methods. Letting $X_H=J\nabla H$ denote the Hamiltonian vector field corresponding to Hamiltonian $H:\R^{2n}\rightarrow\R$, denote its time-$h$ flow by $\phi^{H}_h$ and $\Phi^{H}_h\approx\phi^{H}_h$ a symplectic splitting method for $X_H$. Then BEA computes a \textit{modified} ODE $\dot{x}=X_{\tilde{H}}(x)=X_{H}(x)+O(h)$ whose exact flow is the numerical method applied to $\dot{x}=X_H(x)$. That is, $\Phi_h^{H}=\phi_h^{\tilde{H}}$ (see the right-hand side of figure \ref{fig:BEA}). 

For example, take the $k=2$ map $\Phi^H_h=\phi^{H_1}_h\circ\phi^{H_2}_h$ for the splitting $H=H_1(x)+H_2(x)$. If the Hamiltonian is separable: $H_1(x)=T(p)$, $H_2(x)=V(q)$, then this corresponds to the symplectic Euler method. It is shown using BEA that the symplectic Euler method is the exact flow of an ODE with a modified Hamiltonian $\tilde{H}$ \citep{hairer2006geometric}[Chapt. IX.4]. Further, $\tilde{H}$ can be approximated by its Baker-Cambpel-Hausdorff (BCH) series, whose first few terms read
\begin{align}\label{invmod bch}
\tilde{H}  =  H_1 + H_2 + \frac{h}{2}\{H_1, H_2\} + \frac{h^2}{12}\big(\{H_1, \{H_1, H_2\}\} + \{H_2, \{H_2, H_1\}\}\big) +O(h^3), 
\end{align}
where $\{H_1,H_2\}=\nabla H_1^T J \nabla H_2$ is the Poisson bracket. We note that despite the fact that $\tilde{H}$ always exists, the BCH series does not always converge and therefore should be truncated appropriately. Consider now the $k>1$ map $\Phi_h^{H} = \phi^{H_1}_h\circ\dots\circ\phi^{{H}_k}_h$ for the Hamiltonian splitting $H = \sum_{i=1}^k H_i$. Then by induction, one can show that this is the exact flow of a system with the following modified Hamiltonian
\begin{equation}\label{mod ham}
\tilde{H} = \sum_{i=1}^k H_i + \frac{h}{2}\sum_{i<j\le k}\{H_i, H_j\}+O(h^2)
\end{equation}
where the second summation is over the $\frac{1}{2}k(k-1)$ pairs of indices $(i,j)$ where $i<j$ and $j\le k$.  Note that the $O(h^2)$ and higher terms are computable, and involve nested Poisson brackets of the basis Hamiltonians $H_i$.
\begin{figure}
	\centering
	\begin{tikzcd}[column sep={8em,between origins}]	
		X_{\bar{H}} \arrow[dotted]{r}{\mathrm{BEA}} \arrow[dashed]{rd}[anchor=center,rotate=-25,yshift=1ex]{\text{numerical}} & X_{H} \arrow[dotted]{r}{\mathrm{BEA}} \arrow{d}{\text{exact}} \arrow[dashed]{rd}[anchor=center,rotate=-25,yshift=1ex]{\text{numerical}} & X_{\tilde{H}}  \arrow{d}{\text{exact}} \\
		& \Phi_h^{\bar{H}}=\phi_h^{H}\qquad                               & \Phi_h^{H}=\phi_h^{\tilde{H}}\qquad
	\end{tikzcd}
	\caption{A diagram outlining the relationship between the original Hamiltonian ODE $X_{H}=J\nabla H(x)$, the modified ODE $X_{\tilde{H}}$, the inverse modified ODE $X_{\bar{H}}$ and their numerical methods $\Phi_h$ and exact flows $\phi_h$. The dashed lines denote application of the numerical method, the solid lines denote the exact flow and the dotted lines denote application of backward error analysis (BEA).}\label{fig:BEA}
\end{figure}
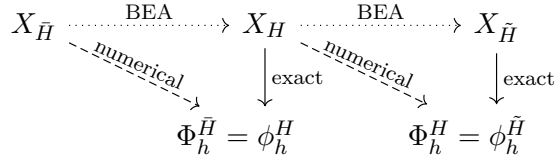
%That is, the method \eqref{composition} is the exact flow of a Hamiltonian system whose Hamiltonian is $O(h)$ away from the true Hamiltonian. For example, the symplectic Euler method is the exact flow of a non-separable system despite the fact that the basis matrices $A_i$ and $B_i$ are chosen in a way that exploits the separability of the original Hamiltonian $H$. 
\subsubsection{Inverse Modified Hamiltonians}
The inverse BEA scenario differs in that we do not know the true Hamiltonian ODE. Instead we only have observations (i.e., data) of its flow map $\phi_h^{H}$. We would like to parameterize it with a $k$-layer SympNet of the form 
\begin{equation}\label{lie trotter map}
\Phi_h^{\bar{H}} = \phi^{\bar{H}_k}_h\circ\dots\circ\phi^{\bar{H}_1}_h
\end{equation}
such that, ideally, the following holds
\begin{equation}\label{true flow}
\Phi_h^{\bar{H}} = \phi_h^{H}.
\end{equation} 
We define the inverse Hamiltonian for a map of the form \eqref{lie trotter map} as the following. 
\begin{definition}[Inverse modified Hamiltonian of $\Phi_h^{\bar{H}}$]
	The inverse modified Hamiltonian of the map \eqref{lie trotter map} is the function $	\bar{H} = \sum_{i=1}^{k}\bar{H}_i.$
\end{definition}
The inverse BEA is depicted on the left-hand side of figure \ref{fig:BEA} and was studied in \cite{zhu2020deep}. The inverse modified Hamiltonian can be found by making the substitution $H\rightarrow\bar{H}$ and $\tilde{H}\rightarrow H$ in \eqref{mod ham}. That is,
\begin{equation}\label{true ham from inv mod ham}
H =\sum_{i=1}^{k}\bar{H}_i + \frac{h}{2}\sum_{i<j\le k}\{\bar{H}_i, \bar{H}_j\}+O(h^2). 
\end{equation}
As Hamiltonian flows form a group under composition \citep{polterovich2012geometry} some $H$ in \eqref{true flow} is guaranteed to exist. The question we explore next is how to find a set $\{\bar{H}_i\}_{i=1}^{k}$ such that equation \eqref{true flow} holds to arbitrary precision.

\subsubsection{Learning an Inverse Modified Hamiltonian}
In general, it is not possible to know what form $\bar{H}$ should take given no knowledge of the true Hamiltonian $H\in C^1(\R^{2n})$. It is instead logical to parameterize $\bar{H}$ with trainable basis functions $\bar{H}^\theta = \sum_{i=1}^k \bar{H}^{\theta}_i$, where $\theta\in\Theta$ represent some parameters and $\Theta$ is the vector space that $\theta$ are allowed to vary over. Ideally, we would like to choose the basis functions to form a map $\Phi_h^{\bar{H}^{\theta}} = \phi^{\bar{H}^{\theta}_k}_h\circ\dots\circ\phi^{\bar{H}^{\theta}_1}_h$
so that there exist some $\theta$ such that $\Phi_h^{\bar{H}^{\theta}} \approx \phi^{H}_h
$ to arbitrary precision (i.e., universal approximation). In other words, we would like that $\bar{H}$ be represented by superpositions of the basis functions
\begin{equation}\label{inv mod ham span}
\bar{H} \in \text{span}\{\bar{H}^{\theta}_i|\,\, \theta \in\Theta \},
\end{equation}
or that this space is dense on compact subsets in $C^1(\R^{2n})$ according to the following definition. 
\begin{definition}
	Let $\Omega\subset\R^n$ be compact. We say that a set $\mathcal{H}$ is $m$-dense, or dense in $C^m(\Omega)$, if there exists some $H^{\theta}\in\mathcal{H}$ such that for any $H\in C^m(\Omega)$ and $\epsilon>0$ 
\begin{equation}\label{density}
\|H(x) - H^{\theta}(x)\|_{C^m(\Omega)}<\epsilon,\quad \forall x\in \Omega,
\end{equation}
where the norm $\|\cdot\|_{C^m(\Omega)}$ is defined as 
$$\|f\|_{C^m(\Omega)} = \sup_{|\alpha| \leq m} \sup_{x \in \Omega} \left| D^\alpha f(x) \right|,$$ $\alpha$ is a multi-index and $D^\alpha=(\frac{\partial}{\partial x_1})^{\alpha_1}...(\frac{\partial}{\partial x_n})^{\alpha_n}$. 
\end{definition}
Doing so allows us to find an arbitrarily close approximation to the true inverse modified Hamiltonian. Given such a set, the parameters $\theta$ can be found by optimizing a loss function on the data set $\mathcal{D}=\{x_i, \phi^{H}_h(x_i)\}_{i=1}^{n_{\text{data}}}$ given by 
\begin{equation}\label{loss}
loss = \sum_{i=1}^{n_{\text{data}}} \| \Phi_h^{\bar{H}^{\theta}}(x_i) - \phi^{H}_h(x_i) \|^2.
\end{equation}
If \eqref{inv mod ham span} holds, then in theory the loss function can be minimized to machine-precision as we will see in the quadratic $H$ case, for carefully selected $\bar{H}_i^\theta$. However, in general this is unachievable. In practice, we would like to choose some basis functions that are dense in compact subset of $C^1(\R^{2n})$. Assuming we can approximate the inverse modified Hamiltonian with arbitrary accuracy, then $\|\Phi_h^{\bar{H}^{\theta}}-\phi_h^{{H}}\|$ can in theory be made arbitrarily small. This property is called universal approximation of Hamiltonian flows and is summarized by the following theorem. 
\begin{theorem}[Universal approximation of Hamiltonian flows]\label{thm: universality}
	Let $\Omega\subset \R^{2n}$ denote some compact subset, assume $H^{\theta}:\R^{2n}\rightarrow\R$ are such that  $\mathrm{span}\{H^{\theta}(x)| \theta\in\Theta\}$ is dense in $C^1(\Omega)$ and let $\phi_h^{H}(x)$ denote a time-$h$ Hamiltonian flow corresponding to the vector field $X_H=J\nabla H$, given some $H\in C^1(\Omega)$. Then there exists some set of $k$ functions $\{H^{\theta}_i\}_{i=1}^k$ such that for any $\epsilon>0$
	\begin{equation}
	\|\phi^{H^{\theta}_k}_h\circ ... \circ \phi^{H^{\theta}_1}_h(x)-\phi^{H}_h(x)\| < \epsilon,
	\end{equation}
	for all $x\in \Omega$.
\end{theorem}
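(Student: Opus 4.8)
The plan is to combine three ingredients. First, the density hypothesis is used to approximate the target $H$ in the $C^1(\Omega)$ norm by a finite superposition of the basis Hamiltonians. Second, the classical theorem on continuous dependence of solutions on the vector field lets us pass from $C^1$-closeness of Hamiltonians to uniform closeness of their time-$h$ flows on a compact set. Third, a quantitative Lie--Trotter (splitting) estimate realizes the flow of a \emph{sum} of Hamiltonians as a finite composition of flows of the individual summands, each run for a short time and hence --- after rescaling time into the Hamiltonian --- again of the admissible form $\phi_h^{H^{\theta}_i}$.

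In more detail, I would fix $\epsilon>0$, pass to a slightly enlarged compact set $\Omega'\supset\Omega$ that contains the image $\phi_h^H(\Omega)$ together with all the (short-time) intermediate trajectories appearing below, and invoke the density hypothesis to write $\hat H=\sum_{j=1}^m c_j H^{\theta_j}$ with $\|H-\hat H\|_{C^1(\Omega')}<\delta$ for $\delta=\delta(\epsilon)$ to be chosen. Since the $H^{\theta_j}$ may be taken smooth, $X_{\hat H}$ is Lipschitz on $\Omega'$, so a Gr\"onwall estimate gives $\|\phi_h^{H}(x)-\phi_h^{\hat H}(x)\|<\epsilon/2$ for all $x\in\Omega$ once $\delta$ is small enough (the existence and uniqueness of $\phi_h^H$ being part of the standing hypotheses). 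It remains to approximate $\phi_h^{\hat H}$ by a composition of layer flows.

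For this last step, split $\hat H$ into its $m$ summands $\hat H_j:=c_j H^{\theta_j}$ and apply the Lie--Trotter formula $\phi_h^{\hat H}=\lim_{N\to\infty}\big(\phi_{h/N}^{\hat H_1}\circ\cdots\circ\phi_{h/N}^{\hat H_m}\big)^{N}$, whose convergence is uniform on $\Omega$ with an explicit $O(1/N)$ rate; choosing $N$ large makes this composite within $\epsilon/2$ of $\phi_h^{\hat H}$ on $\Omega$. Finally, rescale time into the Hamiltonian, $\phi_{h/N}^{\hat H_j}=\phi_h^{\hat H_j/N}=\phi_h^{(c_j/N)H^{\theta_j}}=\phi_h^{H^{\theta_j^{(N)}}}$, where $\theta_j^{(N)}$ absorbs the scalar $c_j/N$; this uses that the admissible parameter family is closed under such rescalings, which holds for each concrete architecture considered below (polynomial, ridge, generalized-ridge) since each carries trainable multiplicative weights. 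The result is a composition of $k=mN$ layer flows, and the triangle inequality yields $\|\phi^{H^{\theta}_k}_h\circ\cdots\circ\phi^{H^{\theta}_1}_h(x)-\phi_h^H(x)\|<\epsilon$ on $\Omega$, as claimed.

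The main obstacle is making the Lie--Trotter bound genuinely quantitative and uniform: one must ensure every intermediate map in the composition remains inside the common compact set $\Omega'$ on which the relevant vector fields and their first derivatives are bounded, and track how the constants in the $O(1/N)$ estimate depend on $m$, the coefficients $c_j$, and $\sup_j\|H^{\theta_j}\|_{C^1(\Omega')}$. A secondary point requiring care is regularity --- with $H$ only $C^1$ the field $X_H$ need not be Lipschitz, so uniqueness of $\phi_h^H$ must be assumed (or $H$ mollified first), whereas the approximants $\hat H$ and all composed layers can be kept smooth so that no such issue arises for the constructed map itself.
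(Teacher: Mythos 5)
Your proposal is correct and follows essentially the same route as the paper's proof: density in $C^1(\Omega)$ plus a Gr\"onwall/continuous-dependence bound to replace $H$ by a finite sum $\hat H=\sum_j \hat H_j$ at the level of exact flows, then a quantitative Lie--Trotter (first-order splitting with $O(1/N)$ global error) combined with the rescaling $\phi_{h/N}^{\hat H_j}=\phi_h^{\hat H_j/N}$ to realize $\phi_h^{\hat H}$ as a composition of $k=mN$ admissible layers, finishing with the triangle inequality. Your added remarks on keeping all intermediate trajectories in a common compact set and on the Lipschitz issue for merely $C^1$ Hamiltonians address points the paper's argument leaves implicit.
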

The proof is given in Appendix \ref{sec:proof universality}. In \citep{celledoni2023dynamical}, universality results are given for deep neural architectures by framing a neural network as a discrete-time solution to a dynamical system, which is analagous to ours. While Theorem \ref{thm: universality} ensures that a SympNet of the form \eqref{lie trotter map} is dense in Hamiltonian diffeomorphisms, we are still faced with the highly non-trivial task of computing the exact flows $\phi_h^{\bar{H}^{\theta}_i}$ for a dense set of functions $\bar{H}^{\theta}$ comprising the SympNet layers. This is the subject of Section \ref{sec:symplectic neural networks}. In the next section we will make concrete our general framework and outline some properties of such SympNets.

\subsection{A Framework for Symplectic Neural Networks}	

In this section we will present and analyze a framework for constructing SympNets, which are defined, quite generally, as follows. 

% Let $\mathcal{H} \subset C^1(\R^{2n})$ be a space of differentiable functions. 
% \begin{equation}
% 	\mathcal{SN}^k(H^\theta)=\{ \phi_h^{\bar{H}^{\theta}_1}\circ ... \circ \phi_h^{\bar{H}^{\theta}_k}| \frac{\mathrm{d}}{\mathrm{d} t}\phi_t^H = X_H(\phi_t^H),\quad H\in\mathcal{H} \}
% \end{equation}
% denote the set of all $k$ composition 
\begin{definition}[Symplectic neural network]\label{def:SympNet}
	A $k$-layer symplectic neural network is a map
	\begin{equation}\label{composition}
	\Phi^{\bar{H}^\theta}_h(x) = \phi_h^{\bar{H}^{\theta}_k}\circ ... \circ \phi_h^{\bar{H}^{\theta}_1}(x),
	\end{equation}
	where $\phi_h^{\bar{H}^{\theta}_i}$ are exact Hamiltonian flows, called layers, and the set $\mathcal{H} = \{\bar{H}^{\theta}_i|{\theta}_i\in\Theta\}$ is called the basis (or generating) Hamiltonian set. The inverse modified Hamiltonian of $\Phi^{\bar{H}^\theta}_h(x)$ is given by $\bar{H}^\theta = \sum_{i=1}^k\bar{H}^{\theta}_i$. 
\end{definition}

For brevity, we will usually assume that $\mathcal{H}$ is scale invariant, meaning if $ H\in \mathcal{H}$ then so is $\alpha H$ for $\alpha\in\R$. This framework relies on the fact that we can find a set of basis Hamiltonians such that the layers $\phi_h^{\bar{H}_i^\theta}$ can be computed, which is the subject of the next section. For now, we will assume that this is possible. Framing a SympNet is such a way allows us to leverage a number of concepts from dynamical systems theory and numerical analysis. This has been repeatedly demonstrated in several studies to be a powerful tool when analyzing properties of neural networks, e.g., \citep{haber2017stable, chang2019antisymmetricrnn, sherry2024designing,celledoni2023dynamical,chen2018neural,galimberti2023hamiltonian, celledoni2023learning}. We will now outline some properties of such SympNets. The first and most important one is the following.
 
\begin{property}[Universality]
	If $\mathrm{span} (\mathcal{H})$ is dense in $C^1(\Omega)$, then Theorem \ref{thm: universality} guarantees that $\Phi_h^{\bar{H}^{\theta}}$, according to Definition \ref{def:SympNet}, is dense in Hamiltonian diffeomorphisms. 
\end{property}

Another advantage is due to the following obvious lemma.

\begin{lemma} \label{lemma: small basis} \label{property 2 small basis}
	Let $\mathrm{span}(\mathcal{H})$ be dense in $C^1(\Omega)$. Then there exists a set of functions $\{H^{\theta}_i\}_{i=1}^k$ such that for any $H\in C^1(\Omega)$ and $\epsilon>0$  
	\begin{itemize}
		\item[(i)]$ \|H - \sum_{i=1}^k H^{\theta}_i\|_{C^1(\Omega)}<\epsilon$, and 
		\item[(ii)]$\|H^{\theta}_i\|_{C^1(\Omega)} < \epsilon, \quad \forall i=1,...,k$.
	\end{itemize}
\end{lemma}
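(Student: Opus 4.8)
The plan is to first use density to replace the target $H$ by a single finite linear combination of basis Hamiltonians, and then chop that combination into arbitrarily many pieces of arbitrarily small $C^1$-norm, using the standing scale-invariance assumption on $\mathcal{H}$ to keep every piece inside the admissible family.

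First, since $\mathrm{span}(\mathcal{H})$ is dense in $C^1(\Omega)$, choose $G\in\mathrm{span}(\mathcal{H})$ with $\|H-G\|_{C^1(\Omega)}<\epsilon$. By definition of the span, $G=\sum_{j=1}^m c_j G_j$ for finitely many scalars $c_j\in\R$ and functions $G_j\in\mathcal{H}$. Because $\mathcal{H}$ is scale invariant, each $c_jG_j\in\mathcal{H}$, so after absorbing the scalars we may assume $G=\sum_{j=1}^m G_j$ with $G_j\in\mathcal{H}$. Next, fix a positive integer $N$ large enough that $N>\epsilon^{-1}\max_{1\le j\le m}\|G_j\|_{C^1(\Omega)}$, and for each $j$ write $G_j=\sum_{\ell=1}^N \tfrac1N G_j$. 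Again by scale invariance $\tfrac1N G_j\in\mathcal{H}$, and since the $C^1(\Omega)$-norm is positively homogeneous, $\|\tfrac1N G_j\|_{C^1(\Omega)}=\tfrac1N\|G_j\|_{C^1(\Omega)}<\epsilon$. Relabel the $k:=mN$ functions $\{\tfrac1N G_j : 1\le j\le m,\ 1\le \ell\le N\}$ as $H^{\theta}_1,\dots,H^{\theta}_k$. Then $\sum_{i=1}^k H^{\theta}_i=\sum_{j=1}^m G_j=G$, so $\|H-\sum_{i=1}^k H^{\theta}_i\|_{C^1(\Omega)}<\epsilon$ by the choice of $G$, giving (i), while (ii) holds for every $i$ by construction.

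The argument is genuinely routine — as the statement says, "obvious" — and I do not expect any real obstacle. The only point that needs care is that the splitting must keep each summand inside the admissible set $\mathcal{H}$ rather than merely inside $\mathrm{span}(\mathcal{H})$; this is exactly what the scale-invariance hypothesis on $\mathcal{H}$ provides (together with homogeneity of $\|\cdot\|_{C^1(\Omega)}$, which is what turns a large number of summands into a uniform smallness bound). If one preferred not to assume scale invariance, one would instead need $\mathcal{H}$ to be closed under the particular dilations used here, or phrase the conclusion directly in terms of elements of $\mathrm{span}(\mathcal{H})$; but under the stated hypotheses the proof above is complete.
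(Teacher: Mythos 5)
Your proof is correct and is exactly the argument the paper has in mind; the paper states this as an ``obvious lemma'' without writing out a proof, and your decomposition --- density to get a finite combination $G=\sum_j c_jG_j$, scale invariance to absorb the scalars, then splitting each $G_j$ into $N$ copies of $\tfrac1N G_j$ so that homogeneity of $\|\cdot\|_{C^1(\Omega)}$ forces each summand below $\epsilon$ --- supplies precisely the missing details, including the correct observation that scale invariance of $\mathcal{H}$ is what keeps every piece admissible rather than merely in $\mathrm{span}(\mathcal{H})$.
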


Statement $(i)$ is a direct statement of the assumption of density, whilst $(ii)$ says that it is possible to bound the norms of each basis function $H^{\theta}_i$, which is advantageous from a numerical analysis point of view and has important consequences if one is interested in system identification and symbolic Hamiltonian regression.

\begin{property}[Interpretability]
	Due to Lemma \ref{lemma: small basis} $(ii)$, there exists a set of functions $\{H^{\theta}_i\}_{i=1}^k$ whose Hamiltonian flows $\phi^{H^{\theta}_i}_h$ represent transformations that are close to the identity and can be expanded into a \textit{converging} power series. 
\end{property}

This is not necessarily true for an arbitrary SympNet, as we have already seen in the toy example of Section \ref{sec: previous work and motivation}. Furthermore, we can then use inverse BEA to identify the true Hamiltonian, which we will discuss in detail in Section \ref{sec: symbolic regression}. 

Next, we observe that SympNets of this form are simply splitting methods of Hamiltonian flows, which form a group under composition \citep{polterovich2012geometry}. That is, if $\phi^{H_1}_h$ and $\phi^{H_2}_h$ are the flows of the Hamiltonian vector fields $X_{H_1}$ and $X_{H_2}$, then $\phi^{H_2}_h\ circ\phi^{H_1}_h$ is a Hamiltonian flow of the Hamiltonian vector field $X_{K}$, where $K=H_2+H_1\circ(\phi_h^{H_2})^{-1}$. 
\begin{property}[Group property]
	SympNets according to Definition \ref{def:SympNet} are Hamiltonian flows and form a group under composition. That is, if $\Phi_h^{H}$ and $ \Phi_h^{K}$ are SympNets, then $\Phi_h^{H}\circ\Phi_h^{K}$ is also a SympNet and hence a Hamiltonian flow. Furthermore, the inverse is the map with negative timestep and their layers reversed
\begin{equation}
\left(\Phi_h^{\bar{H}^\theta}\right)^{-1} = \phi_{-h}^{\bar{H}_1^\theta}\circ ... \circ \phi_{-h}^{\bar{H}_k^\theta},
\end{equation}
\end{property} 
Furthermore, we have the following obvious property, which is true of all symplectic flows. 
\begin{property}[Volume preservation]
	SympNets according to Definition \ref{def:SympNet} preserve volume in phase space, meaning
	\begin{equation} 
		\det(\frac{\partial\phi^{\bar{H}^\theta}_h(x)}{\partial x}) = 1.
	\end{equation}
\end{property} 
This has been suggested to yield more stability in training. 

Another property of SympNets is that they circumvent the vanishing gradient problem. This is a well-known phenomenon in deep learning that places limitations on the depth of neural networks. This arises during a gradient descent step of the form 
\begin{equation}
	\theta_{m+1} = \theta_m - \eta \nabla_{\theta} \mathcal{L}(\theta_m),
\end{equation}
where $\mathcal{L}$ is the loss function. The idea is to find stationary points of the loss function by iteratively updating the parameters $\theta$ in the direction of the negative gradient. Denote by $y_j:=\phi^{\bar{H}_j^\theta}_h\circ\dots\circ\phi^{\bar{H}_1^\theta}_h$ the output of the $j$th layer of a $k$-layer SympNet $\Phi_h^{\bar{H}^\theta}=y_k$. Then we can express the gradient for the $i$th parameter of layer $j$ by 
\begin{equation} 
	\frac{\partial \mathcal{L}}{\partial \theta_{i,j}} = \frac{\partial y_{j+1}}{\partial \theta_{i,j}} \frac{\partial \mathcal{L}}{\partial y_{j+1}} = \frac{\partial y_{j+1}}{\partial \theta_{i,j}} \left( \prod_{l=j+1}^{k-1} \frac{\partial y_{l+1}}{\partial y_{l}} \right) \frac{\partial \mathcal{L}}{\partial y_{k}}.
\end{equation}
The matrix $\frac{\partial y_{k}}{\partial y_{j+1}}=\left( \prod_{l=j+1}^{k-1} \frac{\partial y_{l+1}}{\partial y_{l}} \right)$ which is referred to as the backward stability matrix is the product of the Jacobians of the layers with respect to their inputs. As the number of layers increases and the Jacobians become small then $\frac{\partial y_{k}}{\partial y_{j+1}}$ vanishes and $\frac{\partial \mathcal{L}}{\partial \theta_{i,j}}$ tends to zero despite not reaching a minimum. This is known as the vanishing gradient problem. The opposite case, where $\frac{\partial y_{k}}{\partial y_{j+1}}$ becomes very large, is known as the exploding gradient problem. As shown in \cite[Theorem 3]{galimberti2023hamiltonian}, when the layers are symplectic, one can bound the norms of $\frac{\partial y_{k}}{\partial y_{j+1}}$, for $j=0,...,k-1$, from below. 
\begin{lemma}\citep{galimberti2023hamiltonian}
	Let $\|\cdot\|$ be any sub-multiplicative matrix norm, then
	$$\|\frac{\partial y_{k}}{\partial y_{j+1}}\|\ge 1,$$for $j=0,...,k-1$. 
\end{lemma}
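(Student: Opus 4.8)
The plan is to use the single structural fact that every layer of the SympNet is a symplectic map, so that the backward stability matrix $\partial y_k/\partial y_{j+1}$ is a product of symplectic matrices, hence symplectic, and then to turn the determinant constraint obeyed by symplectic matrices into a lower bound valid for \emph{any} submultiplicative norm.

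First I would fix the point at which the Jacobians are evaluated and write $M_l := \partial y_{l+1}/\partial y_l$, the Jacobian of the layer $\phi_h^{\bar H_{l+1}^\theta}$ at $y_l$. Since $\phi_h^{\bar H_{l+1}^\theta}$ is an exact Hamiltonian flow it is symplectic, so $M_l^T J M_l = J$, i.e.\ $M_l$ is a symplectic matrix. Because symplectic matrices are closed under multiplication, the backward stability matrix $P := \partial y_k/\partial y_{j+1} = \prod_{l=j+1}^{k-1} M_l$ is again symplectic (the edge case $j=k-1$ gives the empty product $P=I$, and $\|I\|\ge 1$ follows from $\|I\|=\|I\cdot I\|\le\|I\|^2$ together with $\|I\|>0$). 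Taking determinants in $P^T J P = J$ and using $\det J\ne 0$ gives $(\det P)^2 = 1$, so $|\det P| = 1$.

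Next I would invoke the classical fact that for any submultiplicative matrix norm $\|\cdot\|$ on $\R^{2n\times 2n}$ the spectral radius satisfies $\rho(A)\le\|A\|$ for every $A$ --- one quick justification: if $Av=\lambda v$ with $v\ne 0$, let $V$ be the matrix all of whose columns equal $v$; then $AV=\lambda V$ and $V\ne 0$, so $|\lambda|\,\|V\|=\|AV\|\le\|A\|\,\|V\|$, giving $|\lambda|\le\|A\|$, and one maximizes over the spectrum (alternatively one uses Gelfand's formula $\rho(A)=\lim_m\|A^m\|^{1/m}\le\|A\|$). Applying this to $P$ and combining with $1 = |\det P| = \prod_{i=1}^{2n}|\lambda_i(P)| \le \rho(P)^{2n}$ yields $\rho(P)\ge 1$, whence $\|P\|\ge\rho(P)\ge 1$. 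As the evaluation point was arbitrary, the bound holds wherever the Jacobian is taken. One can bypass the determinant step entirely by instead using that the spectrum of a symplectic matrix is invariant under $\lambda\mapsto 1/\lambda$, which again forces $\rho(P)\ge 1$.

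The only step that is not pure bookkeeping is the inequality $\rho(A)\le\|A\|$ for a general submultiplicative norm that need not be induced by any vector norm; it is standard but must be stated, since submultiplicativity is exactly what makes the claim work. Everything else --- symplecticity of Hamiltonian flows, closure of the symplectic group under products, and the determinant identity --- is immediate.
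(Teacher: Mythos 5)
Your proof is correct, but it takes a genuinely different route from the paper. The paper argues directly from the defining identity: writing $P=\frac{\partial y_{k}}{\partial y_{j+1}}$, symplecticity gives $J=P^TJP$, so $\|J\|\le\|P\|^2\|J\|$ by submultiplicativity, and cancelling $\|J\|>0$ yields $\|P\|^2\ge 1$. You instead pass through the spectrum: $|\det P|=1$ for symplectic $P$ forces $\rho(P)\ge 1$, and $\rho(A)\le\|A\|$ for any submultiplicative norm then gives the bound. Both are two-line arguments, but yours buys a little extra rigor: the paper's chain $\|P^TJP\|\le\|P\|^2\|J\|$ tacitly uses $\|P^T\|=\|P\|$, which holds for the spectral norm but not for an arbitrary submultiplicative norm (e.g.\ the induced $1$-norm), whereas your determinant/spectral-radius route (justified via Gelfand's formula, which is the cleaner of your two justifications since the rank-one eigenvector trick needs care for complex eigenvalues of a real matrix) is insensitive to this and also handles the empty-product case $j=k-1$ explicitly. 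The paper's version is more self-contained in that it never leaves the symplectic identity, which is the property it wants to advertise; yours makes the mechanism (eigenvalues of symplectic matrices come in reciprocal pairs, so they cannot all lie inside the unit circle) more transparent.
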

\begin{proof}
	Using the fact that $\frac{\partial y_{k}}{\partial y_{j+1}}$ are symplectic matrices, we have that
	\begin{equation}
		\|J\| = \|\left(\frac{\partial y_{k}}{\partial y_{j+1}}\right)^TJ\left(\frac{\partial y_{k}}{\partial y_{j+1}}\right)\| \le \|\frac{\partial y_{k}}{\partial y_{j+1}}\|^2\|J\|
	\end{equation}
	for $j=0,...,k-1$, which implies the result.
\end{proof} 
We therefore have the following. 
\begin{property}[Non-vanishing gradients]
	SympNets according to Definition \ref{def:SympNet} do not suffer from the vanishing gradient problem.
\end{property}
SympNets could suffer from exploding gradients, however this can also be circumvented by adding a L2 regularization term to the loss function, also explored in \cite{galimberti2023hamiltonian}. Due to Lemma \ref{lemma: small basis} $(ii)$, there exists solutions to the basis Hamiltonians that are small, meaning that an L2 regularization term makes sense and does not introduce an erroneous inductive bias. Furthermore, in Section \ref{sec: learned parameters} we verify that the proposed SympNets usually learn small values for the parameters. 

So, given a set of basis Hamiltonians $\{\bar{H}^{\theta}_i\}_{i=1}^k$ that are dense on compact sets in $C^1(\R^{2n})$ then the composition method $\Phi^{\bar{H}^\theta}_h$ will possess the above properties. However, we still need to compute the flows $\phi_h^{\bar{H}^{\theta}_i}$, also called layers. We will now explain how this can be achieved. 

\subsection{Designing SympNet Layers Using Hamiltonian Shear Maps}\label{sec: shear vector fields theory}
After we have chosen an appropriately expressive set of basis functions $\{\bar{H}_i^\theta\}$, the layers of a SympNet are constructed by composing their Hamiltonian flows $\phi_h^{\bar{H}_i}$. The key idea explored in this section is that if they yield \textit{shear vector fields}, then the layers $\phi_h^{\bar{H}_i}$ can be computed exactly using forward Euler steps (i.e., residual network layers). We will eventually see that many previously explored SympNets use a special (restricted) case of this framework. 

\begin{definition}[Shear vector fields] \label{eq: shear vf}
	We call $f:\R^{2n}\rightarrow\R^{2n}$ a \textit{shear vector field }if, under some linear transformation $y=Tx$, it can be expressed in the following form where  $y=(y_1^T, y_2^T)^T$
	\begin{equation}
	\begin{split}
	\dot{y}_1=0\in\R^{n},\\
	\dot{y}_2=g_2(y_1)\in\R^{n},
	\end{split}
	\end{equation}
	where $g_2:\R^{n}\rightarrow\R^{n}$.
\end{definition}
The flow of a shear vector field is therefore constant along the $y_1$ direction and linear along the $y_2$ direction in analogy to shear flows in fluid dynamics. We will henceforth restrict the discussion to the Hamiltonian case as in \cite{feng1998variations}, which studies shear Hamiltonian vector fields that are nilpotent of degree two, that is $f(f(x))=0$. In other words, the second time derivative of its solution $x(t)$ vanishes:
\begin{equation}
\ddot{x}(t)=\frac{\mathrm{d}}{\mathrm{d}t}J\nabla H(x) = J\nabla^2 H(x) J\nabla H(x) =0.
\end{equation}
Therefore, a Hamiltonian vector field (and its Hamiltonian function) is called nilpotent of degree two if $H$ satisfies $J \,\nabla^2 H \,J\, \nabla H = 0$, where $\nabla^2$ is the Laplacian operator. 

\begin{lemma}\citep{feng1998variations}
	A Hamiltonian function $H:\R^{2n}\rightarrow\R$ is nilpotent of degree two if and only if it can be written in the form 
	\begin{equation}\label{symmetry cond}
	H(x) = K(C x), \text{ where }	CJC^T=0,
	\end{equation}
	for some $K:\R^n\rightarrow\R$ and $C\in\R^{n\times 2n}$.
\end{lemma}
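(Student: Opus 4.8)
The plan is to prove the two implications separately, with essentially all of the work in the converse.

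\textbf{The "if" direction.} This is a direct computation. If $H(x)=K(Cx)$ with $CJC^T=0$, then by the chain rule $\nabla H(x)=C^T\nabla K(Cx)$ and $\nabla^2 H(x)=C^T\nabla^2 K(Cx)\,C$, so
$$J\nabla^2 H(x)\,J\nabla H(x)=JC^T\nabla^2 K(Cx)\,(CJC^T)\,\nabla K(Cx)=0,$$
since the factor $CJC^T$ appears in the middle and vanishes. So the content is the converse.

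\textbf{The "only if" direction.} Assume $J\nabla^2 H(x)\,J\nabla H(x)=0$ for all $x$; as $J$ is invertible this is equivalent to $\nabla^2 H(x)f(x)=0$ with $f:=J\nabla H=X_H$. First I would show the flow is affine in time: along any solution $x(t)$ of $\dot x=f(x)$ one has $\frac{d}{dt}f(x(t))=Df(x(t))\dot x(t)=J\nabla^2 H(x(t))f(x(t))=0$, hence $\dot x(t)\equiv f(x_0)$ and $\phi_t(x)=x+tf(x)$ (in particular the flow is complete). Since $\phi_t$ is a Hamiltonian flow it is symplectic, $D\phi_t(x)^TJD\phi_t(x)=J$; substituting $D\phi_t(x)=I+tJ\nabla^2 H(x)$ and expanding in $t$, the $t^0$ term is $J$, the $t^1$ term vanishes identically (Hamiltonian matrices lie in $\mathfrak{sp}(2n)$), and the $t^2$ term simplifies to $\nabla^2 H(x)\,J\,\nabla^2 H(x)$. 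As the identity holds for all $t$, this forces $\nabla^2 H(x)J\nabla^2 H(x)=0$ for every $x$; since $\nabla^2 H(x)$ is symmetric, $\mathrm{im}\,\nabla^2 H(x)$ is therefore a $J$-isotropic subspace (hence of dimension $\le n$) for each $x$. The remaining task is to upgrade these pointwise isotropic subspaces to a single fixed one: I would show $V:=\mathrm{span}\{\nabla H(x):x\in\R^{2n}\}$ is $J$-isotropic, i.e. $\nabla H(x)^TJ\nabla H(y)=0$ for all $x,y$. By the fundamental theorem of calculus $\nabla H(y)-\nabla H(x)\in\mathrm{span}_z\,\mathrm{im}\,\nabla^2 H(z)$, and $\nabla H(x)^TJ\nabla H(x)=0$ by antisymmetry of $J$, so this reduces to the "mutual" statement $\nabla^2 H(y)\,J\,\nabla^2 H(z)=0$ for all $y,z$. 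To get it one differentiates the structural identities — e.g. differentiating $\nabla^2 H(x)J\nabla H(x)=0$ and using $\nabla^2 H J\nabla^2 H=0$ yields $(D_v\nabla^2 H)(x)\,J\nabla H(x)=0$ for every $v$ — and runs a rank-stratification/connectedness argument on the open dense set where $\mathrm{rank}\,\nabla^2 H$ is locally constant to conclude the smooth family $\mathrm{im}\,\nabla^2 H(x)$ is locally, hence globally, constant. With $V$ in hand, enlarge it to a Lagrangian subspace $L$ and pick $C\in\R^{n\times 2n}$ whose rows are a basis of $L$, so that $\mathrm{rank}\,C=n$, $CJC^T=0$, and $\ker C=L^\perp$; since $\nabla H(x)\in V\subseteq L=(\ker C)^\perp$ for all $x$, $H$ is constant along $\ker C$ and hence descends through the surjection $C$ to a smooth $K:\R^n\to\R$ with $H=K\circ C$.

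\textbf{Main obstacle.} I expect the globalization step — showing the pointwise isotropic images $\mathrm{im}\,\nabla^2 H(x)$ all lie in one fixed isotropic subspace — to be the hard part, since that is where the local algebraic conditions must be integrated into a global statement and the rank of $\nabla^2 H$ may jump, so some care with the stratification (or a limiting argument from the locally-constant-rank region) is needed. A useful guide is the case $n=1$: there the two scalar nilpotency equations $H_{pq}H_p=H_{pp}H_q$ and $H_{qq}H_p=H_{pq}H_q$ say exactly that $d\big(\text{direction of }\nabla H\big)=0$, so $\nabla H$ points in a constant direction and $H=K(c^Tx)$ with $c^TJc=0$ automatic; the general argument replaces "constant direction" by "image in a fixed isotropic subspace". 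Alternatively one could first prove the statement for polynomial $H$ by working with Taylor coefficients and then pass to the general case, but the flow-based route above seems the most transparent.
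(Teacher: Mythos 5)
The paper does not actually prove this lemma --- it is quoted from \citet{feng1998variations} --- so there is no in-paper argument to compare against; I review your proposal on its own merits. Your ``if'' direction is correct and complete. In the ``only if'' direction, the first half is also correct and rather elegant: from $\nabla^2 H\, J\nabla H=0$ you rightly deduce that the flow is $\phi_t(x)=x+tJ\nabla H(x)$, and symplecticity of $D\phi_t=I+tJ\nabla^2H(x)$ for all $t$ does force the $t^2$ coefficient $\nabla^2H(x)\,J\,\nabla^2H(x)$ to vanish, so each $\mathrm{im}\,\nabla^2H(x)$ is isotropic. Your reduction of the remaining work to the ``mutual'' statement $\nabla^2H(y)\,J\,\nabla^2H(z)=0$ for all $y,z$ is also sound.

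The gap is in the proposed proof of that mutual statement, and it is not a technicality: the mechanism you invoke is false. You claim that on the open set where $\mathrm{rank}\,\nabla^2H$ is locally constant the subspace $\mathrm{im}\,\nabla^2H(x)$ is locally constant. Counterexample: take $n=2$, $C=(I,0)$, $K(u)=\sqrt{u_1^2+u_2^2}$ on an open set avoiding the origin, and $H(p,q)=K(p)$. This $H$ is nilpotent of degree two by your ``if'' direction, $\nabla^2H$ has rank exactly $1$ on that open set, yet $\mathrm{im}\,\nabla^2H=\mathrm{span}\{(p_2,-p_1,0,0)\}$ rotates with $p$. What is true in this example --- and what you actually need --- is the weaker statement that all the images lie inside one \emph{fixed} isotropic subspace (here $\{q=0\}$); but that weaker statement is essentially equivalent to the lemma itself, and your differentiation of the structural identities does not deliver it. Concretely, differentiating $\nabla^2H\,J\,\nabla^2H=0$ in a direction $v$ only yields that $(D_v\nabla^2H)\,J\,\nabla^2H$ is symmetric, not that it vanishes, so the local-to-global step is not merely ``care with the stratification'': the heart of Feng's lemma is precisely the passage from the pointwise isotropy of $\mathrm{im}\,\nabla^2H(x)$ to a single isotropic subspace containing $\nabla H(x)$ for all $x$, and your proposal leaves that unproved. (The final step --- extending $V$ to a Lagrangian $L$, choosing $C$ with row space $L$, and factoring $H=K\circ C$ because $H$ is constant along $\ker C=L^{\perp}$ --- is fine once $V$ is known to be isotropic.)
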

Nilpotent Hamiltonians are also called shear Hamiltonians as in \cite{koch2014hamiltonian}. 
Given such a function, this yields the ODE 
\begin{equation}\label{shear ode}
\dot{x}=JC^T\nabla K(Cx).
\end{equation}
In addition to $K\circ C$, the ODE has an additional $n$ linear invariants $Cx$, which are in involution with the Hamiltonian, thus yielding an integrable system in the sense of Louiville-Arnold. Writing $C=(A,B)$, for $A,B\in\R^{n\times n}$ then equations \eqref{symmetry cond} become 
$$H(p,q) = K(Ap+Bq), \quad \text{where} \quad AB^T=BA^T.$$
Clearly all Hamiltonian vector fields that are nilpotent of degree two are also shear vector fields. This can be seen by letting $T=(C^T, T_2^T)^T\in\R^{2n\times 2n}$ denote a linear transformation, for some $T_2\in\R^{n\times 2n}$ such that $y=Tx$ and $C$ satisfies the condition \eqref{symmetry cond}. Then writing the ODE for $y=(y_1^T, y_2^T)^T$ and inserting the expression for \eqref{shear ode} yields the form of definition \eqref{eq: shear vf}.
%\begin{equation}
%\left(\begin{array}{c}
%\dot{y}_1\\ \dot{y}_2\\
%\end{array}\right) =
%\left(\begin{array}{c}
%C\dot{x}\\ T_2\dot{x}\\
%\end{array}\right) \\
%=
%\left(\begin{array}{c}
%CJC^T\nabla H_i(Cx)\\ 
%T_2JC^T\nabla H_i(Cx)\\
%\end{array}\right) \\ 
%=
%\left(\begin{array}{c}
%0\\ 
%T_2JC^T\nabla H_i(y_1)\\
%\end{array}\right)
%\end{equation} 
%which is of the form given in equation \eqref{eq: shear vf}.

The remarkable property of a shear vector field is that their solutions are linear in time and are therefore integrated \textit{exactly} by the forward Euler method
\begin{equation}\label{fe shear}
\phi^{K\circ C}_h(x) = x + h JC^T\nabla K(Cx),
\end{equation}
where $\nabla K(Cx)\in\R^n$. This is the ResNet-like architecture that we will use to construct the layers of the SympNets. In the next section we will explore different choices for $K$ and $C$ to construct a SympNet.

\section{Symplectic Neural Networks}\label{sec:symplectic neural networks}
Here we will give some concrete examples of some SympNets that fit within our framework. A common thread of the forthcoming methods is that the layers $\phi_h^{\bar{H}^{\theta}_i}(x)$ are given by equation \eqref{fe shear}. However, we remark that actually \textit{any} symplectic map can be used which would result in a SympNet according to Definition \ref{def:SympNet}. The main distinguishing feature between the forthcoming methods is that they use different choices for the basis shear Hamiltonian functions $K$ and the basis matrices $C$ from equation \eqref{fe shear}. 

This framework lends itself naturally to several approaches. The first we outline uses one-dimensional projections using ridge functions in Section \ref{sec: scalar ridge functions}. Section \ref{sec: generalised ridge functions} addresses an approach using generalized ridge functions. In Section \ref{sec: shear hamiltonians depending on p or q} we discuss existing methods from the literature that use generalized ridge functions with \textit{fixed} directions (i.e., in the $p$ or $q$ direction only).

\subsection{Shear Hamiltonians Using Ridge Functions} \label{sec: scalar ridge functions}
In this section we consider an efficient choice for the basis Hamiltonians using ridge functions. Ridge functions project high dimensional input onto lines along a ``direction vector" $w$ via $\alpha(w^Tx)$, where $\alpha : \R\rightarrow\R$ is a univariate function. We refer the reader to \citep{pinkus2015ridge, ismailov2020notes} for a comprehensive overview of ridge functions in the context of neural networks. We note that $\alpha(w^Tx)$ is a shear due to $w^TJw=0$, which is equivalent to setting all elements of $C$ to zero except for one row in equation \eqref{symmetry cond}. 

For a shear vector field of one variable the map \eqref{fe shear} can be written as
\begin{equation}\label{scalar shear layer}
\phi_h^{\alpha\circ w}(x) = x + h \, \alpha'(w^Tx) J w,
\end{equation}
where $\alpha\circ w = \alpha(w^T \cdot )$ and $\alpha'(y) = \frac{\rd }{\rd y} \alpha(y)$. That is, the map is given as the product of a scalar function and a constant vector. This means the maps parameters scale by $O(n)$ as opposed to $O(n^2)$ for fully connected neural networks. We now consider two choices for the ridge function $\alpha$: (1) one-dimensional neural nets; and (2) univariate polynomials.
\subsubsection{R-SympNets}
For the first choice, let $\alpha = \N:\R\rightarrow\R$ be a neural network of width $m$
\begin{equation}\label{one dim neural net}
\N(y) = \sum_{i=0}^m a_i\sigma(y + b_i)
\end{equation}
where $a,b\in\R^m$ are trainable parameters, $\sigma$ is a non-polynomial activation function and $y$ is some scalar input. Then the shear Hamiltonian is given by $\N(w^Tx)$. 
\begin{definition}[R-SympNets]
	An R-SympNet of width $m$ is a map $ \Phi_h^{\bar{H}^\theta}(x)$ according to Definition \ref{def:SympNet} where the basis Hamiltonian set is given by $$\mathcal{H} = \{\sum_{i=1}^ma_i\sigma(w^Tx + b_i)|\,(a,b,w)\in\R^{m}\times\R^{m}\times\R^{n}\}.$$
\end{definition}
That is, their layers are defined by equation \eqref{scalar shear layer} with $\alpha(w^Tx) = \N(w^Tx)$. It is well known that neural networks of this type can approximate smooth functions and their derivatives to arbitrary precision.   
\begin{lemma}\citep{hornik1991approximation}\label{thm:nonpoly ridge density}    
	Let $\sigma\in C^m(\R)$ be a bounded,  non-constant activation function. Then the set
	\begin{equation} 
	\mathrm{span}\{\sigma(w^Tx + b):\, b\in\R,\,w\in\R^{n}\}
	\end{equation}
	is $m$-dense on compact subsets in $C^m(\R^{n})$ for any $m,n\in\mathbb{N}_+$. 
\end{lemma}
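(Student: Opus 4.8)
The plan is to reduce the $n$-variable statement to a one-dimensional one and then bootstrap through ridge polynomials. First I would record the elementary but crucial observation that the hypotheses force $\sigma$ to be \emph{non-polynomial}: a polynomial that is bounded on all of $\R$ must be constant, so ``bounded and non-constant'' is, under the stated assumptions, the same as ``bounded and non-polynomial.'' All the density then comes from non-polynomiality together with the $C^m$ smoothness.

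For the one-dimensional core, I would show that $\mathrm{span}\{t\mapsto\sigma(\lambda t+b)\mid\lambda,b\in\R\}$ is $m$-dense in $C^m(K)$ for every compact $K\subset\R$. The mechanism is differentiation with respect to the dilation parameter: formally $\frac{\partial^k}{\partial\lambda^k}\sigma(\lambda t+b)=t^k\sigma^{(k)}(\lambda t+b)$, and realizing these $\lambda$-derivatives as limits of difference quotients of functions already in the span shows that $t\mapsto t^k\sigma^{(k)}(\lambda t+b)$ lies in the $C^m$-closure of the span for $k\le m$. Letting $\lambda\to 0$ gives $t\mapsto t^k\sigma^{(k)}(b)$; since $\sigma$ is not a polynomial, $\sigma^{(k)}$ is non-constant for each $k\le m$ (otherwise $\sigma$ would be a polynomial of degree $\le k$), so we may choose $b=b_k$ with $\sigma^{(k)}(b_k)\neq 0$ and recover the monomial $t^k$ in the closure. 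Hence every univariate polynomial of degree $\le m$ lies in the $C^m(K)$-closure of the span, and the $C^m$ version of the Weierstrass theorem (polynomials are $m$-dense in $C^m(K)$) closes this case.

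To lift to $\R^n$ I would use that for a fixed direction $w$ one has $D^\alpha\big[g(w^Tx)\big]=g^{(|\alpha|)}(w^Tx)\,w^\alpha$, so a $C^m(I)$-approximation of a univariate $g$ on an interval $I\supseteq\{w^Tx:x\in\Omega\}$ by combinations of $\sigma(\lambda t+b)$ transfers directly to a $C^m(\Omega)$-approximation of $x\mapsto g(w^Tx)$ by combinations of $\sigma(w^Tx+b)$, the harmless powers $w^\alpha$ being absorbed into the approximation constant. Applying this with $g(t)=t^k$ makes every ridge monomial $(w^Tx)^k$ $m$-approximable; since the ridge monomials of degree $\le N$ span the space of all $n$-variate polynomials of degree $\le N$ (a standard fact, e.g. via polarization), and polynomials are $m$-dense in $C^m(\Omega)$, the span $\mathrm{span}\{\sigma(w^Tx+b)\}$ is $m$-dense in $C^m(\Omega)$.

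The step I expect to be the real obstacle is the difference-quotient argument in the $C^m$ norm when $\sigma$ is only assumed $C^m$: passing $\delta^{-1}\big(\sigma((\lambda+\delta)t+b)-\sigma(\lambda t+b)\big)\to t\,\sigma'(\lambda t+b)$ in $C^m(K)$ naively requires one more derivative of $\sigma$ than is available. The standard remedy is to argue first with a mollification $\sigma_\psi=\sigma*\psi$, $\psi\in C_c^\infty(\R)$: it is $C^\infty$, it is a $C^m(K)$-limit of Riemann sums of translates of $\sigma$ and hence lies in the closure of the span, and for a suitable $\psi$ it is non-polynomial; one then runs the dilation-derivative argument for $\sigma_\psi$ (where all derivatives exist) and transfers the conclusion back. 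A secondary nuisance in the smooth case is producing a point at which the needed derivatives do not vanish, which is handled by a Baire-category argument: $\R$ cannot be the countable union of the closed zero sets of the $\sigma_\psi^{(k)}$ unless some $\sigma_\psi^{(k)}$ vanishes on an open interval, which would make $\sigma_\psi$ locally polynomial.
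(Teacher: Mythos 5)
The paper does not actually prove this lemma --- it is imported as a citation to \cite{hornik1991approximation} with no in-paper argument --- so there is nothing internal to compare your proof against. Judged on its own terms, your proposal is essentially correct and follows the constructive ridge-function route (the one used by Leshno--Lin--Pinkus--Schocken and by Pinkus for precisely this $C^m$ statement) rather than Hornik's original functional-analytic argument via annihilating measures. The ingredients are all the right ones: bounded plus non-constant forces $\sigma$ to be non-polynomial; the dilation-derivative trick manufactures monomials $t^k$ in the $C^m$-closure; mollification repairs the loss of one derivative in the difference quotients; and the lift to $\R^{n}$ through ridge monomials is exactly the paper's own Lemma \ref{ridge polys are all polys} combined with Lemma \ref{thm: poly density}.

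Two points need tightening. First, the sentence ``every univariate polynomial of degree $\le m$ lies in the $C^m(K)$-closure \dots\ and Weierstrass closes this case'' is false as written: polynomials of degree at most $m$ are not dense in $C^m(K)$ (they are not even dense in $C^0(K)$), so the restriction $k\le m$ must not survive into the final argument. It does not, provided the dilation-derivative step is run only for the mollified $\sigma_\psi\in C^\infty$, where $k$ is unrestricted and you obtain every monomial $t^k$; you should say explicitly that the unmollified version of that paragraph is discarded rather than patched. Second, the existence of a $\psi$ with $\sigma*\psi$ non-polynomial needs no Baire category here: $\sigma*\psi$ is bounded by $\|\sigma\|_\infty\|\psi\|_{L^1}$, hence by your own opening observation it is a polynomial only if it is constant, and it cannot be constant for every $\psi$ because an approximate identity recovers the non-constant continuous $\sigma$ pointwise. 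Likewise the ``point where the derivatives do not vanish'' worry is vacuous once $\sigma_\psi$ is known to be non-polynomial, since you may choose a different $b_k$ for each $k$ (if $\sigma_\psi^{(k)}\equiv 0$ then $\sigma_\psi$ would be a polynomial). With these repairs the argument is complete and constitutes a legitimate self-contained proof of the cited result.
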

This result means we can apply Theorem \ref{thm: universality} to the R-SympNet. 
\begin{corollary}
There exists an R-SympNet $\Phi_h^{\bar{H}^\theta}$ such that for any $H\in C^1(\R^{2n})$ and any $\epsilon>0$ 
\begin{equation}
\|\Phi_h^{\bar{H}^\theta}(x)-\phi_h^{H}(x)\|<\epsilon
\end{equation}
for $x$ in some compact $\Omega \subset\R^{2n}$. 	
\end{corollary}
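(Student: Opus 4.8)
The plan is to obtain the corollary as a direct application of Theorem~\ref{thm: universality}, once two facts about the R-SympNet are verified: (a) it is a genuine SympNet in the sense of Definition~\ref{def:SympNet} whose layers are exact Hamiltonian flows computable in closed form, and (b) the span of its basis Hamiltonian set $\mathcal{H}$ is dense in $C^1(\Omega)$ for every compact $\Omega\subset\R^{2n}$.

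For (a), observe that every $\bar H^{\theta}\in\mathcal{H}$ is the ridge function $x\mapsto\sum_{i=1}^m a_i\sigma(w^Tx+b_i)=\alpha(w^Tx)$ with $\alpha(y)=\sum_i a_i\sigma(y+b_i)$. As noted just before \eqref{scalar shear layer}, this is a shear (degree-two nilpotent) Hamiltonian, since the matrix $C$ in \eqref{symmetry cond} reduces to the single row $w^T$ and $CJC^T=w^TJw=0$. Hence by \eqref{fe shear}--\eqref{scalar shear layer} its exact time-$h$ flow is the residual map $\phi_h^{\bar H^{\theta}}(x)=x+h\,\alpha'(w^Tx)\,Jw$, and a $k$-fold composition of such layers is precisely a SympNet according to Definition~\ref{def:SympNet}, with basis Hamiltonian set $\mathcal{H}$ and layers that are evaluable in closed form.

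For (b), note that a single-neuron function $\sigma(w^Tx+b)$ is the special case $m=1$ of an element of $\mathcal{H}$, while conversely every element of $\mathcal{H}$ is a finite linear combination of such single-neuron functions; therefore $\mathrm{span}(\mathcal{H})=\mathrm{span}\{\sigma(w^Tx+b)\,:\,b\in\R,\ w\in\R^{2n}\}$, independently of the width $m$. Assuming $\sigma$ satisfies the hypotheses of Lemma~\ref{thm:nonpoly ridge density} (bounded, non-constant, of class $C^1$), that lemma with $m=1$ and ambient dimension $2n$ shows this span is $1$-dense on compact subsets of $\R^{2n}$, hence dense in $C^1(\Omega)$.

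Combining (a) and (b), I would then apply Theorem~\ref{thm: universality} with the family $\{H^{\theta}:\theta\in\Theta\}=\mathcal{H}$: given any $H\in C^1(\R^{2n})$, fix a compact $\Omega$ on which $\phi_h^{H}$ is defined; the theorem yields some $k$ and basis Hamiltonians $\bar H_1^{\theta},\dots,\bar H_k^{\theta}\in\mathcal{H}$ with $\|\phi_h^{\bar H_k^{\theta}}\circ\cdots\circ\phi_h^{\bar H_1^{\theta}}(x)-\phi_h^{H}(x)\|<\epsilon$ for all $x\in\Omega$, and by (a) this composition is an R-SympNet $\Phi_h^{\bar H^{\theta}}$, which is the claim. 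The one place warranting care is the interface between $\mathcal{H}$ and $\mathrm{span}(\mathcal{H})$: Theorem~\ref{thm: universality} only hypothesizes density of the span, yet its conclusion must furnish layers that are themselves shear Hamiltonians (an arbitrary finite sum of ridge functions in distinct directions is generally \emph{not} a shear), so one should confirm that the proof of that theorem selects its layer Hamiltonians from $\mathcal{H}$ itself rather than from the larger space $\mathrm{span}(\mathcal{H})$. A secondary, purely technical requirement is the regularity $\sigma\in C^1$ needed to invoke the derivative-approximation content of Lemma~\ref{thm:nonpoly ridge density}, since mere continuity of $\sigma$ would not control $\|\cdot\|_{C^1(\Omega)}$ and hence the flow error. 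I do not expect either point to pose a genuine difficulty.
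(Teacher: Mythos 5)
Your proposal is correct and follows essentially the same route as the paper: invoke Lemma~\ref{thm:nonpoly ridge density} to get density of $\mathrm{span}(\mathcal{H})$ in $C^1$, then apply Theorem~\ref{thm: universality}. The interface point you flag is indeed resolved inside the proof of Theorem~\ref{thm: universality}, whose Lie--Trotter construction builds the map from flows of the individual summands $H^{\theta}_i\in\mathcal{H}$ (rescaled by $1/m$, which stays in $\mathcal{H}$ by scale invariance), so each layer is a single ridge shear as required.
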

\begin{proof}
	Using Lemma \ref{thm:nonpoly ridge density} we can approximate any $H\in C^1(\R^{2n})$ to arbitrary precision by a linear combinations of functions of the form \eqref{one dim neural net}. Then the result follows from Theorem \ref{thm: universality}.
\end{proof}

\subsubsection{P-SympNets}
The next choice we consider are when $\alpha$ are univariate polynomials. This is advantageous for at least two reasons. One, if the true Hamiltonian is smooth or polynomial, then it's inverse modified Hamiltonian is well approximated by a polynomial; and two, polynomials are very fast to evaluate and back propagate through. We denote by $p(z) = \sum_{i=0}^d a_i z^i$ a degree $d$ univariate polynomial. When the scalar input is the one-dimensional projection $z=w^Tx$, then $p(w^Tx)$ is also known as polynomial ridge function \citep{shin1995ridge} and are used in \cite{mclachlan2004explicit} to construct similar geometric numerical methods for polynomial ODEs. 
\begin{definition}[P-SympNets]\label{def:psympnet}
	A P-SympNet of degree $d$ is a map $ \Phi_h^{\bar{H}^\theta}(x)$ according to Definition \ref{def:SympNet} where the basis Hamiltonian set given by $$\mathcal{H} = \{\sum_{i=1}^da_i(w^Tx)^i|\,(a,w)\in\R^{d}\times\R^{{2n}}\}.$$
\end{definition}
That is, the layers are given by \eqref{scalar shear layer} with $\alpha(w^Tx) = p(w^Tx)$. In practice, we will let the sum run from 2 to $d$ to avoid linear terms in the Hamiltonian, an additional physical assumption that would otherwise correspond to a constant term in the ODE. The inverse modified Hamiltonian of a P-SympNet is therefore a polynomial. Note that we can represent any polynomial of degree $d$ in $n$ variables by a linear combination of univariate ridge polynomials due to the following.  
\begin{lemma}\label{ridge polys are all polys}\citep[Proposition 5.19,][]{pinkus2015ridge}
Let $\Pi_{n}^{d}$ denote the space of polynomials of degree $d$ in $n$ variables. Then
	\begin{equation}
		\Pi_{n}^{d} = \mathrm{span}\{p(w^Tx):p\in\Pi_{1}^{d},\,w\in\R^{n}\} \label{poly approximation}.
		\end{equation}	 
\end{lemma}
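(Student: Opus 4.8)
Proof proposal for Lemma \ref{ridge polys are all polys} ($\Pi_n^d = \mathrm{span}\{p(w^Tx) : p \in \Pi_1^d, w \in \R^n\}$).

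The statement is: every polynomial of degree $\le d$ in $n$ variables can be written as a finite linear combination of univariate polynomials of degree $\le d$ evaluated at linear forms $w^Tx$. Since the inclusion $\supseteq$ is trivial (each $p(w^Tx)$ is itself a polynomial of degree $\le d$), the content is the inclusion $\subseteq$.

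Let me think about how I'd prove this...

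The key observation is that it suffices to show every *monomial* of degree exactly $k$ (for each $k \le d$) lies in the span — actually, it suffices to show every homogeneous polynomial of degree $k$ is a combination of $k$-th powers of linear forms $(w^Tx)^k$, because then we can assemble arbitrary polynomials degree by degree. Wait, but there's subtlety: if I use $(w^Tx)^k$ exactly, I get homogeneous degree $k$. The lemma allows $p \in \Pi_1^d$, i.e., $p$ can have lower-degree terms too. So actually the cleanest route:

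**Step 1.** Reduce to: for each $k$ with $0 \le k \le d$, every homogeneous polynomial of degree $k$ in $n$ variables is in $\mathrm{span}\{(w^Tx)^k : w \in \R^n\}$. Given this, an arbitrary $P \in \Pi_n^d$ decomposes as $P = \sum_{k=0}^d P_k$ with $P_k$ homogeneous of degree $k$; each $P_k = \sum_j c_{k,j}(w_{k,j}^Tx)^k$; and grouping, $P = \sum$ over the (finitely many) direction vectors appearing, of a univariate polynomial in $w^Tx$ — though care is needed if the same $w$ appears at multiple degrees, in which case we just add the univariate polynomials. So Step 1 implies the lemma.

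**Step 2 (the main content).** Show $\mathrm{span}\{(w^Tx)^k : w \in \R^n\} = \mathcal{H}_n^k$, the space of homogeneous degree-$k$ polynomials. One clean way: expand $(w^Tx)^k = \sum_{|\alpha|=k} \binom{k}{\alpha} w^\alpha x^\alpha$ by the multinomial theorem, so $(w^Tx)^k$ is the vector (indexed by multi-indices $\alpha$, $|\alpha|=k$) with entries $\binom{k}{\alpha}w^\alpha$. The span of these over all $w$ equals all of $\mathcal{H}_n^k$ iff no nonzero linear functional annihilates every $(w^Tx)^k$; i.e., iff whenever $\sum_{|\alpha|=k} c_\alpha \binom{k}{\alpha} w^\alpha = 0$ for all $w \in \R^n$, then all $c_\alpha = 0$. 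But $\sum_{|\alpha|=k} c_\alpha \binom{k}{\alpha} w^\alpha$ is itself a polynomial in $w$, and a polynomial vanishing identically on $\R^n$ has all coefficients zero, forcing $c_\alpha = 0$. (Alternatively, this is the classical fact that powers of linear forms span the symmetric tensors; or use the polarization identity to recover $x_1 x_2 \cdots x_k$-type monomials from $k$-th powers.)

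**The main obstacle.** There isn't a deep one — the lemma is genuinely elementary (the paper even cites it as "Proposition 5.19" and it's flagged elsewhere as "obvious"). The only thing requiring a little care is the bookkeeping in Step 1: making sure that when we collect the finitely many direction vectors $w$ arising across all degrees $k = 0, \dots, d$, we correctly merge the contributions into a *single* univariate polynomial $p_w \in \Pi_1^d$ per direction (padding with zero coefficients at unused degrees), so that the final expression has the exact form $\sum_w p_w(w^Tx)$ with $p_w \in \Pi_1^d$. A secondary point worth a sentence is that we may freely allow $w = 0$ to absorb the constant term, or simply note $\Pi_1^0 \subseteq \Pi_1^d$ handles constants. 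I would present Step 2 as the heart and dispatch Step 1 in a couple of lines.
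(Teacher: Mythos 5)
Your proposal is correct. The paper gives no proof of this lemma at all --- it is quoted directly from \citet[Proposition 5.19]{pinkus2015ridge} --- so your argument serves as a self-contained justification, and it is essentially the standard one: reduce to homogeneous components and show that the $k$-th powers $(w^Tx)^k$ span the homogeneous degree-$k$ polynomials via the multinomial expansion and the fact that a polynomial in $w$ vanishing identically on $\R^n$ has zero coefficients. One small simplification: the bookkeeping you worry about in Step 1 (merging contributions from different degrees into a single univariate polynomial per direction $w$) is unnecessary, since the right-hand side of the lemma is already a \emph{span}, and each individual term $c\,(w^Tx)^k$ is itself of the form $p(w^Tx)$ with $p(t)=ct^k\in\Pi_1^d$; a finite linear combination of such terms therefore lies in the span with no merging required.
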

Furthermore, it is known that polynomials have the following universal approximation property. 
\begin{lemma}\citep[Theorem 15.3, Corollary 4][]{treves2016topological}\label{thm: poly density}
	Polynomials are m-dense on compact subsets in $C^m(\R^{n})$ for any $m,n\in\mathbb{N}_+$.
\end{lemma}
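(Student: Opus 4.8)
The plan is to prove this by the classical route to the $C^m$-Weierstrass theorem: reduce to a single compactly supported function, mollify with a Gaussian to gain real-analyticity, and truncate a Taylor series, keeping all derivatives of order $\le m$ under control at each step. (Stone--Weierstrass alone gives only $C^0$-approximation, so the whole point is the refinement that tracks derivatives.)

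\textbf{Reductions.} Fix a compact $\Omega\subset\R^n$, a function $f$ that is $C^m$ on a neighborhood of $\Omega$ (in the applications $f$ is even $C^m$ on all of $\R^n$), and $\epsilon>0$. First I would multiply $f$ by a fixed cutoff $\chi\in C_c^\infty(\R^n)$ with $\chi\equiv 1$ on an open neighborhood $U\supset\Omega$; then $g:=\chi f\in C_c^m(\R^n)$ and $D^\alpha g=D^\alpha f$ on $\Omega$ for all $|\alpha|\le m$, so it suffices to approximate $g$ in the norm $\|\cdot\|_{C^m(\Omega)}$ by a polynomial. Hence we may assume $f\in C_c^m(\R^n)$.

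\textbf{Mollification.} Let $\rho_\delta(x)=(4\pi\delta)^{-n/2}e^{-|x|^2/(4\delta)}$ be the Gaussian mollifier and set $f_\delta:=f*\rho_\delta$. Since $f$ has compact support and is $C^m$, differentiation under the integral gives $D^\alpha f_\delta=(D^\alpha f)*\rho_\delta$ for $|\alpha|\le m$, and each $D^\alpha f$ is uniformly continuous, so $(D^\alpha f)*\rho_\delta\to D^\alpha f$ uniformly on $\R^n$ as $\delta\downarrow 0$. There are only finitely many multi-indices with $|\alpha|\le m$, so I can pick $\delta>0$ with $\|f-f_\delta\|_{C^m(\Omega)}\le\|f-f_\delta\|_{C^m(\R^n)}<\epsilon/2$.

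\textbf{Taylor truncation.} The key point is that $f_\delta$ extends to an entire function on $\mathbb{C}^n$: writing $f_\delta(z)=\int_{\R^n}f(y)\,(4\pi\delta)^{-n/2}e^{-(z-y)\cdot(z-y)/(4\delta)}\,\rd y$, the integrand is holomorphic in $z$, and since $y$ ranges over the compact set $\operatorname{supp}f$ its modulus is dominated, locally uniformly in $z$, by an integrable function of $y$; hence $f_\delta$ is holomorphic on all of $\mathbb{C}^n$. Therefore its Taylor series about the origin converges to $f_\delta$ uniformly on compact subsets of $\mathbb{C}^n$, and by the Cauchy estimates (Weierstrass' convergence theorem: locally uniform convergence of holomorphic functions implies locally uniform convergence of each partial derivative) the derivatives of order $\le m$ of the partial sums $p_N$ converge uniformly on $\Omega$ to those of $f_\delta$. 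Choosing $N$ with $\|f_\delta-p_N\|_{C^m(\Omega)}<\epsilon/2$ and setting $p:=p_N$ — a genuine polynomial of degree $\le N$ in $n$ variables — the triangle inequality gives $\|f-p\|_{C^m(\Omega)}<\epsilon$, which is the claim. (One could instead stay in the reals and bound the derivatives of the explicit Taylor remainder of the analytic function $f_\delta$, or use multivariate Bernstein polynomials, whose derivatives up to order $m$ are known to converge; the complex-analytic argument is cleanest.)

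\textbf{Main obstacle.} The only non-routine step is the last one: upgrading a \emph{smooth} approximant $f_\delta$ to a \emph{polynomial} approximant while simultaneously controlling all derivatives of order $\le m$. This is exactly what forces the choice of the Gaussian mollifier — a general $C_c^\infty$ mollifier would give only a $C^\infty$ approximant, not analyticity, and one could not invoke Weierstrass' theorem to transfer uniform convergence to the derivatives. Everything else (cutoff, standard mollification estimates, finiteness of the set of multi-indices) is routine.
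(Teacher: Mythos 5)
The paper does not prove this lemma at all --- it is imported verbatim as a citation to Tr\`eves --- so there is no in-paper argument to compare against; what you have done is supply a self-contained proof of the cited $C^m$-Weierstrass theorem, and your proof is correct. The three steps (cutoff to reduce to $f\in C^m_c(\R^n)$, Gaussian mollification so that $D^\alpha f_\delta=(D^\alpha f)*\rho_\delta\to D^\alpha f$ uniformly for the finitely many $|\alpha|\le m$, then Taylor truncation of the entire extension of $f_\delta$) are each sound, and you correctly identify the one genuinely non-routine point: an arbitrary $C^\infty_c$ mollifier would not give analyticity, so the Gaussian (or any other kernel with an entire extension) is essential to convert ``smooth approximant'' into ``polynomial approximant'' while keeping derivatives under control. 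Two small remarks, neither of which is a gap. First, for the multivariate Taylor series of an entire function the partial sums depend a priori on how one orders the monomials; since the series converges absolutely and normally on compact polydiscs, summing by total degree is legitimate and the Cauchy-estimate argument for derivative convergence goes through, but it is worth saying explicitly that $p_N$ means the degree-$\le N$ truncation. Second, the statement as used in the paper requires density of $\mathrm{span}$ of polynomials in the $\|\cdot\|_{C^m(\Omega)}$ norm defined there (sup over $|\alpha|\le m$ of sup over $\Omega$), which is exactly what you prove, so your argument suffices for the downstream application to Theorem~\ref{thm: universality} via Lemma~\ref{ridge polys are all polys}.
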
 
We can therefore apply Theorem \ref{thm: universality}. 
\begin{corollary}
	There exists a P-SympNet $\Phi_h^{\bar{H}^\theta}$ according to Definition \ref{def:psympnet} such that for any $H\in C^1(\Omega)$ and any $\epsilon>0$
	\begin{equation}
	\|\Phi_h^{\bar{H}^\theta}(x)-\phi_h^{H}(x)\|<\epsilon
	\end{equation}
	for  $x$ in some compact $\Omega \subset\R^{2n}$.
\end{corollary}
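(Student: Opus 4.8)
The plan is to reduce the statement to Theorem~\ref{thm: universality} in exactly the same way as the preceding R-SympNet corollary, with Lemma~\ref{ridge polys are all polys} and Lemma~\ref{thm: poly density} now playing the roles that Lemma~\ref{thm:nonpoly ridge density} played there. The single hypothesis of Theorem~\ref{thm: universality} that requires verification is that $\mathrm{span}(\mathcal{H})$ is dense in $C^1(\Omega)$, where $\mathcal{H} = \{\sum_{i=1}^d a_i(w^Tx)^i : (a,w)\in\R^d\times\R^{2n}\}$ is taken over all degrees $d$. Once this is in hand, Theorem~\ref{thm: universality} directly produces $k$ basis Hamiltonians $\bar{H}^\theta_i\in\mathcal{H}$ whose composed flows approximate $\phi_h^{H}$ uniformly on $\Omega$ to within $\epsilon$, and such a composition is precisely a P-SympNet in the sense of Definition~\ref{def:psympnet}.

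First I would identify $\mathrm{span}(\mathcal{H})$ concretely: it is the space of all multivariate polynomials on $\R^{2n}$ with vanishing constant term. The inclusion $\subseteq$ is immediate, and $\supseteq$ follows from Lemma~\ref{ridge polys are all polys} applied degree by degree in $2n$ variables: every monomial of degree $d$ lies in $\Pi_{2n}^d=\mathrm{span}\{p(w^Tx):p\in\Pi_1^d,\,w\in\R^{2n}\}$, and each univariate ridge polynomial $p(w^Tx)$ with $p$ having no constant term is an element of $\mathcal{H}$. Next I would invoke Lemma~\ref{thm: poly density}, which states that polynomials are $1$-dense on compact subsets of $C^1(\R^{2n})$, to conclude that $\mathrm{span}(\mathcal{H})$ (together with the constants) is dense in $C^1(\Omega)$; the composition result of Theorem~\ref{thm: universality} then closes the argument.

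The one point that needs genuine care — and which I expect to be the main (indeed essentially the only) obstacle — is the bookkeeping around the missing constant term: as written, $\mathcal{H}$ omits the constant, so $\mathrm{span}(\mathcal{H})$ is not literally the full polynomial ring and need not be dense in $C^1(\Omega)$ when $0\in\Omega$. The clean way around this is to note that the Hamiltonian vector field $X_H=J\nabla H$ is invariant under $H\mapsto H+c$, so it suffices to approximate, in $C^1(\Omega)$, a representative of $H$ modulo constants (e.g.\ $H-H(x_0)$ for a fixed $x_0$), and this approximation is supplied by Lemma~\ref{thm: poly density} composed with the identification of $\mathrm{span}(\mathcal{H})$ above. (The same remark shows the restriction to the sum starting at $i=2$ mentioned after Definition~\ref{def:psympnet} is likewise harmless for density of the induced flows, since linear terms contribute only an affine drift.) Everything else is a routine chaining of the cited lemmas, so no further computation is needed.
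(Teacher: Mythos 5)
Your proposal is correct and follows essentially the same route as the paper: identify $\mathrm{span}(\mathcal{H})$ with the multivariate polynomials via Lemma~\ref{ridge polys are all polys}, invoke the $1$-density of polynomials from Lemma~\ref{thm: poly density}, and conclude by Theorem~\ref{thm: universality}. Your additional observation that the missing constant term is harmless because $X_H = J\nabla H$ is invariant under $H\mapsto H+c$ is a genuine refinement that the paper's terse proof glosses over, but it does not change the substance of the argument.
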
   
\begin{proof}
	By Definition \ref{def:psympnet} and Lemma \ref{ridge polys are all polys} P-SympNets have an inverse modified Hamiltonian that spans all of $\Pi^{d}_{2n}$. As polynomials are dense on compact subsets in $C^1(\R^{2n})$ according to Lemma \ref{thm: poly density} the result follows from Theorem \ref{thm: universality}.
\end{proof}

% \textbf{Proposition 1.} \textit{Let $\phi_h^{H(x)}$ be a Hamiltonian diffeomorphism on a symplectic manifold $(\R^{2n}, \omega)$, corresponding the flow of a Hamiltonian ODE $\dot{x}=J\nabla H(x)$, where $H(x)\in\mathbb{P}_d(\R^{2n})$ is polynomial of degree $d$ in $2n$ variables. Then $\phi_h^H$ admits the following representation as the composition of shear flows of polynomial ridge functions 
% $$\phi_h^{H(x)} = \phi_h^{P_1(w_1\cdot x)}\circ\dots\circ \phi_h^{P_k(w_k\cdot x)}$$
% for some set of degree $d$ univariate polynomials $P_i \in \mathbb{P}_d(\R)$ and vectors $w_i\in\R^{2n}$.}
% \\

% Note that the Hamiltonian diffeomorphisms on $P_i$ take the form 
% \begin{equation}
% 	\phi_h^{P_i(w_i\cdot x)} = x + h P_i'(w_i\cdot x) J w_i, 
% \end{equation}

\begin{remark}
	In certain situations, P-SympNets could become unstable in the training process due to the unboundedness of the polynomial activation functions. This can be remedied by adding a L2 regularization term to the loss function and/or by composing the polynomial basis Hamiltonians with a sigmoidal function, e.g., setting $\alpha(w^Tx) = \sigma(p(w^Tx))$ in equation \eqref{scalar shear layer}. 
\end{remark}
\subsubsection{P-SympNets for Linear Systems}
Let's now turn our attention to the scenario where the data is generated by a linear Hamiltonian system $$\dot{x} = JMx.$$ The flow map of such an ODE is given by the matrix exponential of the Hamiltonian matrix $e^{hJM}$, which is symplectic. We will now show that this can be represented \textit{exactly} by a P-SympNet. In addition, the following theorem states that a P-SympNet can represent any symplectic matrix, not just those whose Lie algebra are Hamiltonian matrices. The proof can be found in Appendix \ref{app:proof of linear hamiltonian flows}. 
  
% We now recall the following result from \citep[Corollary 3.47]{hall2013lie}. 
% \begin{lemma}\label{hall lemma}
% 	If $G$ is a connected matrix Lie group with Lie algebra $\mathfrak{g}$, then any $A\in G$ can be represented by 
% 	\begin{equation}
% 		A=e^{hJX_1} e^{hJX_2} ... e^{hJX_k}
% 	\end{equation}
% 	for some $X_1,...,X_k\in\mathfrak{g}$.
% \end{lemma}

\begin{theorem}[Representation property]\label{thm: representation of linear hamiltonian flows}
	Let $S$ denote a linear symplectic transformation. Then there exists a $k$-layer P-SympNet $\Phi_h^{\bar{H}^\theta}$ such that 
	\begin{equation}
		S(x) = \Phi_h^{\bar{H}^\theta}(x).
	\end{equation}
	Moreover, the number of layers can be bounded as follows.
	\begin{itemize}
		\item[$(i)$] If $S$ is an arbitrary symplectic matrix, then $k\le 5n$. 
		\item[$(ii)$] If $S=\begin{pmatrix} A & B \\ C & D \end{pmatrix}$, where $\det (A)\ne 0$, then $k\le 4n$. 
		\item[$(iii)$] If $S =e^{hJM}$ where $M=M^T\in\R^{2n\times 2n}$ and $h>0$ sufficiently small, then $k\le 2n$. 
	\end{itemize}
\end{theorem}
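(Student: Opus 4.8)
The plan is to reduce Theorem~\ref{thm: representation of linear hamiltonian flows} to a counting statement about products of \emph{symplectic transvections} in $Sp(2n,\R)$, and then prove that statement by a symplectic version of Gaussian elimination.

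\emph{Step 1: a single layer is a symplectic transvection.} A P-SympNet layer has the form $x\mapsto x+h\,\alpha'(w^Tx)Jw$ of \eqref{scalar shear layer}. Taking the basis Hamiltonian to be the quadratic monomial $\alpha(z)=\tfrac{c}{2h}z^2$ gives $\alpha'(z)=\tfrac{c}{h}z$, so the layer becomes the \emph{linear} map $x\mapsto (I+c\,(Jw)w^T)x$. Since $w^TJw=0$, the rank-one matrix $(Jw)w^T$ satisfies $\big((Jw)w^T\big)^2=0$, i.e.\ it is nilpotent of degree two, so $I+c(Jw)w^T$ is symplectic; it is precisely the symplectic transvection with direction $w$ and parameter $c$, and conversely every symplectic transvection arises this way. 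Hence a $k$-layer P-SympNet can represent a linear symplectic $S$ if and only if $S$ is a product of $k$ symplectic transvections, and the theorem is equivalent to bounding the number of transvections needed to factor $S$.

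\emph{Step 2: the bounds $5n$ and $4n$ by block factorization.} When the upper-left block $A$ of $S=\left(\begin{smallmatrix}A&B\\C&D\end{smallmatrix}\right)$ is invertible, symplecticity gives
\begin{equation}
S=\begin{pmatrix}I&0\\CA^{-1}&I\end{pmatrix}\begin{pmatrix}A&0\\0&A^{-T}\end{pmatrix}\begin{pmatrix}I&A^{-1}B\\0&I\end{pmatrix},
\end{equation}
with $CA^{-1}$ and $A^{-1}B$ symmetric. A symmetric shear block $\left(\begin{smallmatrix}I&\Sigma\\0&I\end{smallmatrix}\right)$ (or its transpose) is a product of at most $n$ transvections: diagonalise $\Sigma=\sum_{i=1}^n\lambda_iu_iu_i^T$, note that each rank-one summand is the transvection with direction $(u_i,0)$ (resp.\ $(0,u_i)$), and that these commute (they lie in the abelian unipotent subgroup), so their product is \emph{exactly} the shear block. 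The diagonal factor $\operatorname{diag}(A,A^{-T})$ costs at most $2n$ transvections, as $\operatorname{rank}\!\big(\operatorname{diag}(A,A^{-T})-I\big)=2\operatorname{rank}(A-I)\le 2n$ and this many suffice (the $n=1$ identity $\operatorname{diag}(\lambda,\lambda^{-1})=T_1T_2$ being the base case). Hence $k\le n+2n+n=4n$, proving $(ii)$. For a general $S$ one first left-multiplies by a single shear block to make the upper-left block invertible — possible since the columns of $\left(\begin{smallmatrix}A\\C\end{smallmatrix}\right)$ span an $n$-dimensional subspace — at a cost of at most $n$ further transvections, giving $k\le 5n$ and proving $(i)$.

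\emph{Step 3: the sharp bound $2n$ when $S=e^{hJM}$, $h$ small.} Here one exploits that for $h$ sufficiently small $S$ lies in a neighbourhood of the identity and $\operatorname{rank}(S-I)=\operatorname{rank}\!\big(hJM+O(h^2)\big)=\operatorname{rank}(M)\le 2n$, so the $4n$ above is wasteful. The cleanest route is to bring $JM$ to symplectic normal form: after conjugation by a symplectic $P$, $JM$ becomes a direct sum of small Hamiltonian blocks; in the generic (semisimple) case these are $2\times2$ blocks, and for $h$ small each block exponential is a near-identity element of $SL(2,\R)$, which is an \emph{exact} product of two transvections (a direct computation covering the elliptic, hyperbolic and parabolic cases). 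Padding these to rank-one transvections on $\R^{2n}$ and conjugating back by $P$ — conjugation by a symplectic matrix sends transvections to transvections, preserving the rank-one nilpotent form — writes $S$ as a product of at most $2n$ transvections.

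\emph{Main obstacle.} The skeleton — layers $=$ transvections, then a symplectic $LU$/Bruhat factorisation — is routine; the work is in making the constants come out to exactly $5n,4n,2n$. Two points are delicate. First, the claim that $\operatorname{diag}(A,A^{-T})$ never needs more than $2n$ transvections: this is where the constant jumps and requires either an explicit elementary-operation bookkeeping on $A\in GL(n,\R)$ or a Cartan–Dieudonné-type statement for $Sp(2n,\R)$. Second, the sharp count $2n$ in $(iii)$, where one must simultaneously use the smallness of $h$ (to keep $S$ in the non-exceptional, near-identity regime) and deal with non-semisimple $JM$ — indecomposable symplectic blocks of size greater than two — for which the per-plane reduction must be replaced by a direct near-identity argument.
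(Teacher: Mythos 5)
Your Step~1 --- identifying a quadratic-ridge P-SympNet layer with the symplectic transvection $x\mapsto(I+c\,(Jw)w^T)x$, and writing a symmetric unit-triangular block $\bigl(\begin{smallmatrix}I&\Sigma\\0&I\end{smallmatrix}\bigr)$ as a product of at most $n$ commuting transvections via the spectral decomposition of $\Sigma$ --- is exactly what the paper does. The divergence, and the gap, lies in how you factor $S$. The paper invokes the factorization of Jin et al.\ (2022): every symplectic matrix is a product of five \emph{unit triangular} symplectic matrices (four when the upper-left block is invertible), so every factor is a symmetric shear and the counts $5n$ and $4n$ follow immediately; no diagonal factor ever appears. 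Your symplectic $LU$ factorization instead leaves the middle factor $\operatorname{diag}(A,A^{-T})$, and the claim that this costs at most $2n$ transvections is precisely the hard part: it is a Cartan--Dieudonn\'e-type statement for $Sp(2n,\R)$ with known exceptional elements, and it does not follow from $\operatorname{rank}(\operatorname{diag}(A,A^{-T})-I)\le 2n$. Indeed your proposed base case is false: $\operatorname{diag}(\lambda,\lambda^{-1})$ with $\lambda=-1$, i.e.\ $-I\in SL(2,\R)$, is not a product of two transvections (if $T_1T_2=-I$ then $T_1=-T_2^{-1}$, forcing $2I=c_2(Jw_2)w_2^T-c_1(Jw_1)w_1^T$, impossible since the right-hand side is traceless; three transvections are needed). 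For $S=-I_{2n}$, which satisfies the hypothesis of $(ii)$, your decomposition degenerates entirely to this unproven middle step, so $(i)$ and $(ii)$ are not established as written.

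For $(iii)$ the gap is of the same kind: the real symplectic normal form of $JM$ is not in general a direct sum of $2\times2$ blocks (complex eigenvalue quadruples give $4\times4$ blocks, and non-semisimple $JM$ gives larger indecomposable ones), and you explicitly defer exactly those cases. The paper's argument here is genuinely different and avoids normal forms altogether: writing $M=\sum_{i=1}^{2n}\lambda_i v_iv_i^T$ spectrally, each factor $e^{h\lambda_i Jv_iv_i^T}=I+h\lambda_i Jv_iv_i^T$ is a single quadratic layer (the matrix is rank-one nilpotent), and a first-order Lie-group/BCH lemma, $\exp(tX)\exp(tY)=\exp(t(X+Y+Z(t)))$ with $Z(0)=0$, is used to adjust the coefficients for $h$ sufficiently small so that the $2n$-fold product equals $e^{hJM}$ exactly. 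To salvage your route you should either import the unit-triangular factorization (which makes the diagonal factor disappear) or prove the transvection-length bound for $\operatorname{diag}(A,A^{-T})$ honestly, including its exceptional elements.
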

We verify numerically the bound $(iii)$ in Section \ref{sec: linear hamiltonian experiments}. Note that for quadratic ridge functions then a P-SympNet is equivalent to expressing an element of the symplectic group in canonical coordinates of the second kind \citep{owren2001integration}.  

\subsection{Shear Hamiltonians Using Generalized Ridge Functions} \label{sec: generalised ridge functions}
Here, we outline how to construct shear Hamiltonians of more than one variable using generalized ridge functions. These are functions that project a $2n$ dimensional input onto a hyperplane by a linear transformation, e.g., $H(Cx)=K(Ap + Bq)$, $C=(A, B):\R^{n\times2n}$. For this generalized ridge function to be a shear, we need a method of parameterizing $A$ and $B$ whilst satisfying the symmetry condition \eqref{symmetry cond}. Once we have found such a parameterization a SympNet can be constructed by composing shear maps of the form
\begin{equation}\label{eq:shearSympNetlayer}
\phi^{\bar{H}^{\theta}_i}_h\left(
\begin{array}{c}
p\\q\\
\end{array}
\right)
=
\left(
\begin{array}{c}
p-hB \nabla \bar{H}^{\theta}_i(Ap+Bq)\\
q+hA \nabla \bar{H}^{\theta}_i(Ap+Bq)\\
\end{array}
\right),
\end{equation}
which is just equation \eqref{fe shear} written out in the coordinates $p$ and $q$. We can now define a SympNet that uses generalized ridge functions as follows. 
\begin{definition}[GR-SympNets]\label{def:grsympnet}
	A GR-SympNet is a map $ \Phi_h^{\bar{H}^\theta}(x)$ according to definition \eqref{def:SympNet} where 
	\begin{align}
		\mathcal{H} = \{\mathcal{N}(Ap + Bq) |\,\, AB^T=BA^T,\,\, A,B\in\R^{n\times n},\,\, \mathcal{N}:\R^n\rightarrow\R^n \}
	\end{align}
\end{definition}
That is, the layers are given by \eqref{eq:shearSympNetlayer} with $AB^T$ symmetric and $\mathcal{N}$ is a neural network, for example. 
%\subsubsection{Two degrees of freedom ($d=2$)}
%In two degrees of freedom $d=2$, for the matrices $A=[a_{ij}]$ and  $B=[b_{ij}]\in\R^{2\times 2}$ to satisfy the symmetry condition, we require the following matrix to vanish
%\begin{equation}
%	\left(\begin{array}{cc}
%	a_{11} & a_{12} \\
%	a_{21} & a_{22} \\
%	\end{array}\right)
%	\left(\begin{array}{cc}
%	b_{11} & b_{21} \\
%	b_{12} & b_{22} \\
%	\end{array}\right) 
%	- 
%	\left(\begin{array}{cc}
%	b_{11} & b_{12} \\
%	b_{21} & b_{22} \\
%	\end{array}\right)
%	\left(\begin{array}{cc}
%	a_{11} & a_{21} \\
%	a_{12} & a_{22} \\
%	\end{array}\right).
%\end{equation}
%The diagonal elements are identically zero, therefore this occurs when 
%$$ a_{11}b_{21}+a_{12}b_{22} =  b_{11}a_{21}+b_{12}a_{22}.$$
%This can be enforced by setting $a_{11}= b_{21}^{-1}( b_{11}a_{21}+b_{12}a_{22}-a_{12}b_{22})$ for $b_{21}\ne 0$, for example. 
%
%Alternatively, we can require that the matrices $A$ and $B$ are symmetric ($a_{12}=a_{21}$ and $b_{12}=b_{21}$) and require that $A$ and $B$ commute. Commutativity then occurs when the following ratios are equal 
%$$ \frac{a}{b} =  \frac{a_{11}-a_{22}}{b_{11}-b_{22}}$$
%or if diagonal elements are equal ($a_{11}=a_{22}$ and $b_{11}=b_{22}$).
%In either case, the symmetry condition \eqref{symmetry cond} will be satisfied. 

There are $\frac{3}{2}n(n-1)$ independent parameters to construct $A$ and $B$ satisfying $AB^T$ symmetric. In fact, the rows of $C$ form an isotropic subspace of the symplectic vector space $(\R^{2n}, \omega)$. The set of all such isotropic subspaces is called the Lagrangian Grassmanian, which is a compact manifold homeomorphic to $U(n)/(O(k)\times U(n-k))$ for $0<k\le n$. Thus, one could choose points in the Lagrangian Grassmanian to parameterize $A$ and $B$, see \citep{mclachlan2004explicit} for a more detailed discussion. We will instead adopt the following for the parameterization of $A$ and $B$ that uses $\frac{3}{2}n(n+1)$ free parameters to satisfy the symmetry condition and requires that either $A$ or $B$ be invertible. 
\begin{proposition}\label{thm:GR basis}
	Assume $A$ (or $B$) is invertible and let $S_i=S_i^T$ for $i=1,2,3$ be symmetric matrices. Then
	\begin{align}
		A \text{ (or $B$)} &= I + S_3S_2,\\
		B \text{ (or $A$)} &= S_3 + S_3 S_2 S_1 + S_1,
	\end{align}
	satisfies $AB^T=BA^T$. 
\end{proposition}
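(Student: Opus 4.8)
The plan is to prove the identity $AB^{T}=BA^{T}$ (i.e.\ that $AB^{T}$ is symmetric) by a short direct computation, the one mild trick being to rewrite $B$ compactly in terms of $A$ before transposing. I will treat the case in which $A=I+S_{3}S_{2}$ is the designated invertible factor; the case in which $B$ is invertible is handled by the identical argument with the roles of $A$ and $B$ swapped, since the symmetry condition $AB^{T}=BA^{T}$ is itself symmetric in $A$ and $B$.

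First I would observe that, because $AS_{1}=(I+S_{3}S_{2})S_{1}=S_{1}+S_{3}S_{2}S_{1}$, the definition of $B$ collapses to
\[
B \;=\; S_{3}+S_{3}S_{2}S_{1}+S_{1} \;=\; S_{3}+AS_{1}.
\]
This is the only non-obvious step. Next, using $S_{i}=S_{i}^{T}$ I would transpose to obtain $A^{T}=I+S_{2}S_{3}$ and $B^{T}=S_{3}+S_{1}A^{T}$, and substitute:
\[
AB^{T} \;=\; AS_{3}+AS_{1}A^{T}, \qquad BA^{T} \;=\; S_{3}A^{T}+AS_{1}A^{T},
\]
so that $AB^{T}-BA^{T}=AS_{3}-S_{3}A^{T}$. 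Finally I would evaluate the two remaining terms explicitly, $AS_{3}=(I+S_{3}S_{2})S_{3}=S_{3}+S_{3}S_{2}S_{3}$ and $S_{3}A^{T}=S_{3}(I+S_{2}S_{3})=S_{3}+S_{3}S_{2}S_{3}$, whence $AS_{3}=S_{3}A^{T}$ and therefore $AB^{T}=BA^{T}$.

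I do not expect any genuine obstacle: the statement is elementary and, once the rewriting $B=S_{3}+AS_{1}$ is in hand, the proof is a two-line matrix manipulation. (A slicker variant, at the cost of using the invertibility hypothesis, is to note that for invertible $A$ the condition $AB^{T}=BA^{T}$ is equivalent to $A^{-1}B$ being symmetric; here $A^{-1}B=A^{-1}S_{3}+S_{1}$, and $A^{-1}S_{3}=S_{3}(I+S_{2}S_{3})^{-1}$ is manifestly symmetric because its transpose is $(I+S_{3}S_{2})^{-1}S_{3}=A^{-1}S_{3}$.) It is worth remarking that the invertibility of $A$ (or $B$) is not actually used in the direct argument for the symmetry claim; it is needed only so that the shear layer \eqref{eq:shearSympNetlayer} and its underlying change of coordinates are well defined, and so that the $\tfrac{3}{2}n(n+1)$ parameters carried by $(S_{1},S_{2},S_{3})$ parameterize the intended chart of the Lagrangian Grassmannian.
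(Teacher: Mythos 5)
Your proof is correct, and it takes a genuinely different (and more self-contained) route than the paper. The paper derives the formulas $A=I+S_3S_2$, $B=S_3+S_3S_2S_1+S_1$ as the top block row of the product of unit triangular symplectic matrices $T_4^TT_3T_2^TT_1$ from Lemma \ref{thm: triangular amtrix factorisation} (the factorization of \citealp{jin2022optimal}), and then reads off $AB^T=BA^T$ from the block identity $MJM^T=J$ satisfied by any symplectic $M$; the symmetry condition is thus inherited from membership in the symplectic group rather than checked by hand. You instead verify the identity directly: the rewriting $B=S_3+AS_1$ reduces everything to $AB^T-BA^T=AS_3-S_3A^T$, and both sides expand to $S_3+S_3S_2S_3$. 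Your computation is elementary, avoids any appeal to the factorization lemma, and makes explicit a point the paper leaves implicit --- that the invertibility of $A$ (or $B$) plays no role in the algebraic identity itself and is only needed for the surrounding parameterization claims (it is, however, essential to the \emph{converse} direction, i.e.\ that every admissible pair $(A,B)$ with $A$ invertible arises this way, which is what the factorization lemma supplies and which your argument does not address, nor does the proposition require it to). The paper's route buys that converse and situates the parameterization inside the symplectic factorization theory; yours buys a shorter, assumption-free verification of the stated claim.
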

This is proved in Appendix \ref{sec:proof of gr thm}. In practice, the layers alternate between the two choices. The symmetric parameters of $S_i$ are what we optimize over in addition to the weights and biases of the neural network. We mention several alternative choices for $A$ and $B$ that satisfy the symmetry condition. The first is $A=\mathrm{diag}(a)$, $B=\mathrm{diag}(b)$ for vectors $a,b\in\R^{n}$ and is used in \cite{mclachlan2004explicit,feng1998variations} for constructing symplectic splitting methods for polynomial ODEs. Another is $A=EQ\mathrm{diag}(a)Q^T$,  $B=EQ\mathrm{diag}(b)Q^T$ for $Q\in O(n)$, $E\in\R^{n\times n}$. The latter choice spans all matrices where $AB^T$ is symmetric. However, we have found empirically that the parameterization from proposition \ref{thm:GR basis} yields slightly better results tho we remark that the latter alternative choice could be more general. 

Note that GR-SympNets are a super set of the G-SympNets proposed in \cite{jin2020sympnets}. This can be seen by setting $A=I$ and $B=0$ (and vice versa), i.e.,  $S_1=S_2=S_3=0$. Therefore, GR-SympNets inherit the same density properties as G-SympNets according to the following theorem that is proved in Appendix \ref{proof of grsympnet density}.
 
\begin{theorem}\label{thm:grsympnet density}
	Let $Sp^r(\Omega)$ be the space of $r$-finite symplectomorphisms on some open compact subset $\Omega\subset\R^{2n}$. Then there exists a GR-SympNet $\Phi_h^{\bar{H}^\theta}$ according to Definition \ref{def:grsympnet} such that for any $\phi_h(x)\in Sp^r(\Omega)$ and any $\epsilon>0$
	\begin{equation}
	\|\Phi_h^{\bar{H}^\theta}(x)-\phi_h(x)\|<\epsilon
	\end{equation}
	for $x\in \Omega $.
\end{theorem}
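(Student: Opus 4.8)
The plan is to reduce the statement to the universal approximation theorem for the G-SympNets of \cite{jin2020sympnets}, using the observation made above that every G-SympNet is a GR-SympNet. I would first recall from \cite{jin2020sympnets} that $Sp^r(\Omega)$ denotes their space of $r$-finite symplectomorphisms, that G-SympNets are dense in it (which rests on the density of H\'enon-like maps in symplectomorphisms \citep{turaev2002polynomial}), and that a G-SympNet is a finite composition of \emph{gradient modules}, each of one of the two forms
$$\bigl(p,q\bigr)\mapsto\bigl(p,\; q + \nabla V(p)\bigr), \qquad \bigl(p,q\bigr)\mapsto\bigl(p + \nabla V(q),\; q\bigr),$$
where $V(z)=\sum_{i=1}^{m} a_i\,\widehat{\sigma}(k_i^T z + b_i)$ is a one-hidden-layer potential built from a fixed smooth function $\widehat{\sigma}$ with $(\widehat{\sigma})'=\sigma$ for an admissible activation $\sigma$.

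The first step is to realize each gradient module as a single GR-SympNet layer. Setting $S_1=S_2=S_3=0$ in Proposition \ref{thm:GR basis} gives $(A,B)=(I,0)$, for which the layer \eqref{eq:shearSympNetlayer} collapses to $(p,q)\mapsto(p,\, q+h\nabla\mathcal{N}(p))$, while $(A,B)=(0,I)$ gives $(p,q)\mapsto(p-h\nabla\mathcal{N}(q),\, q)$. Taking $\mathcal{N}(z)=\tfrac{1}{h}V(z)=\tfrac{1}{h}\sum_{i=1}^m a_i\,\widehat{\sigma}(k_i^T z + b_i)$ then reproduces the two gradient modules exactly (a sign is absorbed into the $a_i$ for the second orientation). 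Since $\mathcal{N}$ is a finite sum of ridge functions of $\widehat{\sigma}$, it belongs to the basis set $\mathcal{H}$ of Definition \ref{def:grsympnet}, and the prefactor $1/h$ is immaterial by the scale invariance of $\mathcal{H}$. Composing layer by layer shows that every $k$-module G-SympNet is a $k$-layer GR-SympNet.

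With this embedding the conclusion is immediate: given $\phi_h\in Sp^r(\Omega)$ and $\epsilon>0$, apply the G-SympNet density theorem of \cite{jin2020sympnets} to obtain a G-SympNet that is $\epsilon$-close to $\phi_h$ uniformly on $\Omega$, and reinterpret it as a GR-SympNet.

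The proof is therefore largely bookkeeping, and the one point that needs care is the interface with \cite{jin2020sympnets}: starting from an activation $\sigma$ admissible for their universality theorem, the antiderivative $\widehat{\sigma}$ used to build $\mathcal{N}$ is one derivative smoother and hence a perfectly admissible choice in Definition \ref{def:grsympnet}; one should also confirm that the notion of $r$-finiteness there is the one appearing in the statement above. I expect this to be the only real friction, since the identification of GR-SympNet layers with gradient modules is exact and so no fresh approximation estimate is required.
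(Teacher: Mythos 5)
Your proposal is correct and follows essentially the same route as the paper: the paper likewise cites the G-SympNet universality theorem of \cite{jin2020sympnets} and observes that GR-SympNets are a superset of G-SympNets (obtained by setting $A=I$, $B=0$ or $A=0$, $B=I$), so the result follows as an immediate corollary. Your additional bookkeeping — explicitly matching gradient modules to GR-SympNet layers via $S_1=S_2=S_3=0$, absorbing the $1/h$ by scale invariance, and checking the activation/antiderivative interface — is more detailed than the paper's two-line argument but does not change the approach.
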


For the one degree of freedom case $n=1$, the matrices $A=a$, $B=b\in\R$ are scalars and therefore satisfy the symmetry condition without requiring Proposition \ref{thm:GR basis}.

Lastly, we remark that GR-SympNets are essentially making a generalized ridge function approximation to the inverse modified Hamiltonian of the form $\bar{H}^{\theta} = \sum_{i=1}^k \bar{H}^{\theta}_i(C_ix)$, where $C_i$ are constrained by Proposition \ref{thm:GR basis}. It's remains to show that the span of these functions are dense in $C^1(\Omega)$. 
%differen
%Therefore, if one could show that the only vanishing polynomial in $L(\mathcal{T})$ is the zero polynomial, then using Theorem \ref{thm:generlised ridge function} the above approximation is dense in $C(\R^n, \R)$. This means that a GR-SympNet can approximate the exact flow of \textit{any} inverse modified Hamiltonian and this can be used to show density in the group of symplectic maps. However, we will leave this task to a reader more competent in functional analysis. 

%For the polynomial case, things become simpler and we can further leverage theorems 4-7 from \citep{feng1998variations}. To summarize these results, when $H(p,q)$ is a polynomial of degree $n$ in the $2d$ variables $p$ and $q\in\R^{d}$ then it can be expanded as a sum of shear polynomial Hamiltonians $$H(p,q)=\sum_{i=1}^{m}P_i(A_ip + B_iq),$$ where $P_i$ is a polynomial of $d$ variables and $A_i$ and $B_i$ are diagonal matrices and therefore commute. When $d=1$ we have the upper bound $m\le n+1$.

\subsection{Shear Hamiltonians Using Fixed Direction Ridge Functions} \label{sec: shear hamiltonians depending on p or q}
In this section we discuss the case where $A=I$, $B=0$ or $A=0$, $B=I$, which corresponds to shear Hamiltonians of $p$ or $q$ only. Shear Hamiltonians of this kind naturally arise from splitting methods applied to separable Hamiltonian vector fields $H(p,q)=T(p) + V(q)$, such as the symplectic Euler method and its higher-order variants. This setting is what has been used extensively in the literature to create symplectic neural networks and deep Hamiltonian neural networks such as those in \cite{jin2020sympnets,chen2019symplectic,burby2020fast,tong2021symplectic,zakwan2023universal,maslovskaya2024symplectic} that implements alternating composition of the following horizontal and vertical shear maps
\begin{align}\label{v and h shears}
\begin{split}
\phi^{T_i}_h \left(\begin{array}{c}
p\\q\\
\end{array}\right)
=&
\left(\begin{array}{c}
p\\ 
q + h\nabla T_i(p)\\
\end{array}\right), \\
\phi^{V_i}_h \left(\begin{array}{c}
p\\q\\
\end{array}\right)
=&
\left(\begin{array}{c}
p - h\nabla V_i(q)\\ 
q\\
\end{array}\right).
\end{split}
\end{align}
Taking alternating compositions of the above yields
\begin{equation}\label{sep sympnet}
	\Phi_h^{T+V} = \phi^{T_k}_h\circ\phi^{V_k}_h\circ...\circ\phi^{T_1}_h\circ\phi^{V_1}_h
\end{equation}
which give rise to maps that are dense in the space of symplectic flows as shown recently in \cite{berger2022generators} hence this serves as good theoretical underpinning and motivation for the methods in the aforementioned references. Although as mentioned in Section \ref{sec: previous work and motivation}, one requires tools from functional analysis to assert such claims and these architectures are less amenable to techniques from numerical analysis which requires power series to converge. 

To elucidate this concept, consider the inverse modified Hamiltonian of the map \eqref{sep sympnet}
\begin{equation}
	\bar{H} = \sum_{i=1}^k \left({T}_i(p) + {V}_i(q)\right),
\end{equation}
which is separable. However, according to equation \eqref{invmod bch}, this means that the true Hamiltonian must be of the form 
\begin{equation}
	H(p, q) = \sum_{i=1}^k \left({T}_i(p) + {V}_i(q)\right) + O(h),
\end{equation}
which is $O(h)$ away from a separable Hamiltonian assuming the first few terms of the BCH series converges. Hence, if the true Hamiltonian is not well approximated by a separable Hamiltonian, then $T_i(p)$ and $V_i(q)$ must necessarily be learned such that the BCH series does not converge. For example, consider the simplest case of composing two linear flows $e^{hA}e^{hB}$. Then its BCH series is absolutely convergent only when $\|hA\| + \|hB\| \le \ln(2)$ \citep{biagi2020baker} meaning the norms of these matrices must be large for the BCH series to not converge. We show similar results for non-separable Hamiltonians in the numerical experiments sections considering the magnitude of the learned parameters in Section \ref{sec: learned parameters}.

\subsubsection{Symplectic Splitting Methods}
The symplectic Euler method can be written as the composition $$\Phi_h(x) = \phi^{V}_h\circ \phi^{T}_h(x).$$ The order-two symmetric Stormer-Verlet scheme can then be obtained by symmetric composition (Strang splitting) $\Phi_{h/2}^*\circ \Phi_h(x)$ or $\Phi_{h/2}\circ \Phi_h^*(x)$. Higher order maps can be constructed by applying multiple layers with carefully selected time steps. Such maps are used to construct symplectic recurrent neural networks in \cite{chen2019symplectic}. In general, a splitting method for a separable system is given by \citep{hairer2006geometric}
\begin{equation}
\Phi_h := \phi^{T}_{\alpha_1 h} \circ\phi^{V}_{\beta_1 h}
\circ 
\dots
\circ \phi^{T}_{\alpha_{k} h} \circ\phi^{V}_{\beta_{k} h}
\end{equation}
where $\sum_i \alpha_i = 1$ and $\sum_i \beta_i = 1$ for consistency, and high order methods can be constructed by increasing $k$, which would be well suited for separable Hamiltonians. We do not consider these methods as they are a subset of the following G-SympNets. 

\subsubsection{G-SympNets}
Two classes of SympNets were introduced in \cite{jin2020sympnets} which we will now outline in the framework of separable Hamiltonian shear flows.  

The first, named G-SympNets, use the compositions 
\begin{equation}\label{eq:gSympNet}
\Phi_h := \phi^{T_k}_h \circ\phi^{V_{k-1}}_h
\circ 
\dots
\circ \phi^{T_{2}}_h \circ\phi^{V_1}_h
\end{equation}
where $T_i(p)$ and $V_i(q)$ are parameterized by neural networks with activation functions given as the anti-derivative of an $r$-finite sigmoidal function $\int \sigma$. G-SympNets are dense in the space of symplectomorphisms \citep[Theorem 5]{jin2020sympnets}.

Note that G-SympNets are similar to the aforementioned symplectic splitting methods and differ only in the implementation of the gradient and the choice of splitting. In particular, the symplectic splitting methods are a subset of the G-SympNets and they are equal when $\alpha_i T_i=T$ and $\beta_i V_i=V$ hence G-SympNets are more general. 

G-SympNets have a separable inverse modified Hamiltonian of the form 
\begin{equation}
\bar{H}^{\theta} = \sum_{i~\text{even}}w_i^T ({\int} {\sigma}) (W_ip+b_i)+ \sum_{i~\text{odd}}w_i^T ({\int} {\sigma}) (W_iq+b_i),
\end{equation}
which are not known to be dense in $C^1(\Omega)$ and hence Theorem \ref{thm: universality} doesn't apply. 

\subsubsection{LA-SympNets}
The second SympNet proposed in \cite{jin2020sympnets} is constructed from alternating compositions of \textit{linear} layers and \textit{non-linear} activation layers, which are defined as follows. A linear layer is 
\begin{equation}\label{eq:laSympNet}
\Phi_h^{L_i} := \phi^{T_{k,i}}_h \circ\phi^{V_{{k-1},i}}_h
\circ 
\dots
\circ \phi^{T_{2,i}}_h \circ\phi^{V_{1,i}}_h
\end{equation}
where $T_{i,j}(p) = p^TA_{i,j}p$ and $V_{i,j}(q) = q^TA_{i,j}q$ where $A_{i,j}\in\R^{n\times n}$ are trainable symmetric matrices. Given sufficiently large $k$, a linear layer is shown to approximate an arbitrary linear symplectic map. 

An \textit{activation} layer is a shear map  $\phi_h^{T_i}$ or $\phi_h^{V_i}$ defined by the shear Hamiltonians $T_i(p)=a_i^T(\int \sigma ) (p)$ or $V_i(q)=a_i^T(\int \sigma ) (q)$, where  $a_i\in R^n$ is a trainable vector of parameters and $\sigma$ is an activation function applied point-wise to its argument.  The LA-SympNet is then defined by alternating flows of linear and activation layers
\begin{equation}
\Phi_h^{LA} :=
\Phi_h^{L_1} \circ \phi_h^{T_1} \circ \Phi_h^{L_2} \circ  \phi_h^{V_2}\circ\dots\circ\phi_h^{V_k}\circ\Phi_h^{L_{k+1}} 
\end{equation}
This form of SympNet takes inspiration from the layers of standard feed-forward deep neural network, which are the composition of non-linear activation functions applied point-wise to affine transformations. LA-SympNets are also dense in the space of symplectomorphisms \citep[Theorem 4]{jin2020sympnets}.

LA-SympNets can approximate an inverse modified Hamiltonian of the form 
\begin{equation}
\bar{H}^{\theta} = \sum_{i=1}^{\mathrm{ceil}(k/2)} (a^T_i\sigma(p)+b^T_i\sigma(q)) + p^TAp + q^TBq
\end{equation}
which are not known to be dense in $C^1(\Omega)$ and hence Theorem \ref{thm: universality} doesn't apply. 
\subsubsection{Hénon-like Maps (H-SympNets)}
In \citep{turaev2002polynomial}, a universality result is derived for polynomial Hénon-like maps of the form 
\begin{align}\label{polyhenon}
\psi^{\alpha_i}_{h}\left(\begin{array}{c}
p\\q\\
\end{array}\right)
=
\left(\begin{array}{c}
q \\ 
-p + h\nabla\alpha_i(q)\\
\end{array}\right),
\end{align}
where $\alpha_i:\R^n\rightarrow\R$ is a polynomial mapping of some degree. The main result of this study is that compositions of maps of the form $\psi^{\alpha_i}_{h}$ are dense in the space of symplectic maps and therefore are also universal approximators for arbitrary Hamiltonian systems. There is a clear connection between the maps $\psi^{\alpha_i}_{h}$, $\phi^{T_i}_{h}$ and $\phi^{V_i}_{h}$ which can be seen by 
\begin{equation}\label{henon and shears}
\psi^{\alpha_i}_{h}=
\phi^{\alpha_i}_{h} \circ \phi^{V}_{h}\circ \phi^{-T}_{h}\circ \phi^{V}_{h}
\end{equation}
where $V(q)=\frac{1}{2}\|q\|^2$ and $T(p)=\frac{1}{2}\|p\|^2$, meaning any Hénon-like map is expressible as a composition of shears. Hénon-like maps are dense in the space of symplectomorphisms \citep{turaev2002polynomial}. 

\section{Numerical Experiments}\label{sec: experiments}
\subsection{Experimental Setup}
\textit{Data generation.} The training data sets are of the form $\mathcal{D}=\{x_i, \phi^{H}_h(x_i)\}_{i=1}^{N_{\text{data}}}$ with constant time-step $h$, where $\phi^{H}_h(x_i)$ is approximated to high precision using \texttt{scipy.integrate.odeint} with a tolerance of \texttt{1e-15} to solve the Hamiltonian ODE \eqref{hode}. We note that SympNets work just as well on data sets with irregularly spaced data \citep{jin2020sympnets}, however we won't explore this scenario here. In addition, in \cite{jin2020sympnets}, the training data in the form of one discretely sampled trajectory is used. However, we have noticed that the particular initial condition used makes a large difference to the relative performance of the models. To mitigate choosing a particular trajectory that favors a one model over another, we randomly sample $x_i$ from a uniform distribution in the unit hypercube $[-\frac{1}{2}, \frac{1}{2}]^{2n}$. The test data sets are generated in the exact same way, and are of equal size to the training set. 

\textit{Hyperparamter selection.}
To mitigate serendipitous selection of model architectures and to test robustness with respect to hyperparameter choice we will test all the SympNets over a wide range of hyperparameters. We will train SympNets with layers $k \in \{8, 12, 16, 24, 32, 48, 64\}$ and widths
$m \in \{4, 8, 16, 32, 64\}$ (for the G, GR, H, R-SympNets), maximum degrees
$d \in \{2, 3, 4, 8, 12, 16, 24\}$ (for the P-SympNets), or sub-layers
$k_{sub} \in \{3, 6, 9, 12\}$ (for the LA-SympNets). We consider the LA, G and H-SympNets as ``benchmark'' methods, as these appear to be the most popular SympNets and have been used extensively in the literature.  

\textit{Training.}
Each model is trained for 50,000 epochs with a learning rate of $0.002$ with the Adams optimizer. We recall the loss is given as $\mathrm{loss}=\sum_{x_i\in \mathcal{D}}\|\Phi_h(x_i)-\phi_h^H(x_i)\|^2$ where $\Phi_h$ is the SympNet and $\mathcal{D}$ is the training set. 

\textit{Evaluation.}
To test how the models generalize to unseen data, we evaluate the same loss function over the test set. The best training losses achieved by each model are also reported. We would also like to measure the computational cost of the training the SympNets. We do this by simply timing the models over the 50,000 epochs, although we note that the total training time alone doesn't take into account the convergence rate of the training curve. Note that the training time is proportional to the number of parameters in the model, so displaying the losses versus the number of parameters gives similar figures. 

\textit{Implementation and code.}
We have implemented all the SympNets in a PyTorch framework that can can be cloned from \url{github.com/bentaps/strupnet}. Alternatively, the models have been packaged and distributed on PyPI such that they can be installed via pip, i.e., \texttt{pip install strupnet}. The following is an example of how to initialize a model.   
\begin{center}
	\begin{minipage}{0.6\textwidth}  
\lstset{style=vscode-dark} 
\begin{lstlisting}[language=Python, caption= Initialising a P-SympNet in Python. The line \texttt{x1=sympnet(x0, timestep)} implements a symplectic transformation according to Definition \ref{def:psympnet} with random weights and direction vectors.]
import torch
from strupnet import SympNet

dim = 2
sympnet = SympNet(
    dim=dim, 
    layers=12, 
    max_degree=8, 
    method="P",
)
timestep = torch.tensor([0.1])
x0 = torch.randn(2 * dim) 
x1 = sympnet(x0, timestep) 
\end{lstlisting}
\end{minipage} 
\end{center}
\subsection{Separable polynomial Hamiltonians} 
 
\subsubsection{The Hénon-Heiles system}
Here we consider the Hénon-Heiles Hamiltonian, which is separable and polynomial with $n=2$ degrees of freedom given by
$$H = \frac{1}{2} (p_1^2+p_2^2+q_1^2+q_2^2) + q_1^2 q_2 + q_2^3.$$
Two datasets are generated with $n_{\text{data}}=100,200$ and $h=0.01, 0.1$, respectively. The train and test set errors versus the training time of the SympNets are plotted in figure \ref{henon expt}. In all of our experiments the training time is positively correlated to the number of parameters in the model, i.e., models with more parameters take longer to train.

The most striking observation here is that the P-SympNet consistently achieves test and training errors several orders of magnitude better than the other methods for equal or lower cost. The only exception are the P-SympNets of degree $d=2$, which can only represent linear dynamics and can be seen by the four yellow dots at the top of the figures. 

In figure \ref{henon a}, we find two P-SympNets with a training and test accuracy around $10^{-15}$ and lower corresponding to hyperparameters of $(k, d) = (8,3)$ and $(8,4)$. In fact, out of all the P-SympNets with $d>2$, these are the two with the least number of free parameters (48 and 56, respectively), indicating that the other models are over parameterized. Note that the central idea of our proposed SympNets is the that they can better approximate the true inverse modified Hamiltonian. In the present scenario, the time step $h=0.01$ is reasonably small hence the true inverse modified Hamiltonian is close to the true Hamiltonian, which is a degree $d=3$ polynomial expressed as the sum of six monomials, which is well approximated by a P-SympNet with $k=8$ layers of $d=3$.  

In the $h=0.1$ experiment (figure \ref{henon b}), we notice a different trend with the P-SympNets. Here, the inverse modified is less accurately approximated by a cubic polynomial due to the fact that the higher order terms in the BCH expansion \eqref{true ham from inv mod ham} now contribute more significantly to the overall expression. This is reflected by the fact that when we increase the number of parameters, including the degree of P-SympNet, the accuracy improves. 

The GR, G, H and R-SympNets perform equally well, and it is difficult to distinguish them in terms of their performance in this experiment. One thing to note is that these methods do not improve their accuracy when the hyperparameters increase, which could mean that the architectures are difficult to optimize. 

\begin{figure} 

	\centering  
	\begin{subfigure}{0.32\linewidth}
		\centering
		\includegraphics[width=\linewidth]{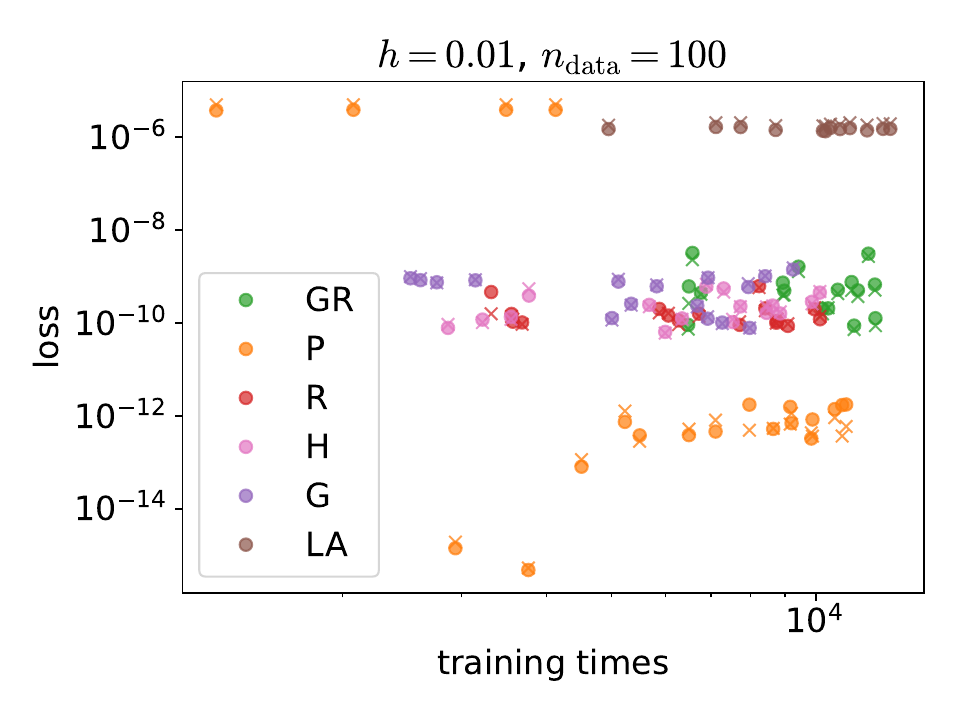}
		\caption{Smaller timestep.}\label{henon a}
	\end{subfigure}
	\begin{subfigure}{0.32\linewidth}
		\centering
		\includegraphics[width=\linewidth]{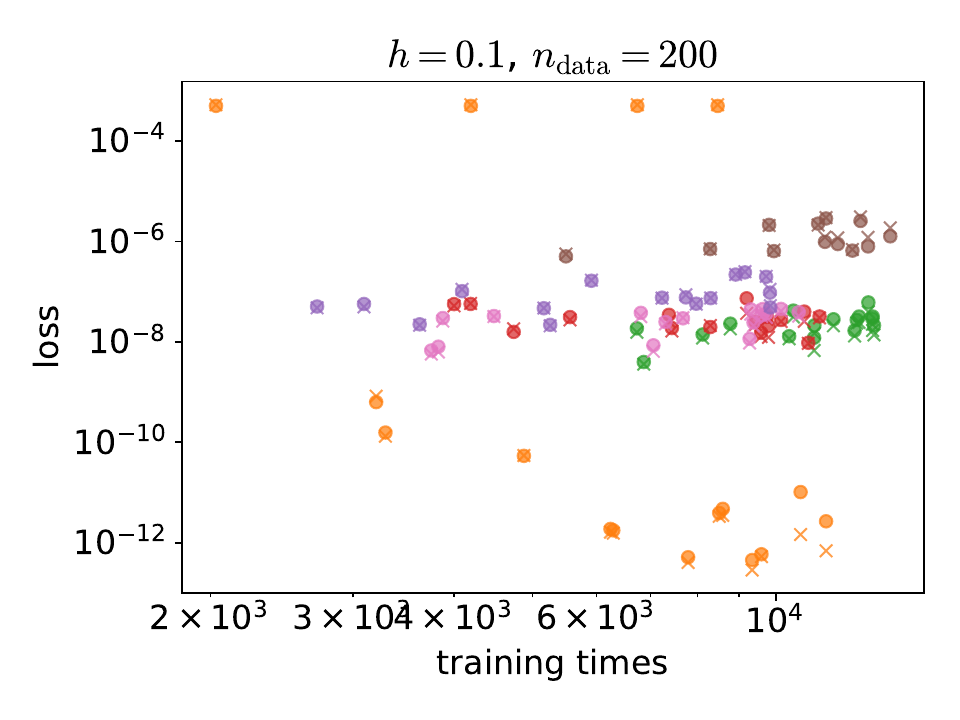}
		\caption{Larger timestep.}\label{henon b}
	\end{subfigure}
		\vskip -0.1in
	\caption{The Hénon-Heiles system experiments. Each point plots the training set errors (crosses) and test set errors (dots) plotted against training time for different choices of hyperparameters.} \label{henon expt}
	\vskip -0.1in
\end{figure}

\subsubsection{Fermi-Pasta-Ulam system}
We also perform a similar experiment on the Fermi-Pasta-Ulam system, which is a separable, polynomial Hamiltonian with $n=4$ degrees of freedom given by
$$H = \sum_{i=1}^{n} \frac{1}{2}p_i^2 + \frac{1}{2}(q_{i+1}-q_i)^2+\frac{1}{4}(q_{i+1}-q_i)^4,$$
where $q_{n+1}:=0$. We train the SympNets on a data set with $n_{\text{data}}=400$ and $h=0.01$. The results are presented in figure \ref{fpu expt}. Like the Hénon-Heiles system, the P-SympNets consistently outperform the other methods other than the P-SympNets with quadratic ridge functions, which make a linear approximation to the solution of the Hamilton ODE. 
\begin{figure}
	\centering  
	% \begin{subfigure}{\linewidth}
	% 	\centering  
	% 	\includegraphics[width=0.32\linewidth]{results/hyperparams/fpu4/experiment-0/figs/training_times_fpu4_nsols=200_h=1.0e-02.pdf}
	% 	\caption{} 
	% \end{subfigure} 
	\begin{subfigure}{\linewidth}
		\centering
		\includegraphics[width=0.32\linewidth]{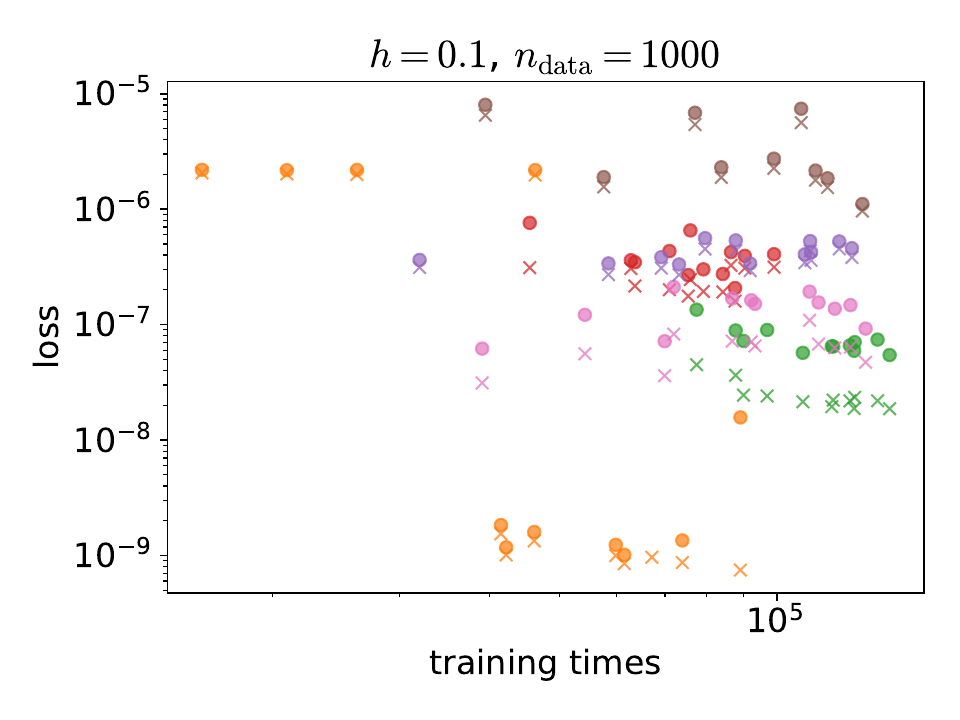}
	\end{subfigure}
	\vskip -0.1in
	\caption{Fermi-Pasta-Ulam experiment. The training set errors (crosses) and test set errors (dots) each point represents a different choice of hyperparameters.} \label{fpu expt}
	\vskip -0.1in
\end{figure}

\subsection{Linear Hamiltonian systems}\label{sec: linear hamiltonian experiments}
One goal of this experiment is to test how the SympNets perform when the data is generated by a linear Hamiltonian system. In particular, the representation Theorem \ref{thm: representation of linear hamiltonian flows} for P-SympNets suggests that we can find an exact parameterization of the true map. To this end, we consider higher-dimensional data sets generated by a quadratic Hamiltonians. 

\subsubsection{A dense linear system}
Here we consider a data set corresponding to a dense linear Hamiltonian system in $2n=20$ dimensions of the form
$$H= \frac{1}{2}x^TAx$$
where $A=I+S\in\R^{{2n}\times {2n}}$, where $S_{ij}=S_{ji}$ are uniformly distributed between $[0,1]$. In this situation, we expect a P-SympNet to be able to parameterize the exact solution with $k=20$ layers. 

Indeed, this is confirmed by the errors from figure \ref{linear20}. Here we see the errors of the P-SympNet are often on the order of $10^{-20}$. Next, we train several P-SympNets with $k=10,11,...,30$ layers and plot the best test and training errors in figure \ref{linear20 layers}. We see that the P-SympNets with $k \ge 20$ layers learn the exact solution to machine precision, supporting the results Theorem \ref{thm: representation of linear hamiltonian flows}.

We mention that the LA SympNets also do a remarkable job here, consistently reaching test losses of $O(10^{-15})$. As before, the G, H and R-SympNets all reach roughly the same errors regardless of our choice of hyperparameters. 

Next, we train 20 quadratic P-SympNets with $d=2$ and $10 \le k \le 30$ on the same data set and plot the best test and training losses as a function of the number of layers $k$. In figure \ref{linear20 layers} we see that the P-SympNets with $k\ge 20$ layers learn the map to machine precision as predicted by Theorem \ref{thm: representation of linear hamiltonian flows} $(iii)$.

\subsubsection{A wave-like Hamiltonian system}

Next we consider the wave-like Hamiltonian system in $2n=100$ of the form 
 $$  
H = \frac{1}{2}p^Tp + \frac{1}{2}q^TAq
$$
where $A=A^T\in\R^{50\times 50}$ is a Laplacian matrix with $A_{i,i}=-2$ on the main diagonal and $A_{i, i+1}=A_{i+1, i}=1$ on the off-diagonals. We train several quadratic P-SympNets with $d=2$ with layers $k=20, 40, ..., 200$. The train and test losses are plotted as a function of the number of layers $k$ in figure \ref{linear50}. We see that the P-SympNets with $k\ge 100$ learns the exact solution very high precision, which again supports Theorem \ref{thm: representation of linear hamiltonian flows} $(iii)$.

\begin{figure}   
	\centering 
	\begin{subfigure}{0.32\linewidth}  
        \centering
		\includegraphics[width=\linewidth]{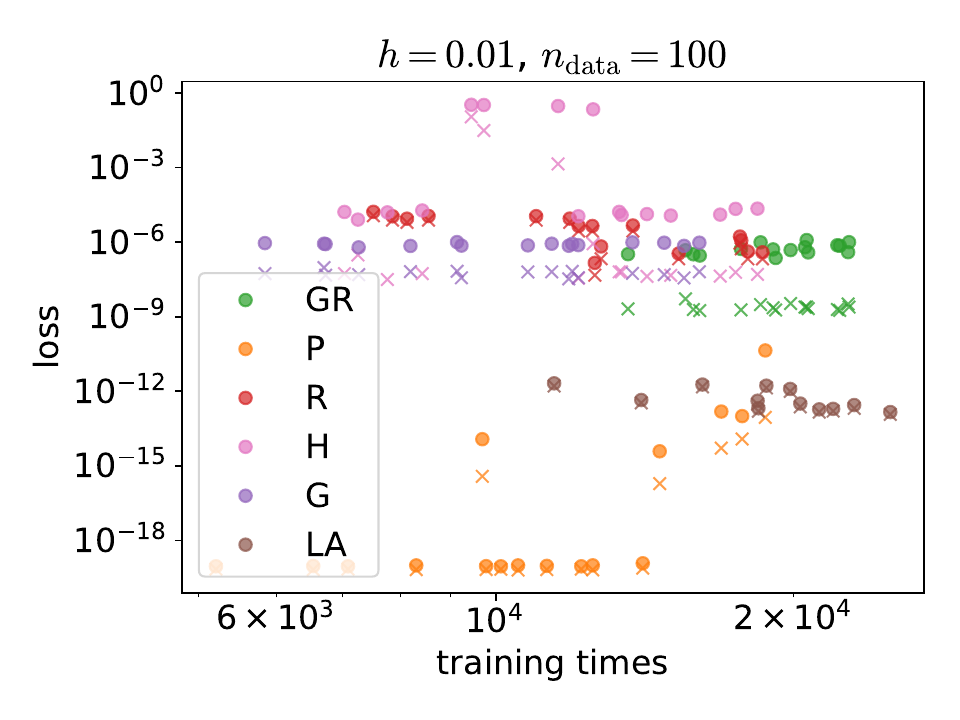}
		\caption{The dense linear system.} \label{linear20}	
	\end{subfigure}
	\begin{subfigure}{0.32\linewidth}
		\centering
        \includegraphics[width=\linewidth]{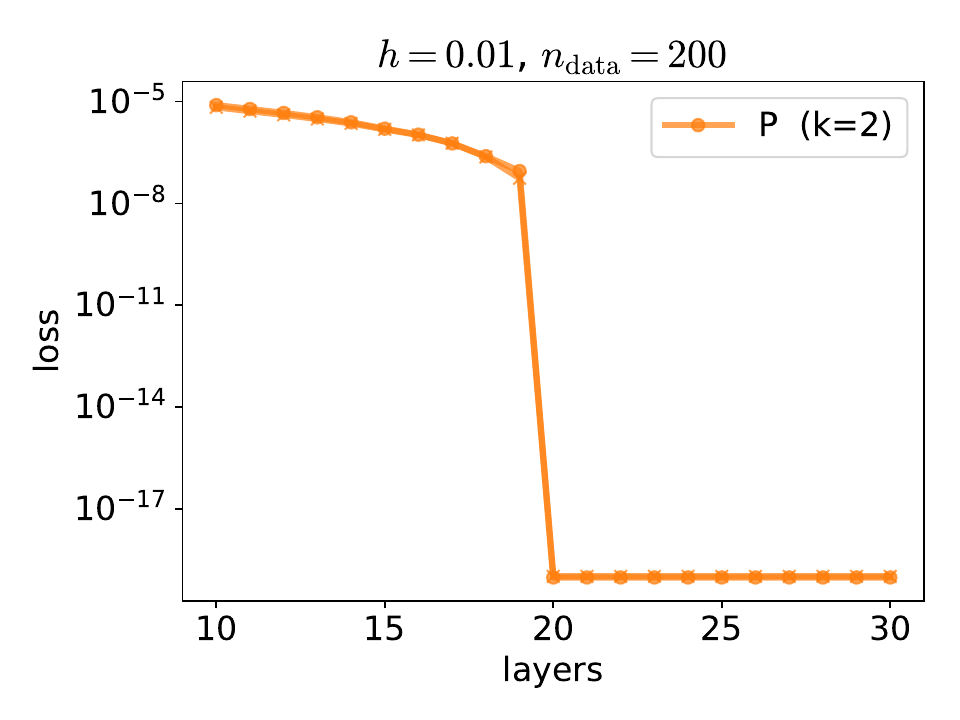}
		\caption{The dense linear system.} \label{linear20 layers} 
	\end{subfigure}
		\begin{subfigure}{0.32\linewidth}
		\centering
        \includegraphics[width=\linewidth]{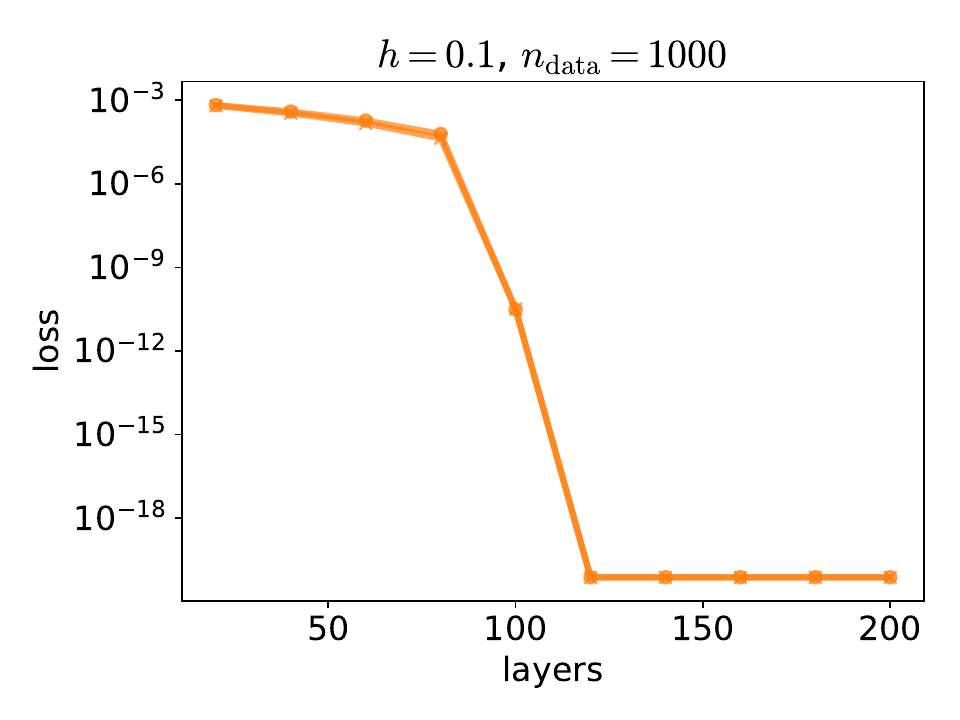}
		\caption{The wave-like system.} \label{linear50}
	\end{subfigure}
	\caption{ Figures (b) and (c) are the train and test set errors for the linear Hamiltonian systems plotted against the number of layers of a P-SympNet. Note that the training and test losses are roughly the same values for each model and are indistinguishable at this scale.} \label{linear expt}
\end{figure}

\subsection{The double pendulum}

We now look at three data sets generated by the double pendulum Hamiltonian, with
$$H = \frac{p_1^2 + 2 p_2^2 - 2 p_1 p_2 \cos(q_1 - q_2)}{2 (1 + \sin^2(q_1 - q_2))} - 2 \cos(q_1) - \cos(q_2).$$
This is a non-quadratic, non-separable, non-polynomial Hamiltonian with $n=2$ degrees of freedom. For these reasons, and the fact that this system exhibits chaotic motion, we consider this to be a one of the most challenging dynamical systems to learn. The results are presented in figure \ref{double pendulum expt}.

A common observation among all three experiments here is that the P-SympNets consistently achieve lower errors for lower cost, followed by the GR-SympNets. In addition, we observe a positive correlation between training time and error, meaning that the P-SympNet, and the GR-SympNet to a lesser extent, effectively use their parameters to better approximate the inverse modified Hamiltonian. This is in stark contrast to the almost all the other methods, which do not improve much when the number of parameters increases.  Remarkably, the P-SympNet achieves test set errors of order $10^{-7}$ for large time steps of $h=1$, meaning that it approximates the map $\phi_1^H$ to about 3 decimal places on this domain. 

Furthermore, we notice now that the G and LA-SympNets are not able to learn effective solutions to the true map, even for small time-steps. This could be due to the fact that the inverse modified Hamiltonian is not separable as we suggested earlier. 

\begin{figure}
	\centering    
    \begin{subfigure}{0.32\linewidth}
        \centering
        \includegraphics[width=\linewidth]{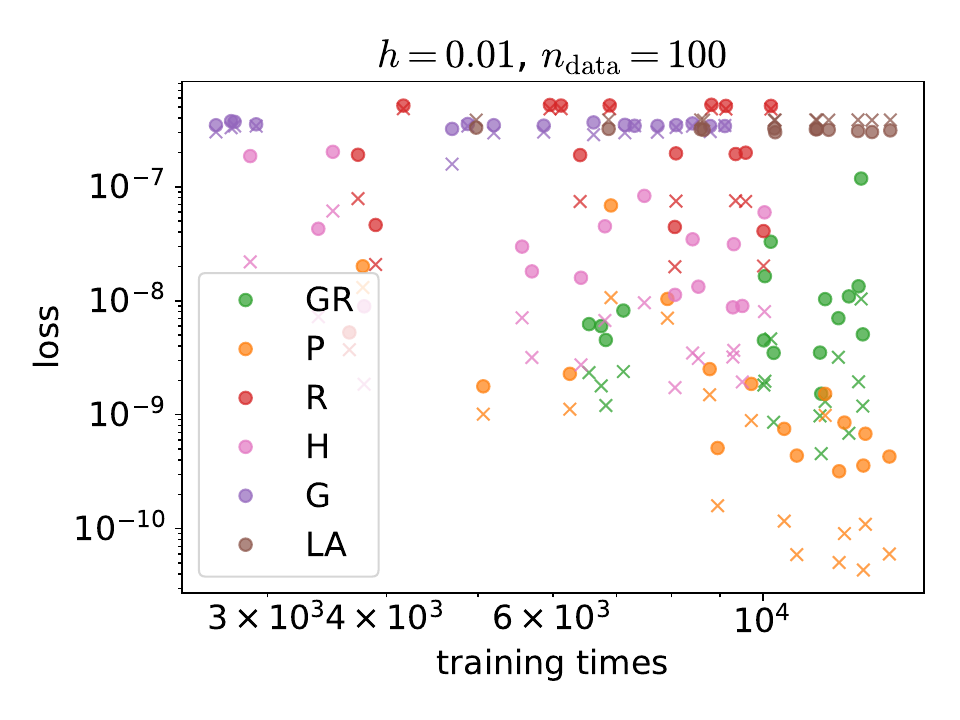}
        \caption{}\label{double pendulum a}
	\end{subfigure}
	\begin{subfigure}{0.32\linewidth}
		\centering
		\includegraphics[width=\linewidth]{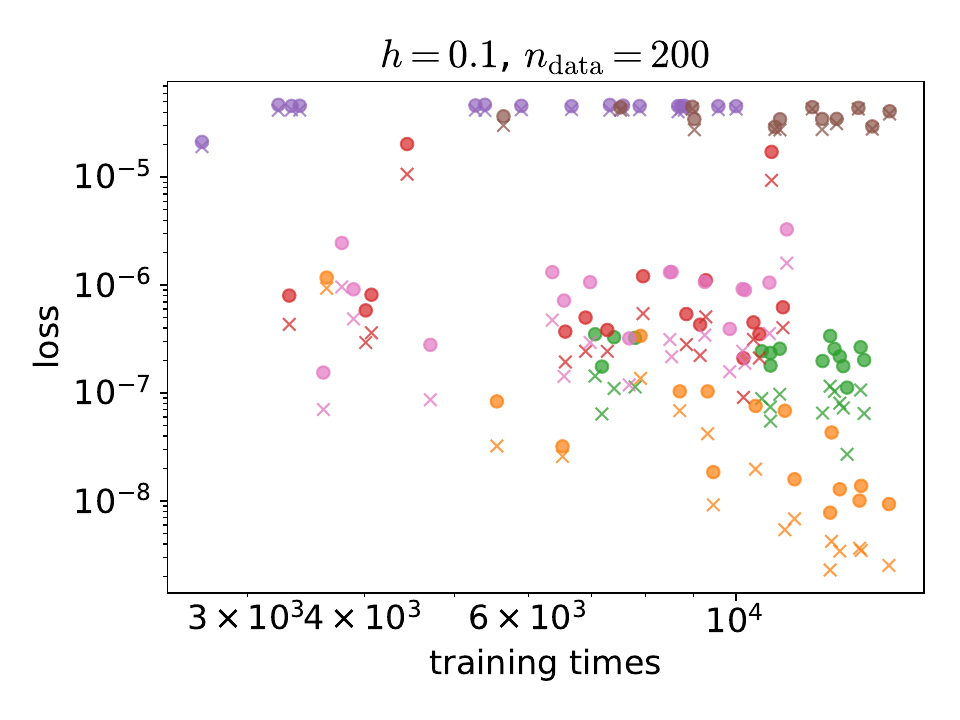}
		\caption{} 
		\end{subfigure}
	\begin{subfigure}{0.32\linewidth}
		\centering
		\includegraphics[width=\linewidth]{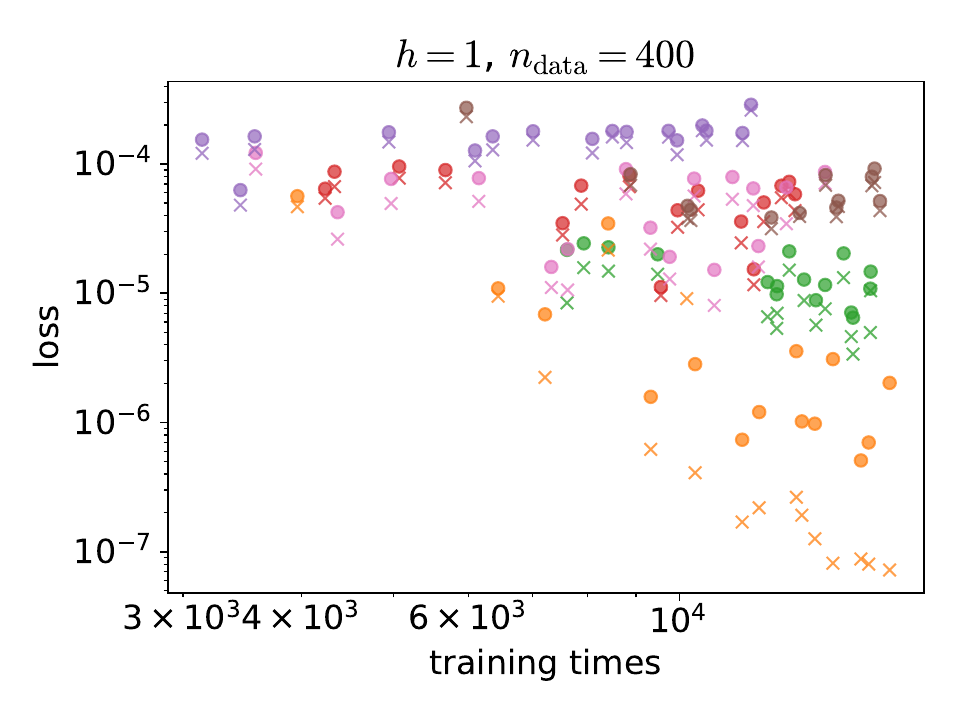}
		\caption{}
    \end{subfigure}
	\vskip -0.1in
	\caption{Double pendulum experiments. The training set errors (crosses) and test set errors (dots) each point represents a different choice of hyperparameters.} \label{double pendulum expt}
	\vskip -0.1in
\end{figure}

\subsection{Learned parameters}\label{sec: learned parameters}
One of the advantages of using P and R-SympNets is that according to Lemma \ref{lemma: small basis} $(ii)$, the SympNets can learn parameter values close to zero. To investigate the validity of this claim, we will look at the distribution of parameters that the SympNets learn in some previous experiments.

In figure \ref{fig: params} we plot the average values of the parameters that the SympNets learned for one of the double pendulum experiments, the dense linear Hamiltonian experiment and one of the Hénon-Heiles experiments. That is, two non-separable Hamiltonian systems and one separable system. The error bars denote the maximum and minimum values of the learned parameters. The main observation here is that the SympNets that use separable flows, namely the LA, G and H-SympNets require learning relatively large values. Whereas the P and R-SympNets learn smaller value. However, this difference between the models is not as pronounced for separable Hamiltonians as can be seen by figure \ref{henon params}. An exception to this trend are LA-SympNets, which learns small parameter values for the double-pendulum experiment but a poor approximation to true flow as seen by the large training and test sets errors in figure \ref{double pendulum a}. This could be due to the fact that the parameters are initialized by random values close to zero. We suspect that if the optimal parameters are found such that the LA-SympNet learns a map with the same test and train loss as the other models, then the learned parameters would be very large. 

\begin{figure}
	\centering 
    \begin{subfigure}{0.32\linewidth}
        \centering
        \includegraphics[width=\linewidth]{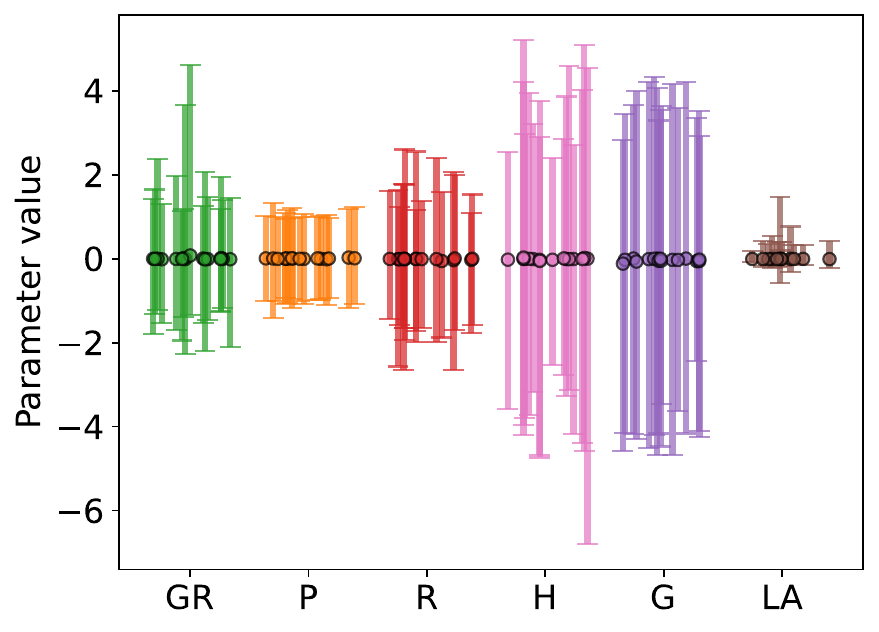}
        \caption{Double pendulum experiment from figure \ref{double pendulum a}.}
	\end{subfigure}
    \begin{subfigure}{0.32\linewidth}
        \centering
        \includegraphics[width=\linewidth]{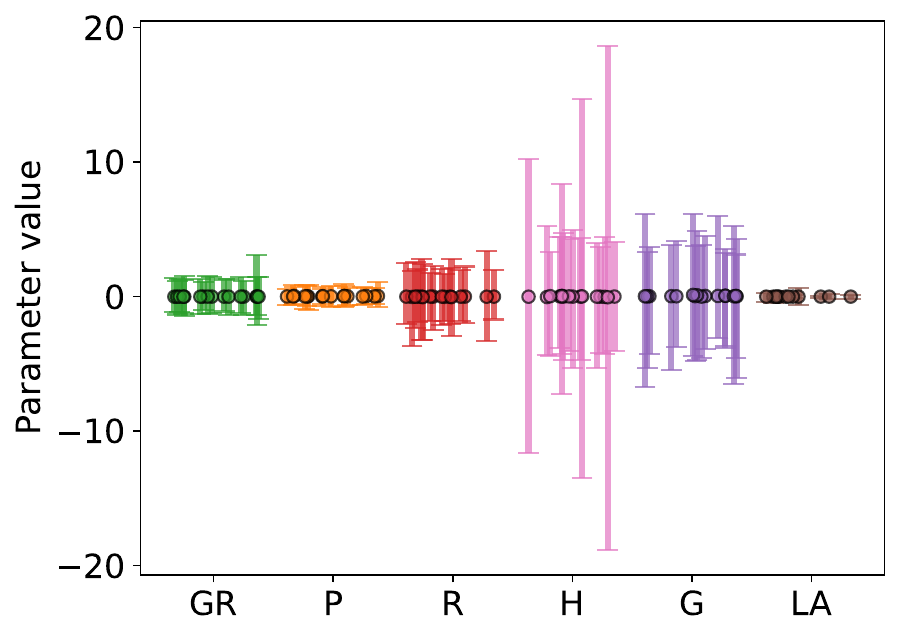}
        \caption{Dense linear Hamiltonian experiment from figure \ref{linear20}.}
	\end{subfigure}
    \begin{subfigure}{0.32\linewidth}
        \centering
        \includegraphics[width=\linewidth]{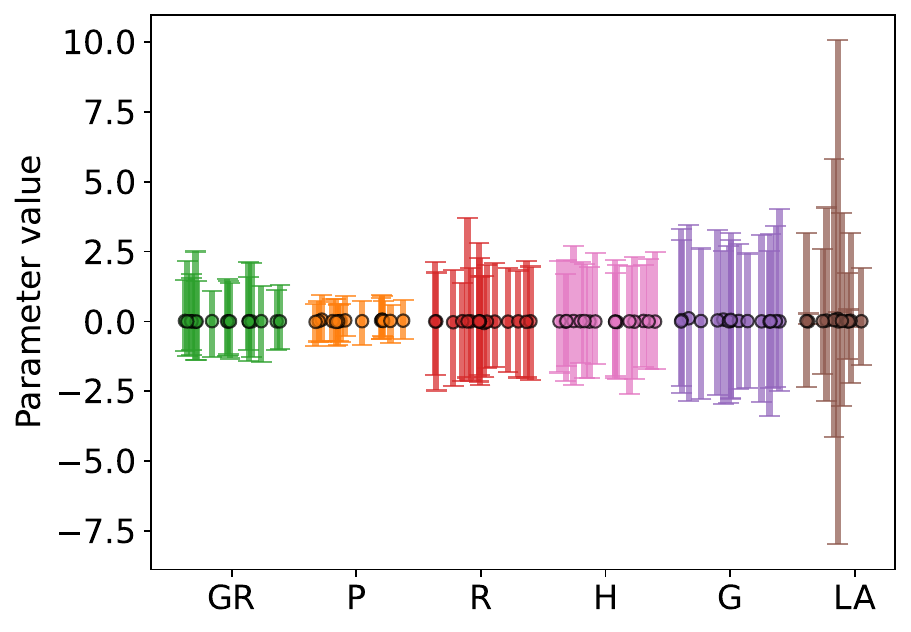} 
        \caption{Hénon-Heiles experiment for figure \ref{henon a}.}\label{henon params}
	\end{subfigure} 
	\vskip -0.1in
	\caption{The distribution of the learned parameters for the SympNets for various experiments. The dots are the average values and the error bars are the range that the maximum and minumum values take at the end of the training process. } \label{fig: params}
	\vskip -0.1in
\end{figure}

\section{Symbolic Hamiltonian regression with P-SympNets}\label{sec: symbolic regression}

In this section we will take advantage of property \ref{property 2 small basis} and calculate increasingly accurate approximations to the true Hamiltonian by expanding the inverse modified Hamiltonian in the BCH series. 

We denote by $\mathcal{B}^p_{H_i}$ the truncated $k=2$ backward error map defined by 
\begin{align}
\mathcal{B}^p_{\bar{H}_i}(\bar{H}_j) = & \bar{H}_i + \bar{H}_j + \frac{h}{2}\{\bar{H}_i, \bar{H}_j\} +\dfrac{h^2}{12}(\{\bar{H}_i, \{\bar{H}_i, \bar{H}_j\}\} + \{\bar{H}_j, \{\bar{H}_j, \bar{H}_i\}\}) + \dots + h^{p}\bar{H}^{[p]}, 
\end{align} 
which denotes the first $p$ terms of the modified Hamiltonian of the $k=2$ map. Note that when the basis Hamiltonians are polynomial of degree $d$, the degree of each term in the expansion increases like $O(p(d-1))$. For a $k>2$ map we can use induction to approximate the true Hamiltonian up to $O(h^p)$ by the following.
\begin{definition}[Truncated backward error map]
	Given some $p\ge1$ and  a map of the form \eqref{lie trotter map} such that it is the exact flow of a Hamiltonian system $\Phi_h^{\bar{H}}=\phi_h^{H}$ for the basis Hamiltonians $\{\bar{H}_i\}_{i=1}^k$, then the Hamiltonian $H$ can be approximated to $O(h^{p+1})$ by the truncated backward error map
	\begin{equation}\label{ibea}
	\mathcal{B}^p(\bar{H}_k,\dots,\bar{H}_1) := \mathcal{B}^p_{\bar{H}_k}\circ\dots\circ \mathcal{B}^p_{\bar{H}_1}(0)
	\end{equation}
	where $\mathcal{B}^p_{\bar{H}_k}(0)=\bar{H}_k$ and satisfies 
	\begin{equation}
	\|H - \mathcal{B}^p(\bar{H}_k,\dots,\bar{H}_1)\|\le  O(h^{p+1}).
	\end{equation}
\end{definition}

Note that in the realistic case where we find a non-exact approximation to $\bar{H}$ such that $\Phi_h^{\bar{H}} \approx \phi_h^{H}$ then we can still approximate the true Hamiltonian up to the level of accuracy that we have learned $\bar{H}$. In practice the derivatives in \eqref{ibea} can be computed using automatic differentiation if symbolic computations are slow. 

We note that if the map \eqref{lie trotter map} is smooth and close to the identity then by \citep[Lemma 5.3][]{hairer2006geometric} it is a solution to the Hamiltonian-Jacobi equations, and therefore the modified Hamiltonian $\bar{H}$ is guaranteed to exist. This is also a result of the fact that compositions of Hamiltonian flows are also Hamiltonian flows  \citep{polterovich2012geometry}. 

In this section we will reconstruct the exact Hamiltonians from the learned inverse modified Hamiltonians of P-SympNets. We will take P-SympNets trained on the polynomial Hamiltonians and apply the backward error map \eqref{ibea} to the learned basis Hamiltonians. 
 
Expressing the true Hamiltonian in its monomial basis up to degree $d$ we have
\begin{equation}
	H(x) = \sum_{|\mathbf{a}|\le d}^p \alpha_\mathbf{a} \mathbf{x}^{\mathbf{a}}\in\Pi_d^n
\end{equation}
where $p=\dim (\Pi_d^n)=\binom{n+d}{d}$ and $\mathbf{a}\in\mathbb{N}^n_+$ is a multi-index such that $\mathbf{x}^{\mathbf{a}} = x_1^{a_{1}}x_2^{a_{2}}\dots x_n^{a_{n}}$ and $|\mathbf{a}|=a_1 + ... + a_n$. The learned Hamiltonian can be expressed in the same basis 
\begin{equation}
	H^\theta(x) = \sum_{|\mathbf{a}|\le d}^p \alpha^\theta_\mathbf{a} \mathbf{x}^{\mathbf{a}}
\end{equation}
and the mean absolute error (MAE) for the coefficients of the learned polynomials are calculated by
\begin{equation}
	\overline{\Delta \alpha} = \frac{1}{n_{\text{nonzero}}} \sum_{\substack{i=1\\\alpha_i^{\theta}\ne 0}}^{n_{\text{nonzero}}}\left|\alpha^{\theta}_i - \alpha_i\right|
\end{equation}
where $n_{\text{nonzero}}$ are the number of non-zero coefficients in the true and learned Hamiltonians. 

The truncated backward error map is implemented in the \texttt{strupnet} Python package as a method in the \texttt{SympNet} class. 

\subsection{Double mass-spring system}
To illustrate the method, we train a P-SympNet on the following quadratic Hamiltonian, corresponding to a double mass-spring system with $n=2$ degrees of freedom 
\begin{equation} 
	H(x) = 0.5 x_0^2 + 0.5x_1^2 + 0.6 x_2^2 -0.4 x_2x_3 + 0.6x_3^2
\end{equation}
A degree $d=2$, $k=6$ layer P-SympNet is trained on a data set of 200 initial conditions, with timestep $h=0.02$ until it reaches a train and test set MSE loss of $O(10^{-18})$. Indicating it has learned the exact map up to a MAE of about $O(10^{-9})$. We then apply the backward error map corrections \eqref{ibea} up to $p=5$ to the learned Hamiltonian basis functions $\bar{H}^{\theta}_1,...,\bar{H}^{\theta}_k$. These are presented in Table \ref{double oscillator table} along with the MAE of the coefficients of the learned Hamiltonian. Indeed, we can recover the coefficients of the learned Hamiltonian to within about nine digits of precision.

\begin{table*}[h!]
	\centering
	\scriptsize
	\renewcommand{\arraystretch}{1.5}  % Adjust row spacing here 
	\begin{tabular}{c|c|c}
		Order & $\mathcal{B}^{ p }(\bar{H}^{\theta}_k,...,\bar{H}^{\theta}_1)$ & $\overline{\Delta \alpha}$\\
		\hline 
		$p=0$ & $0.5138323711 x_{0}^{2} + 0.4936377383 x_{1}^{2} + 0.5896792413 x_{2}^{2} - 0.3954104768 x_{2} x_{3} + 0.6007668497 x_{3}^{2} +O(\overline{\Delta \alpha})$ & 6.0e-3\\
		$p=1$ & $0.4997101493 x_{0}^{2} + 0.4997643881 x_{1}^{2} + 0.5998401567 x_{2}^{2} - 0.3999735820 x_{2} x_{3} + 0.6000121882 x_{3}^{2} +O(\overline{\Delta \alpha})$ & 1.6e-4\\
		$p=2$ & $0.5000077298 x_{0}^{2} + 0.4999952029 x_{1}^{2} + 0.5999965388 x_{2}^{2} - 0.4000019953 x_{2} x_{3} + 0.6000011669 x_{3}^{2} +O(\overline{\Delta \alpha})$ & 3.9e-6\\
		$p=3$ & $0.4999997882 x_{0}^{2} + 0.4999998107 x_{1}^{2} + 0.5999999318 x_{2}^{2} - 0.4000000419 x_{2} x_{3} + 0.5999999992 x_{3}^{2} +O(\overline{\Delta \alpha})$ & 1.1e-7\\
		$p=4$ & $0.5000000080 x_{0}^{2} + 0.4999999986 x_{1}^{2} + 0.5999999968 x_{2}^{2} - 0.3999999962 x_{2} x_{3} + 0.5999999998 x_{3}^{2} +O(\overline{\Delta \alpha})$ & 3.0e-9\\
		$p=5$ & $0.5000000023 x_{0}^{2} + 0.5000000026 x_{1}^{2} + 0.5999999981 x_{2}^{2} - 0.3999999941 x_{2} x_{3} + 0.5999999993 x_{3}^{2} +O(\overline{\Delta \alpha})$ & 2.3e-9	\\
		% PySINDY & $0.5000175211 x_{0}^{2} + 0.5000175183 x_{1}^{2} + 0.6000236697{2}^{2} - 0.4000283937 x_{2} x_{3} + 0.6000236715 x_{3}^{2} +O(\overline{\Delta \alpha})$ & 3.6e-5	\\
	\end{tabular}

	\caption{The corrected Hamiltonians for the double oscillator system with $H(x) = 0.5 x_0^2 + 0.5x_1^2 + 0.6 x_2^2 -0.4 x_2x_3 + 0.6x_3^2$. The term $O(\overline{\Delta \alpha})$ denotes monomials whose coefficients are of magnitude proportional to the numbers on the right-most column.}\label{double oscillator table}
\end{table*}

Note that due to the fact that the MAE loss is about $O(1e-9)$, which is also the accuracy of the $p=4$ backward error map approximation, increasing the order of the approximation to $p=5$ does not improve the accuracy. That is, the accuracy of the Hamiltonian cannot exceed that of the learned map (to within $O(h)$).

\subsection{Hénon-Heiles system}
We now take a P-SympNet trained on the Hénon-Heiles Hamiltonian presented in the previous section that achieved a test and train loss of about $O$(1e-12). We apply the backward error map up to degree $p=3$ to the learned basis Hamiltonians and present the learned corrections in table \ref{henon table}. In this case, we can recover the coefficients of the learned Hamiltonian to within about 5 digits of precision, which is about the accuracy that we have learned the exact map.
\begin{table*}[h!]
	\centering
	\scriptsize
	\renewcommand{\arraystretch}{1.5}  % Adjust row spacing here 
	\begin{tabular}{c|c|c}
		Order & $\mathcal{B}^{ p }(\bar{H}^{\theta}_k,...,\bar{H}^{\theta}_1)$ & $\overline{\Delta \alpha}$\\
		\hline 
	$p=0$ & $0.49404 x_{0}^{2} + 0.49368 x_{1}^{2} + 1.00437 x_{2}^{2} x_{3} + 0.51055 x_{2}^{2} + 0.99338 x_{3}^{3} + 0.49968 x_{3}^{2} +O(\overline{\Delta \alpha})$ & 1.6e-2  \\
	$p=1$ & $0.49965 x_{0}^{2} + 0.49940 x_{1}^{2} + 0.99817 x_{2}^{2} x_{3} + 0.49981 x_{2}^{2} + 0.99811 x_{3}^{3} + 0.49985 x_{3}^{2} +O(\overline{\Delta \alpha})$ & 7.1e-4  \\
	$p=2$ & $0.50003 x_{0}^{2} + 0.50003 x_{1}^{2} + 1.00009 x_{2}^{2} x_{3} + 0.50015 x_{2}^{2} + 1.00004 x_{3}^{3} + 0.50007 x_{3}^{2} +O(\overline{\Delta \alpha})$ & 9.7e-5  \\
	$p=3$ & $0.50004 x_{0}^{2} + 0.50004 x_{1}^{2} + 1.00011 x_{2}^{2} x_{3} + 0.50014 x_{2}^{2} + 1.00009 x_{3}^{3} + 0.50007 x_{3}^{2} +O(\overline{\Delta \alpha})$ & 2.9e-5\\
	\end{tabular}

	\caption{The corrected Hamiltonians for the Hénon-Heiles system $H = 0.5x_0^2+0.5x_1^2 + 0.5x_2^2+0.5x_3^2 + x_2^2 x_3 + x_3^3$. The term $O(\overline{\Delta \alpha})$ denotes monomials whose coefficients are of magnitude proportional to the numbers on the right-most column. }\label{henon table}
\end{table*}

\section{Discussion and conclusion}
One of the common observations throughout the experiments is that increasing the number of parameters, e.g., by increasing the layers and/or widths, does not necessarily increase the accuracy of the models. This can be seen by the fact that the models do not usually become more accurate as the training time increases. This is somewhat at odds with the fact that the models are all universal approximators for Hamiltonian flows. However, this suggests that there might be challenges with the training, and more specialized methods could more effectively find the optimal hyperparameters for the problem. We have suggested a possible explanation for this, namely the fact that models might require learning large parameter values. Hence, it is possible that one simply needs to train the models for a lot longer. A notable exception to this are the P-SympNets that often increases in accuracy with the training cost and number of parameters. This could be due to reasons explained in the following paragraph. 

In general, the P-SympNets consistently outperform the other models in terms of cost, accuracy, and generalizability. There are a number of potential reasons that could explain this. One, due to the fact that P-SympNets can represent linear Hamiltonian flows exactly, data sets that are dominated by linear dynamics and have perturbative or smaller nonlinear terms could be learned more effectively. Two, all the Hamiltonians used in the training data are polynomial or analytic functions, which are in a sense the ``smoothest" class of functions one could conceive. So approximating the inverse modified Hamiltonian with a polynomial is in a sense introducing a ``smoothness" inductive bias into the model that evidently works very well. It would be interesting to see whether P-SympNets perform equally well on Hamiltonians that are not analytic. However, in \cite{lin2017does}, the argument is made that many physical systems arise from low order polynomials, meaning that it could be wise to inform our neural models of this fact. It would therefore make sense to use P-SympNets in this context.  
 
When it comes to distinguishing between the G, H and R-SympNets, it is difficult to single out any general trends across all experiments. For separable systems (e.g., the Hénon-Heiles and Fermi-Pasta-Ulam systems) these methods are similar in terms training costs and accuracy. This could mean that models with a separable inverse modified Hamiltonian (G, LA and H-SympNets) are a sensible choice when the dynamics is generated by a separable Hamiltonian process. However, it can be seen that the GR-SympNet performs very well for the double pendulum experiment, which is likely due to its increased expressiveness compared to the G-SympNet and the fact that the true Hamiltonian is non-separable. However, this performance is nearly matched by the H-SympNet in many cases. 

A logical question to ask is how the models handle noise in the data. We remark that all the SympNets presented in this paper are amenable to methods such as initial state optimization \citep{chen2019symplectic} or mean inverse integration \citep{noren2023learning}, which have been shown to effectively reduce the effect of noise in the training process. We haven't considered noisy data sets in this paper as we believe it doesn't make sense to consider noisy data sets without adapting our methods to the above methods. However, it is a natural next step to adapt our methods to handle realistic systems, including dissipation and noise. 
 
One theoretical question that remains open is whether the generating Hamiltonian set for GR-SympNets (i.e., $\mathcal{H}$ from Definition \ref{def:grsympnet}) is dense in compact subsets of $C^1(\R^{2m})$. This would explain the fact that they learn reasonably small parameters as seen in Section \ref{sec: learned parameters} as well as explain their efficiency in approximating Hamiltonian flows and provide an alternate proof of their universality according to Theorem \ref{thm: universality}. 
 
\section*{Acknowledgements} 
The author would like to thank Zhen Zhang, Davide Murari and Sølve Eidnes for helpful discussions and comments. This work has received funding from the PRAI project (308832), and the Norwegian Research Council. 
%%%%%%%%%%%%%%%%%%%%%%%%%%%%%%%%%%%%%%%%%%%%%%%%%%%%%%%%%%%%%%%%%%%%%%%%%%%%%%%
%%%%%%%%%%%%%%%%%%%%%%%%%%%%%%%%%%%%%%%%%%%%%%%%%%%%%%%%%%%%%%%%%%%%%%%%%%%%%%%
% APPENDIX
%%%%%%%%%%%%%%%%%%%%%%%%%%%%%%%%%%%%%%%%%%%%%%%%%%%%%%%%%%%%%%%%%%%%%%%%%%%%%%%
%%%%%%%%%%%%%%%%%%%%%%%%%%%%%%%%%%%%%%%%%%%%%%%%%%%%%%%%%%%%%%%%%%%%%%%%%%%%%%%
\appendix

\section{Proof of Theorem \ref{thm: universality}}\label{sec:proof universality}
Our approach here is similar to universality results given in \cite{celledoni2023dynamical}. First we recall a lemma by Gronwall, which bounds the difference between two solutions of an ODE.
\begin{lemma}[Gronwall's inequality]\citep{howard1998gronwall}\label{lem: gronwall}
	Let ${X}$ be a Banach space and $U \subset {X}$ an open set. Let $f, g : [0, T] \times U \to {X}$ be continuous functions and let $y, z: [0, T] \to U$ satisfy
\begin{align}
\dot{y}(t) &= f(t, y(t)), \\
\dot{z}(t) &= g(t, z(t)),
\end{align}
for $ y(0) = y_0$ and $ z(0) = z_0$. Also assume there is a constant $C \ge 0$ so that
\begin{equation}
\|g(t, x_2) - g(t, x_1)\| \le C \|x_2 - x_1\| 
\end{equation}
{and a continuous function $\rho: [0, T] \to [0, \infty)$ so that}
\begin{equation}
\|f(t, y(t)) - g(t, y(t))\| \le \rho(t).
\end{equation}

Then for $t \in [0, T]$
\begin{equation}
\|y(t) - z(t)\| \le e^{C|t|} \|y_0 - z_0\| + e^{C|t|} \int_0^t e^{-C|s|} \rho(s) \, ds.
\end{equation}
\end{lemma}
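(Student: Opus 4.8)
The plan is to reduce the statement to the classical integral form of Gronwall's inequality and then solve the resulting differential inequality with an integrating factor. First I would rewrite both ODEs in integrated form, subtract, and take norms: setting $u(t) = \|y(t) - z(t)\|$, the fundamental theorem of calculus gives $y(t) - z(t) = (y_0 - z_0) + \int_0^t \left( f(s, y(s)) - g(s, z(s)) \right) ds$, so that $u(t) \le \|y_0 - z_0\| + \int_0^t \| f(s, y(s)) - g(s, z(s)) \| \, ds$.

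The key manipulation is to split the integrand through the triangle inequality as $f(s, y(s)) - g(s, z(s)) = \big( f(s, y(s)) - g(s, y(s)) \big) + \big( g(s, y(s)) - g(s, z(s)) \big)$. The first bracket is controlled by the hypothesis $\|f(t, y(t)) - g(t, y(t))\| \le \rho(t)$ (note this is exactly the bound evaluated along the trajectory $y$, which is why assuming it only there suffices), and the second by the global Lipschitz bound on $g$, giving $\| g(s, y(s)) - g(s, z(s)) \| \le C \, u(s)$ since $y(s), z(s) \in U$. Collecting these yields the scalar integral inequality $u(t) \le \|y_0 - z_0\| + \int_0^t \rho(s)\, ds + C \int_0^t u(s) \, ds$.

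To finish I would introduce the majorant $v(t) := \|y_0 - z_0\| + \int_0^t \rho(s)\, ds + C \int_0^t u(s)\, ds$, which by construction satisfies $u \le v$ and is continuously differentiable (both $\rho$ and $u$ are continuous, the latter because $y, z$ are $C^1$ solutions). Differentiating and using $u \le v$ gives the differential inequality $v'(t) = \rho(t) + C u(t) \le \rho(t) + C v(t)$, equivalently $\frac{d}{dt}\left( e^{-Ct} v(t) \right) \le e^{-Ct} \rho(t)$. Integrating from $0$ to $t$ with $v(0) = \|y_0 - z_0\|$ and rearranging produces $v(t) \le e^{Ct}\|y_0 - z_0\| + e^{Ct}\int_0^t e^{-Cs}\rho(s)\, ds$, and since $u \le v$ while $t, s \ge 0$ on $[0, T]$ (so $|t| = t$ and $|s| = s$), this is precisely the claimed bound.

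The steps here are all routine; the only point requiring care is the additive split of the integrand, since the two hypotheses are \emph{asymmetric} — the $\rho$ bound is assumed only along $y$, whereas the Lipschitz estimate holds uniformly in both arguments. Pairing $f(s, y(s)) - g(s, y(s))$ with the $\rho$ term and $g(s, y(s)) - g(s, z(s))$ with the Lipschitz term is what makes both hypotheses usable simultaneously, and is the crux of the argument. A secondary technical point is ensuring the majorant $v$ is genuinely differentiable so that the integrating-factor step is legitimate, which follows from continuity of the integrands.
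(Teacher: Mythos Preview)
Your argument is correct and is the standard proof of this form of Gronwall's inequality: integrate, split the integrand asymmetrically to match the two hypotheses, and close with an integrating factor on the resulting scalar differential inequality. The paper itself does not prove this lemma at all---it is stated as a cited result from \cite{howard1998gronwall} and used as a black box in the proof of Lemma~\ref{lemma:flow bound}---so there is no paper proof to compare against; your write-up simply supplies the omitted classical argument.
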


We now proceed by showing that the exact flows on two nearby Hamiltonian systems is bounded. 
\begin{lemma}\label{lemma:flow bound}
	Let $\phi^{H_1}_h$ and $\phi^{H_2}_h$ be the flows of Hamiltonian systems with $H_1=H$ and $H_2=H+\epsilon H_0$ for some $\epsilon>0$. Furthermore, assume that the Hamiltonian vector field $ X_H $ is Lipschitz continuous on $\Omega$ with Lipschitz constant $ L $ and that $\|X_{H_0}\|<M$ is bounded on $\Omega$ by $M$. Then for any $x$ in some compact $\Omega\subset \R^{2n}$ and $h>0$ we have
	\begin{equation}
	\|\phi^{H_1}_h(x)-\phi^{H_2}_h(x)\| \leq  \epsilon M \frac{e^{L h} - 1}{L}.
	\end{equation}
\end{lemma}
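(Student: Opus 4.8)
The plan is to apply Gronwall's inequality (Lemma \ref{lem: gronwall}) directly to the two flows. I would set $z(t) = \phi^{H_1}_t(x)$ and $y(t) = \phi^{H_2}_t(x)$, so that $\dot z(t) = g(t,z(t)) := X_H(z(t))$ and $\dot y(t) = f(t,y(t)) := X_{H_2}(y(t)) = X_H(y(t)) + \epsilon X_{H_0}(y(t))$, with the common initial condition $y(0) = z(0) = x$. Since $X_H$ is assumed Lipschitz on $\Omega$ with constant $L$, the hypothesis of Lemma \ref{lem: gronwall} holds with $C = L$. For the residual term, whenever the trajectory $y(t)$ stays in $\Omega$ we have $\|f(t,y(t)) - g(t,y(t))\| = \epsilon\,\|X_{H_0}(y(t))\| \le \epsilon M$, so we may take the constant function $\rho(t) \equiv \epsilon M$.

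Plugging into the conclusion of Lemma \ref{lem: gronwall}, and using $y(0) - z(0) = 0$ to kill the first term, gives for $t \in [0,h]$
\[
\|y(t) - z(t)\| \le e^{Lt}\,\|y(0)-z(0)\| + e^{Lt}\int_0^t e^{-Ls}\,\epsilon M\,\rd s = \epsilon M\, e^{Lt}\cdot\frac{1-e^{-Lt}}{L} = \epsilon M\,\frac{e^{Lt}-1}{L}.
\]
Evaluating at $t = h$ yields the claimed bound $\|\phi^{H_1}_h(x) - \phi^{H_2}_h(x)\| \le \epsilon M (e^{Lh}-1)/L$, with the usual convention that $(e^{Lh}-1)/L$ is read as $h$ in the limit $L \to 0$.

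The only point requiring care is the domain bookkeeping: Lemma \ref{lem: gronwall} wants the comparison field $g$ to satisfy a Lipschitz bound on the set where the trajectories actually live, whereas $X_H$ is only assumed Lipschitz on the compact $\Omega$. I would handle this in one of two equivalent ways — either (i) observe that it is implicit in the statement (and true for $h$ small enough, uniformly over the compact set of initial data $x$) that $\phi^{H_1}_t(x)$ and $\phi^{H_2}_t(x)$ remain in $\Omega$ for $t\in[0,h]$, so that the bounds $L$ and $M$ apply along the whole trajectory; or (ii) extend $X_H$ to a globally Lipschitz vector field on $\R^{2n}$ that agrees with it on $\Omega$, run the argument there, and note that the flows are unchanged as long as they do not leave $\Omega$. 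This invariance/locality argument is the only real obstacle; once it is in place, the estimate itself is the one-line Gronwall computation above.
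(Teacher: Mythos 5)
Your proof is correct and is essentially identical to the paper's: both identify $X_{H_2}=X_H+\epsilon X_{H_0}$ via linearity of $H\mapsto X_H$ and then apply Gronwall's inequality (Lemma \ref{lem: gronwall}) with $C=L$, $\rho\equiv\epsilon M$, and zero initial discrepancy. The domain-invariance caveat you raise is a fair point that the paper's proof glosses over, but it does not change the argument.
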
 
\begin{proof}
	By definition, the flows satisfy
	\begin{align}
		\frac{d}{dt} \phi^{H_1}_{t}(x) &= X_H(\phi^{H_1}_{t}(x))\\
		\frac{d}{dt} \phi^{H_2}_{t}(x) &= X_{H+\epsilon H_0}(\phi^{H_2}_{t}(x)) =X_{H}(\phi^{H_2}_{t}(x)) + \epsilon X_{H_0}(\phi^{H_2}_{t}(x)) 
	\end{align}
	using the fact that $X_H$ is a linear operator. Applying Gronwall's inequality from Lemma \ref{lem: gronwall} we obtain
	\begin{equation}
	\| \phi^{H_1}_h(x) - \phi^{H_2}_h(x) \| \leq e^{L|t|} \int_0^t e^{-L|s|} \epsilon M \, ds \leq \epsilon M \frac{e^{L h} - 1}{L}.
	\end{equation}

\end{proof}

We now state a standard result in numerical analysis of the convergence of the global error of a one-step method.
\begin{lemma}\citep{hairer1987solving}\label{lemma: global error}
	Let $\Phi_t$ denote an order-one numerical approximation to a smooth flow map $\phi_t$
	\begin{equation}
	\|\underbrace{{\Phi}_{t/m}\circ\dots\circ{\Phi}_{t/m}}_{m\text{ times}}(x) - \phi_{t}(x)\| \leq \frac{tK}{m}(e^{tL}-1),
	\end{equation}
	for some constants $K$ and $L$.
\end{lemma}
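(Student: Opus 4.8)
The plan is to prove this by the standard ``discrete Gronwall'' / Lady Windermere's fan argument, which turns a \emph{local} (one-step) error bound into a \emph{global} one via the Lipschitz dependence of the exact flow on its initial data. Write $h=t/m$, set $y_0=x$ and $y_{n}=\Phi_{h}(y_{n-1})$ for the numerical iterates, and let $e_n:=y_n-\phi_{nh}(x)$ denote the global error after $n$ steps, so that the quantity to be bounded is $\|e_m\|$ and $e_0=0$.

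First I would assemble two ingredients on a suitable compact set. (a) Lipschitz dependence of the flow on initial data: applying Gronwall's inequality (Lemma \ref{lem: gronwall}) to the two solutions $s\mapsto\phi_s(a)$ and $s\mapsto\phi_s(b)$ of $\dot w=X_H(w)$, with $f=g=X_H$, $\rho\equiv 0$ and $C=L$, gives $\|\phi_s(a)-\phi_s(b)\|\le e^{Ls}\|a-b\|$ for $s\in[0,t]$. (b) The local error estimate: since $\Phi$ is an order-one method and the flow is smooth, a Taylor expansion yields $\|\Phi_h(y)-\phi_h(y)\|\le \tilde K h^2$ for all $y$ in a slightly enlarged compact neighbourhood, with $\tilde K$ depending only on bounds for $X_H$ and its first derivatives there. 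A preliminary step is to fix once and for all a compact tube $\mathcal{K}$ containing $\{\phi_s(x):s\in[0,t]\}$ together with an $\varepsilon$-neighbourhood, derive $L$ and $\tilde K$ on $\mathcal{K}$, and verify that for $h$ below an explicit threshold all iterates $y_n$ remain in $\mathcal{K}$.

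The core of the argument is the one-step recursion. Using the flow property $\phi_h(\phi_{nh}(x))=\phi_{(n+1)h}(x)$ and splitting,
\begin{equation}
e_{n+1}=\big(\Phi_h(y_n)-\phi_h(y_n)\big)+\big(\phi_h(y_n)-\phi_h(\phi_{nh}(x))\big),
\end{equation}
so that ingredients (b) and (a) give $\|e_{n+1}\|\le \tilde K h^2+e^{Lh}\|e_n\|$. Unrolling this recursion from $e_0=0$ yields a geometric sum,
\begin{equation}
\|e_m\|\le \tilde K h^2\sum_{j=0}^{m-1}e^{Ljh}=\tilde K h^2\,\frac{e^{Lmh}-1}{e^{Lh}-1},
\end{equation}
and since $mh=t$ and $e^{Lh}-1\ge Lh$ this is bounded by $\tilde K h^2\cdot\frac{e^{Lt}-1}{Lh}=\frac{\tilde K}{L}\,\frac{t}{m}\,(e^{Lt}-1)$, which is the claim with $K=\tilde K/L$.

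The routine parts here are the Taylor estimate for the local error and the summation of the geometric series. The one genuinely delicate point is the \emph{a priori} confinement of the numerical trajectory to $\mathcal{K}$, since the constants $L$ and $\tilde K$ are only meaningful on that set; I would close this by a bootstrap, showing inductively that for $h$ small enough the bound just derived keeps $y_n$ within distance $\varepsilon$ of $\phi_{nh}(x)$, hence inside $\mathcal{K}$, which justifies the use of $L$ and $\tilde K$ at every step.
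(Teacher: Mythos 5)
Your proof is correct. The paper itself gives no proof of this lemma --- it is simply cited from Hairer--N{\o}rsett--Wanner --- and your argument (local error of size $O(h^2)$ for an order-one method, propagated by the Lipschitz stability $\|\phi_s(a)-\phi_s(b)\|\le e^{Ls}\|a-b\|$ of the exact flow, then summed as a geometric series, i.e.\ the standard ``Lady Windermere's fan'' argument) is exactly the one in that reference, including the correct identification $K=\tilde K/L$ and the bootstrap keeping the iterates in a fixed compact tube.
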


The main point here is that $m$ can be taken to be arbitrarily large, and the global error can be made arbitrarily small. We can now prove the main theorem.

\begin{proof}\textbf{of Theorem \ref{thm: universality}.}
	As the theorem assumes that $\mathrm{span}\{H^{\theta}(x): \theta\in\Theta\}$ is dense in $C^1(\Omega)$ for some compact set $\Omega$, there must exist $l$ functions $\{H^\theta_{i}\}_{i=1}^l$ such that for any $\epsilon>0$
	$$\|\sum_{i=1}^{l}H^\theta_{i}(x) - H(x)\|_{C_1(\Omega)} < \epsilon, \quad \forall x\in \Omega.$$
	We can therefore write  $\bar{H}^\theta(x) = \sum_{i=1}^{l}H^\theta_{i}(x)$ as $H$ plus an error term that is bounded in the $C_1(\Omega)$ norm 
	$$\bar{H}^\theta(x) = H(x) + \epsilon \Delta H(x), \quad\text{where } \|\Delta H(x)\|_{C_1(\Omega)}<1\quad, \forall x\in \Omega.$$
	Using Lemma \ref{lemma:flow bound} combined with the above, the exact flow on $X_{\bar{H}^{\theta}}$ can be made arbitrarily close to the exact flow on $X_{H}$ 
	$$\|\phi^{\bar{H}^{\theta}}_t(x)-\phi_t^{H}(x)\| < \frac{\epsilon}{2}$$ 
	for $t\in[0,T]$ and $x\in\Omega$. 

	Now let $\hat{\Phi}_{h}^{\bar{H}^{\theta}}=\phi^{H_{\theta_i}}_h\circ\dots\circ\phi^{H_{\theta_l}}_h$, which is an order one approximation to $\phi^{\bar{H}_\theta}_h$ satisfying $\|\hat{\Phi}_{h}^{\bar{H}^{\theta}}-\phi^{\bar{H}_\theta}_h\|<Ch^2$, for sufficiently small $h$ and some constant $C$. Then using Lemma \ref{lemma: global error}, there must exist some integer $m$ such that $m$ compositions of $\hat{\Phi}_{t/m}^{\bar{H}^{\theta}}=\hat{\Phi}_{t}^{\frac{1}{m}\bar{H}^{\theta}}$ can be made arbitrarily close to the exact flow on $X_{\bar{H}^{\theta}}$, that is 
	\begin{equation}
		\|\underbrace{\hat{\Phi}_{t/m}^{\bar{H}^{\theta}}\circ\dots\circ\hat{\Phi}_{t/m}^{\bar{H}^{\theta}}}_{m\text{ times}}(x) - \phi^{\bar{H}^\theta}_t(x) \| < \frac{\epsilon}{2}.
	\end{equation}
	By denoting the $k=lm$ layer map by $\Phi_{t}^{\bar{H}^{\theta}}=\hat{\Phi}_{t/m}^{\bar{H}^{\theta}}\circ\dots\circ\hat{\Phi}_{t/m}^{\bar{H}^{\theta}}=\hat{\Phi}_{t}^{\frac{1}{m}\bar{H}^{\theta}}\circ\dots\circ\hat{\Phi}_{t}^{\frac{1}{m}\bar{H}^{\theta}}$, and combining the above two inequalities, we have as required 
	\begin{equation}
		\|\Phi_{t}^{\bar{H}^{\theta}}(x)-\phi^{H}_t(x)\| < \epsilon, \quad \forall x\in \Omega.
	\end{equation}
\end{proof}

\section{Proof of Proposition \ref{thm:GR basis}}\label{sec:proof of gr thm}
We make use of the following.
\begin{lemma}{\citep{jin2022optimal}}\label{thm: triangular amtrix factorisation}
	Assume $A$ is invertible. Let the following denote a symplectic matrix $$M= \begin{pmatrix}
		A & B \\
		C & D \\
	\end{pmatrix}
	\in\R^{2n\times2n}, \quad M J M^T = J,$$ 
	for $A, B, C, D\in\R^{n\times n}$ and denote a triangular matrix by $$T_i = \begin{pmatrix}
		I &  S_i\\
		0 & I \\
	\end{pmatrix}
	$$ where $S_i=S_i^T\in\R^{n\times n}$ are symmetric. Then there exist some $S_1$, $S_2$, $S_3$ and $S_4$ such that any $M\in Sp(n, \R)$ can be factored into  $$M=T_4^TT_3T_2^TT_1.$$ 
\end{lemma}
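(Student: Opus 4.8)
The plan is to solve explicitly for the four symmetric matrices $S_1,S_2,S_3,S_4$ by matching blocks. First I would record that each $T_i$ and $T_i^T$ is symplectic: a direct check gives $T_iJT_i^T=J$ precisely because $S_i=S_i^T$, so the product $M':=T_4^TT_3T_2^TT_1$ is symplectic for any symmetric choice of the $S_i$. Multiplying the four factors out gives the block identities
\begin{equation}
A' = I + S_3S_2,\qquad B' = A'S_1 + S_3,\qquad C' = S_4A' + S_2,
\end{equation}
for the top-left, top-right and bottom-left blocks of $M'$. Because $M'$ is symplectic and (once $A'=A$) its top-left block is invertible, the relation $A'(D')^T - B'(C')^T = I$ forces the bottom-right block $D'$ to be determined by $A',B',C'$; the same relation holds for $M$. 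Hence it suffices to choose symmetric $S_1,\dots,S_4$ making $A'=A$, $B'=B$, $C'=C$, and equality of the $D$-blocks then follows automatically.

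Given $A=A'$ invertible, the identities above force $S_3S_2 = A-I$, and then $S_1 = A^{-1}(B-S_3)$ and $S_4 = (C-S_2)A^{-1}$. The whole difficulty is therefore to pick a symmetric factorization $S_3S_2 = A-I$ for which these induced $S_1$ and $S_4$ come out symmetric as well. A short computation, using the symplectic relations $AB^T = BA^T$ and $A^TC = C^TA$ (which follow from $MJM^T=J$ and $M^TJM=J$), shows that $S_1$ is symmetric exactly when $AS_3 = S_3A^T$, and $S_4$ is symmetric exactly when $A^TS_2 = S_2A$. So the problem reduces to finding symmetric $S_2,S_3$ with $S_3S_2=A-I$, with $AS_3$ symmetric, and with $A^TS_2$ symmetric. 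This compatibility between the factorization of $A-I$ and the two intertwining conditions is the step I expect to be the main obstacle, since a naive ``clearing'' of blocks one factor at a time collapses $M$ to $\mathrm{diag}(A,A^{-T})$, which cannot be written with only the two remaining triangular factors unless $A=I$.

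To resolve it I would invoke the classical fact that every real square matrix is similar to its transpose through a symmetric matrix --- equivalently, that every real matrix is a product of two symmetric matrices, one of which is nonsingular. This yields a symmetric invertible $G$ with $AG = GA^T$, i.e.\ $AG$ symmetric. Setting $S_2 := G^{-1}$ and $S_3 := (A-I)G$, both are symmetric (for $S_3$, note $S_3 = AG-G$ is a difference of symmetric matrices) and $S_3S_2 = A-I$ by construction. The intertwining conditions then hold: $A^TS_2 = A^TG^{-1} = G^{-1}A = S_2A$ follows from $AG=GA^T$, while $AS_3 = A^2G - AG$ is symmetric because $A^2G = G(A^T)^2$, again from $AG=GA^T$. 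Consequently $S_1 = A^{-1}(B-S_3)$ and $S_4 = (C-S_2)A^{-1}$ are symmetric, the blocks $A,B,C$ match by direct substitution into the identities above, the $D$-blocks match by the determinacy argument, and therefore $M = T_4^TT_3T_2^TT_1$ as claimed. I expect the only genuinely nontrivial ingredient to be the symmetric ``$A$ similar to $A^T$'' theorem; everything else is bookkeeping with the symplectic relations and the invertibility of $A$.
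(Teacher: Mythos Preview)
The paper does not supply its own proof of this lemma: it is quoted verbatim from \cite{jin2022optimal} and used as a black box in Appendices~\ref{sec:proof of gr thm} and~\ref{app:proof of linear hamiltonian flows}. So there is no in-paper argument to compare against, and your proposal stands on its own.

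Your argument is correct. The block computation of $T_4^TT_3T_2^TT_1$ is right, the reduction to the three conditions $S_3S_2=A-I$, $AS_3$ symmetric, $A^TS_2$ symmetric is cleanly derived from the symplectic relations $AB^T=BA^T$ and $A^TC=C^TA$, and the Taussky--Zassenhaus fact (every real square matrix is similar to its transpose via a symmetric, hence factors as a product of two symmetric matrices with one invertible) is exactly the right tool to discharge those conditions simultaneously. Your determinacy argument for the $D$-block, via $AD^T-BC^T=I$ with $A$ invertible, is also fine. The only remark worth making is that you might state explicitly that $M$ symplectic implies both $MJM^T=J$ and $M^TJM=J$, since you use the second form to get $A^TC=C^TA$; this is standard but worth one line.
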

\begin{proof}\textbf{of Proposition \ref{thm:GR basis}}
	Let $M$ be a block symplectic matrix as in Lemma \ref*{thm: triangular amtrix factorisation}. Then $MJM^T=J$ implies $AB^T=BA^T$. Expressing these matrices in terms of $S_i$ yields
\begin{align}\label{tsympnet basis}
\begin{split}
A &= I + S_3S_2,\\
B &= S_3 + S_3 S_2 S_1 + S_1.
% C &= S_4 + S_4S_3S_2+S_2= S_4A+S_2,\\
% D &= 1 + S_2S_1 + S_4S_1 + S_4S_3 + S_4S_3S_2S_1,
\end{split}
\end{align}
% Due to Lemma \ref{thm: triangular amtrix factorisation} choosing $A$ and $B$ of the form \eqref{tsympnet basis} will satisfy $AB^T=BA^T$. 
\end{proof}

\section{Proof of Theorem \ref{thm: representation of linear hamiltonian flows}}\label{app:proof of linear hamiltonian flows}

% Define the following Hamiltonians 
% \begin{equation}
% 	L_i(p;a) = a p_i^2, \quad L_{ij}(p;a) = a p_ip_j \quad R_i(q;a) = a q_i^2, \quad R_{ij}(q;a) = a q_iq_j, 
% \end{equation}
% where the $a\in\R$ are free parameters. 

% \begin{lemma}
% 	The symplectic group $Sp(U)$ is generated by the family of Hamiltonian shear maps:
% 	\begin{equation}
% 		\{\phi^{L_i}_h, \phi^{L_{ij}}_h, \phi^{R_i}_h, \phi^{R_{ij}}_h\}=\mathcal{S}
% 	\end{equation}
% 	i.e., if $M\in Sp(U)$, then $M=\prod_{i=1}^{k} g_i$ for $g\in\mathcal{S}$.
% \end{lemma}
% \begin{proof}
% 	The proof is a direct consequence of theorem 2.1 and lemma 3.2 in \cite{janeczko2009characterization}.
% \end{proof}
We begin by stating a lemma from \citep{jin2022optimal} that gives the optimal factorization of a symplectic matrix into unit triangular matrices. 
\begin{lemma}\citep{jin2022optimal}\label{optimal factorisation}
For any symplectic matrix $M \in \R^{2n \times 2n}$, there exist $\delta_1, \cdots, \delta_n \in \{0, 1\}$ and symmetric $S_i=S_i^T \in \R^{d \times d}$ such that
	\begin{equation}
	M = \begin{pmatrix}
	I & \text{diag}(\delta_1, \cdots, \delta_d) \\
	0 & I 
	\end{pmatrix}
	\begin{pmatrix}
	I & 0 \\
	S_4 & I 
	\end{pmatrix}
	\begin{pmatrix}
	I & S_3 \\
	0 & I 
	\end{pmatrix}
	\begin{pmatrix}
	I & 0 \\
	S_2 & I 
	\end{pmatrix}
	\begin{pmatrix}
	I & S_1 \\
	0 & I 
	\end{pmatrix}
	\end{equation}
\end{lemma}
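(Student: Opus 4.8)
The plan is to leverage the already-stated factorization for the nondegenerate case, Lemma~\ref{thm: triangular amtrix factorisation}, and to absorb the entire difficulty into producing the single extra factor $U(D_\delta) := \begin{pmatrix} I & D_\delta \\ 0 & I\end{pmatrix}$ with $D_\delta = \mathrm{diag}(\delta_1,\dots,\delta_n)$, $\delta_i \in \{0,1\}$. Write $M = \begin{pmatrix} A & B \\ C & D\end{pmatrix}$ and abbreviate $U(S) = \begin{pmatrix} I & S\\0&I\end{pmatrix}$ and $L(S) = \begin{pmatrix} I&0\\S&I\end{pmatrix} = U(S)^T$ for symmetric $S$. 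Since $U(D_\delta)$ is symplectic, so is $U(-D_\delta)M = \begin{pmatrix} A - D_\delta C & \ast \\ C & D\end{pmatrix}$, whose top-left block is $A - D_\delta C$. If I can choose $\delta\in\{0,1\}^n$ so that $A - D_\delta C$ is invertible, then Lemma~\ref{thm: triangular amtrix factorisation} applied to $U(-D_\delta)M$ gives $U(-D_\delta)M = T_4^T T_3 T_2^T T_1 = L(S_4)U(S_3)L(S_2)U(S_1)$ for symmetric $S_i$, and left-multiplying by $U(D_\delta)$ produces exactly the claimed five-factor form. So the whole statement reduces to one existence claim.

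The key lemma to prove is: for any symplectic $M$ there is $\delta\in\{0,1\}^n$ with $A - D_\delta C$ invertible. First I would record the symplectic input: $M^TJM = J$ forces $A^TC = C^TA$, so the columns of $\begin{pmatrix} A\\C\end{pmatrix}$ span an $n$-dimensional isotropic (hence Lagrangian) subspace $\mathcal{L}\subset\R^{2n}$, and they are linearly independent since $M$ is invertible. Next, because the determinant is multilinear in the rows and the $i$-th row of $A - D_\delta C$ is $A_i - \delta_i C_i$, I would expand
\begin{equation}
\det(A - D_\delta C) = \sum_{\sigma\subseteq\{1,\dots,n\}} (-1)^{|\sigma|}\Big(\prod_{i\in\sigma}\delta_i\Big)\det(R^\sigma),
\end{equation}
where $R^\sigma$ is the matrix whose $i$-th row is $C_i$ if $i\in\sigma$ and $A_i$ otherwise. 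For $\delta\in\{0,1\}^n$ one has $\prod_{i\in\sigma}\delta_i = 1$ iff $\sigma\subseteq\mathrm{supp}(\delta)$, so $\delta\mapsto\det(A - D_\delta C)$ is precisely the subset-sum (zeta) transform of $\sigma\mapsto(-1)^{|\sigma|}\det(R^\sigma)$.

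The crux is then a clean dichotomy. If $\det(A - D_\delta C)=0$ for every $\delta\in\{0,1\}^n$, Möbius inversion over the Boolean lattice forces $\det(R^\sigma)=0$ for every $\sigma$, i.e.\ every ``mixed minor'' of $\begin{pmatrix} A\\C\end{pmatrix}$ formed by selecting one row from each conjugate pair $\{i, n+i\}$ vanishes. I would finish by invoking the standard fact that the $2^n$ coordinate Lagrangians cover the Lagrangian Grassmannian: an $n$-dimensional isotropic subspace is always transverse to at least one coordinate Lagrangian $\R^n_\sigma$, which is exactly the statement that some mixed minor $\det(R^\sigma)$ is nonzero. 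This contradiction proves the key lemma, and with it the theorem. It is essential here that $\mathcal{L}$ be isotropic and not merely of rank $n$: one can exhibit rank-$n$ blocks with all mixed minors zero, and those are precisely the non-symplectic $M$.

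I expect the main obstacle to be the key lemma, and specifically pinpointing where symplecticity enters. The multilinear expansion and Möbius inversion are purely formal and convert the problem into a statement about mixed minors; the genuine content is that an isotropic $n$-plane always admits a transverse coordinate Lagrangian. I would prove this last fact by a short symplectic-reduction induction on $n$ (reducing modulo the symplectic plane $\mathrm{span}(e_1,e_{n+1})$ and splitting on whether $\mathcal{L}$ meets it), or cite it directly as the Arnold--Maslov chart covering of the Lagrangian Grassmannian. A secondary, purely bookkeeping check is to verify $T_i^T = L(S_i)$, so that Lemma~\ref{thm: triangular amtrix factorisation} delivers the alternating $L\,U\,L\,U$ pattern that, prefixed by $U(D_\delta)$, matches the stated order of factors.
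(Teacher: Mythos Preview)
The paper does not prove this lemma at all: it is quoted verbatim as a result of \citet{jin2022optimal} and used as a black box in the proof of Theorem~\ref{thm: representation of linear hamiltonian flows}. So there is no ``paper's own proof'' to compare against. That said, your reduction is correct and is essentially the standard way this factorization is established. Premultiplying by $U(-D_\delta)$ to make the top-left block invertible and then invoking the four-factor Lemma~\ref{thm: triangular amtrix factorisation} is exactly the right move, and the bookkeeping $T_i^T = L(S_i)$ lines up with the alternating $U\,L\,U\,L\,U$ pattern in the statement.

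Your key lemma --- that some $\delta\in\{0,1\}^n$ makes $A-D_\delta C$ invertible --- is the genuine content, and your argument for it is sound. The multilinear expansion of $\det(A-D_\delta C)$ together with M\"obius inversion on the Boolean lattice cleanly reduces the claim to ``not all mixed minors $\det(R^\sigma)$ vanish,'' and recognising this as the statement that the Lagrangian column span of $\begin{pmatrix}A\\C\end{pmatrix}$ is transverse to at least one coordinate Lagrangian is the right geometric interpretation. The covering of the Lagrangian Grassmannian by the $2^n$ coordinate charts is a standard fact (and your symplectic-reduction induction sketch would work to prove it from scratch). Your remark that isotropy is essential here, not just rank $n$, is also correct and worth keeping. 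In short: your proposal fills in a proof the paper deliberately omits by citation, and it does so correctly.
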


We now show that any unit triangular matrix transformation can be represented by a P-SympNet. 

\begin{lemma}\label{triangular psympnet}
	Any triangular matrix transformation of the form $Tx$ or $T^Tx$ where 
	\begin{equation}
		T = \begin{pmatrix}
			I & S \\
			0 & I 
		\end{pmatrix}
	\end{equation} 
	for some symmetric $S=S^T$ can be represented by at most $n$ P-SympNet layers. 
\end{lemma}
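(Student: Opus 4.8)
The plan is to realise each quadratic P-SympNet layer as a rank-one linear symplectic shear, and then to factor the target unit triangular matrix into at most $n$ such shears whose direction vectors all lie in a common Lagrangian subspace, so that they commute and multiply additively. This, combined with Lemma \ref{optimal factorisation}, is what feeds into Theorem \ref{thm: representation of linear hamiltonian flows}.

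\emph{Step 1: quadratic layers are rank-one shears.} Specialising the layer formula \eqref{scalar shear layer} to the quadratic ridge function $\alpha(z)=az^2$ with direction $w\in\R^{2n}$ gives $\alpha'(z)=2az$, hence
\begin{equation}
\phi_h^{\alpha\circ w}(x) = x + 2ah\,(w^Tx)\,Jw = \bigl(I + c\,Jww^T\bigr)x ,
\end{equation}
where $c=2ah$ is an arbitrary real scalar (for fixed $h>0$ we may choose $a$ freely). This map is linear and symplectic because $w^TJw=0$.

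\emph{Step 2: rewrite $T$ as $I+JM$ and diagonalise.} Writing $x=(p,q)$ and setting $M=\bigl(\begin{smallmatrix}0&0\\0&-S\end{smallmatrix}\bigr)$ (resp.\ $M'=\bigl(\begin{smallmatrix}S&0\\0&0\end{smallmatrix}\bigr)$ for $T^T$), a block computation gives $JM=\bigl(\begin{smallmatrix}0&S\\0&0\end{smallmatrix}\bigr)$, so $T=I+JM$ with $M=M^T$. Since $S=S^T$ we may write $S=\sum_{j=1}^r\lambda_ju_ju_j^T$ with $u_j\in\R^n$ orthonormal and $r=\mathrm{rank}(S)\le n$; defining $w_j=(0,u_j)^T\in\R^{2n}$ (resp.\ $w_j=(u_j,0)^T$) we obtain $\sum_{j=1}^r(-\lambda_j)w_jw_j^T=M$ (resp.\ $\sum_{j=1}^r\lambda_jw_jw_j^T=M'$).

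\emph{Step 3: the shears commute.} All the $w_j$ lie in the isotropic subspace $\{(0,u):u\in\R^n\}$ (resp.\ $\{(u,0):u\in\R^n\}$), so $w_i^TJw_j=0$ for every $i,j$. An induction on the number of factors then yields, in any order,
\begin{equation}
\prod_{j=1}^{r}\bigl(I + c_j\,Jw_jw_j^T\bigr) = I + \sum_{j=1}^{r}c_j\,Jw_jw_j^T ,
\end{equation}
the inductive step using $\bigl(I+\sum_{j<m}c_jJw_jw_j^T\bigr)Jw_m = Jw_m + \sum_{j<m}c_j(w_j^TJw_m)Jw_j = Jw_m$. Taking $c_j=-\lambda_j$ (resp.\ $c_j=\lambda_j$) makes the right-hand side $I+JM=T$ (resp.\ $I+JM'=T^T$). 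Since $r\le n$, this exhibits $T$ (and $T^T$) as a composition of at most $n$ quadratic P-SympNet layers of the form from Step 1, padding with identity layers $c_j=0$ if fewer than $n$ are needed.

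\emph{Main obstacle.} The only slightly delicate point is Step 3: the cross terms in the product of the rank-one shears vanish exactly because all the chosen directions span a single Lagrangian subspace of $(\R^{2n},\omega)$; the remainder is just the spectral theorem for $S$ and routine block-matrix arithmetic.
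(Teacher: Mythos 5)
Your proof is correct and follows essentially the same route as the paper's: spectrally decompose $S$ into rank-one pieces, realise each as a quadratic P-SympNet layer $I+cJww^T$ with direction in the $p$- or $q$-Lagrangian subspace, and use the vanishing cross terms to collapse the product into $I+JM$. You are in fact slightly more careful than the paper on two points — you keep track of the signs of the eigenvalues (the paper writes $S=\sum w_iw_i^T$, which implicitly assumes positive semidefiniteness unless the sign is absorbed into the polynomial coefficient) and you verify the commutation/additivity of the shears explicitly rather than asserting the factorisation.
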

\begin{proof} Write $S$ in its spectral decomposition $S= \sum_{i=1}^n w_i w_i^T$, where $w_i=\lambda_i v_i$ for non-negative eigenvalues $\lambda_i^2$ and orthogonal eigenvectors $v_i$ of $S$. This yields  
	\begin{equation}
		\begin{pmatrix}
			I & \sum_{i=1}^n w_iw_i^T \\
			0 & I 
		\end{pmatrix}
		\begin{pmatrix}
			p\\q\\
		\end{pmatrix} 
		=\prod_{i=1}^n\begin{pmatrix}
			I & w_iw_i^T \\
			0 & I 
		\end{pmatrix}
		\begin{pmatrix}
			p\\q\\
		\end{pmatrix} 
		 = \phi^{(w_1^Tq)^2}_1\circ...\circ\phi^{(w_n^Tq)^2}_1 (x)
	\end{equation}
	where $\phi^{(w_i^Tq)^2}_1 = e^{J\hat{w}_i\hat{w}_i^T}=(I + J\hat{w}_i\hat{w}_i^T)$ is the $i$th layer of a P-SympNet with quadratic ridge functions and direction vector $\hat{w}_i = (0, w_i)\in\R^{2n}$, $w_i\in\R^n$, and timestep $h=1$. The $h>0$ case can be achieved by scaling $w_i$. The same argument holds for transformations of the type $T^Tx$ using composition of $\phi^{(w_i^Tp)^2}_1$.
\end{proof}

Lastly, we state an elementary result from \citep{lee2012smooth}. 

\begin{lemma}\citep[Proposition 20.9,][]{lee2012smooth}\label{lemma: lee}
	Let $G$ be a Lie group and let $\mathfrak{g}$ be its Lie algebra. For any $X, Y \in \mathfrak{g}$, there is a smooth function $Z : (-\epsilon, \epsilon) \to \mathfrak{g}$ satisfying $Z(0) = 0$, and such that the following identity holds for all $t \in (-\epsilon, \epsilon)$: 
	$$\exp(tX)\exp(tY) = \exp(t(X + Y + Z(t))).$$
\end{lemma}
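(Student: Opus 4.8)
The final statement is the Baker--Campbell--Hausdorff--type lemma (Proposition 20.9 of \citep{lee2012smooth}): for a Lie group $G$ with Lie algebra $\mathfrak{g}$ and any $X,Y\in\mathfrak{g}$, there is a smooth $Z:(-\epsilon,\epsilon)\to\mathfrak{g}$ with $Z(0)=0$ such that $\exp(tX)\exp(tY)=\exp\!\big(t(X+Y+Z(t))\big)$ for all $t$ in the interval. My plan is to prove this by reducing it to the inverse function theorem applied to the exponential map at the identity, exploiting that $\exp$ is a local diffeomorphism near $0\in\mathfrak{g}$.

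\textbf{Step 1 (local inverse of $\exp$).} Recall that the differential of $\exp:\mathfrak{g}\to G$ at $0$ is the identity on $\mathfrak{g}$ (under the canonical identification $T_0\mathfrak{g}\cong\mathfrak{g}$ and $T_eG\cong\mathfrak{g}$). By the inverse function theorem, $\exp$ restricts to a diffeomorphism from a neighborhood $U$ of $0\in\mathfrak{g}$ onto a neighborhood $W$ of $e\in G$; denote the smooth inverse by $\log:W\to U$. This is the only nontrivial input, and it is standard for any Lie group.

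\textbf{Step 2 (define $Z$ via the product curve).} Consider the smooth map $\gamma:\mathbb{R}\to G$, $\gamma(t)=\exp(tX)\exp(tY)$. Since $\gamma(0)=e\in W$ and $\gamma$ is continuous, there is $\epsilon>0$ such that $\gamma(t)\in W$ for all $t\in(-\epsilon,\epsilon)$. On this interval set
\begin{equation}
W(t) := \log\!\big(\exp(tX)\exp(tY)\big)\in\mathfrak{g},
\end{equation}
which is smooth as a composition of smooth maps, and satisfies $W(0)=\log(e)=0$. By construction $\exp(W(t))=\exp(tX)\exp(tY)$. It remains to write $W(t)$ in the form $t(X+Y+Z(t))$.

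\textbf{Step 3 (factor out $t$ and identify $Z$).} I compute $W'(0)=X+Y$ by differentiating $\gamma$ at $t=0$ (using $\tfrac{d}{dt}\exp(tX)|_0=X$, $\tfrac{d}{dt}\exp(tY)|_0=Y$, and $d\log_e=\mathrm{id}$). Since $W$ is smooth with $W(0)=0$, Hadamard's lemma (or Taylor's theorem with integral remainder, $W(t)=t\int_0^1 W'(st)\,ds$) gives a smooth $\mathfrak{g}$-valued function $V(t):=\int_0^1 W'(st)\,ds$ with $W(t)=t\,V(t)$ and $V(0)=W'(0)=X+Y$. Defining $Z(t):=V(t)-(X+Y)$ yields a smooth function with $Z(0)=0$ and $W(t)=t\big(X+Y+Z(t)\big)$, which is exactly the claimed identity after applying $\exp$ to both sides.

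\textbf{Main obstacle.} The only genuine difficulty is Step~1 --- establishing that $\exp$ is a local diffeomorphism at the origin, equivalently that $d(\exp)_0=\mathrm{id}_{\mathfrak{g}}$ --- which rests on the standard theory of the exponential map; everything afterward is a routine application of the inverse function theorem and Hadamard's factorization lemma. Since the statement is quoted from \citep{lee2012smooth}, I would present Steps~2--3 in full and simply cite the local-diffeomorphism property of $\exp$, emphasizing that smoothness of $Z$ is the substantive conclusion (as opposed to the purely formal BCH power series, which need not converge) and is precisely what Steps~2--3 deliver.
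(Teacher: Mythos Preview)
Your proof is correct and follows the standard argument (local diffeomorphism of $\exp$ at $0$, pull back the product curve via $\log$, then factor out $t$ using Hadamard's lemma). Note, however, that the paper does not actually prove this lemma: it is simply quoted as Proposition~20.9 of \citep{lee2012smooth} and used as a black box in the proof of Theorem~\ref{thm: representation of linear hamiltonian flows}, so there is no ``paper's own proof'' to compare against---your argument is essentially the textbook proof one would find in Lee.
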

Now we can prove the theorem.
\begin{proof}\textbf{of Theorem \ref{thm: representation of linear hamiltonian flows}.}
We first note that any P-SympNet of degree $d$ is a super set of P-SympNets of degree two, hence it suffices to to prove the theorem for P-SympNets of degree two. We consider the three cases separately.  

Case $(i)$: Using Lemma \ref{optimal factorisation} we can write any symplectic matrix in $2n$ dimensions as a product of 5 unit triangular matrices. Therefore, using Lemma \ref{triangular psympnet}, we can write such a matrix as a product of at most $5n$ P-SympNet layers.

Case $(ii)$: If we further assume that $A$ is invertible as in Lemma \ref{thm: triangular amtrix factorisation}, we can write a symplectic matrix as a product of only four unit triangular matrices. This can therefore be represented by a P-SympNet with at most $4n$ layers using Lemma \ref{triangular psympnet}. 

Case $(iii)$: Let the set $\{y_i\}_{i=1}^{2n}$, where $y_i=Jw_iw_i^T$, denote a basis for the algebra $\mathfrak{g}$ of Hamiltonian matrices of the form $JM$. Using Lemma \ref{lemma: lee} we have
$$e^{ha_1y_1}\dots e^{ha_{2n}y_{2n}} = e^{h(a_1y_1 + ... +  a_{2n}y_{2n}) + hZ(h)}$$
   
As $Z(h)\in\mathfrak{g}$ is also in the Lie algebra, and $h$ is fixed, we can choose $a_1y_1 + ... +  a_{2n}y_{2n} = (b_1y_1 + ... +  b_{2n}y_{2n}) - Z(h)$. Noting that a quadratic P-SympNet layer is of the form $$\phi^{\frac{1}{2}(w_i^Tx)^2}_h(x) = (I + hJw_iw_i^T)x = e^{hJw_iw_i^T}x,$$ due to the fact that $Jw_iw_i^T$ is nilpotent. Then
$$\Phi_h^{\bar{H}^\theta}(x)=e^{ha_1y_1}\dots e^{ha_{2n}y_{2n}}x  =  e^{h(b_1y_1 + ... +  b_{2n}y_{2n})}x = e^{hJM}x$$
for any $JM\in\mathfrak{g}$, as required.
\end{proof}

\section{Proof of Theorem \ref{thm:grsympnet density}} \label{proof of grsympnet density}
We begin by restating the universal approximation theorem for G-SympNets.
\begin{lemma}\citep{jin2020sympnets}
	For any $r>0$ and open $\Omega \subset R^{2n}$, the set of G-SympNets is r-uniformly dense on a compact subset of $Sp^r(\Omega)$ if the activation function $\sigma$ is $r$-finite.
\end{lemma}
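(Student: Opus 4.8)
The plan is to prove Theorem \ref{thm:grsympnet density} by a set-inclusion argument: I would show that the class of GR-SympNets (Definition \ref{def:grsympnet}) contains the class of G-SympNets of \cite{jin2020sympnets} as a degenerate parameter regime, and then invoke the density of G-SympNets in $Sp^r(\Omega)$ (the lemma restated at the head of this appendix) to transfer density to the larger class. Since any family of maps containing a dense family is itself dense, this yields the claim immediately, with the only real work being the layer-by-layer reduction.

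First I would make the layer-level inclusion precise. A GR-SympNet layer is given by \eqref{eq:shearSympNetlayer} with the blocks $(A,B)$ constrained to satisfy $AB^T=BA^T$ through the parameterization of Proposition \ref{thm:GR basis}. Setting $S_1=S_2=S_3=0$ there gives $A=I$, $B=0$, so the argument $Ap+Bq$ collapses to $p$ and \eqref{eq:shearSympNetlayer} reduces to $(p,q)\mapsto(p,\,q+h\,\nabla\bar H^\theta_i(p))$; using instead the alternative ``(or $B$)'' assignment of Proposition \ref{thm:GR basis} with the same zero choice gives $B=I$, $A=0$, producing $(p,q)\mapsto(p-h\,\nabla\bar H^\theta_i(q),\,q)$. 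These are exactly the horizontal and vertical shear maps \eqref{v and h shears} that constitute the G-SympNet layers. Choosing the GR-SympNet generator $\mathcal{N}$ to coincide with the gradient of the G-SympNet shear Hamiltonian $T_i$ (respectively $V_i$) and alternating the two assignments across layers reproduces the G-SympNet composition \eqref{eq:gSympNet} term for term.

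With the inclusion in hand, I would assemble the density statement directly. For an arbitrary $\phi_h\in Sp^r(\Omega)$ and any $\epsilon>0$, the cited G-SympNet density lemma supplies a G-SympNet $\Psi$ with $\|\Psi(x)-\phi_h(x)\|<\epsilon$ for all $x\in\Omega$; by the previous paragraph this $\Psi$ is realizable as a GR-SympNet $\Phi_h^{\bar H^\theta}$ for the degenerate choice of the $S_i$ and a suitable $\mathcal{N}$, so the same bound holds with a GR-SympNet in place of $\Psi$. This is precisely the assertion of the theorem.

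The main obstacle, and the only point needing care, is checking that the constrained parameterization of Proposition \ref{thm:GR basis} can realize both fixed-direction configurations $A=I,B=0$ and $A=0,B=I$, since that proposition assumes only that \emph{one} of $A,B$ is invertible; the degenerate $B=0$ (resp. $A=0$) is admissible exactly because invertibility is demanded of the other block, and the alternating layer structure is already built into the GR-SympNet. A secondary check is that the GR-SympNet's neural-network generator $\mathcal{N}$ is expressive enough to match the $\int\sigma$ activation structure of the G-SympNet Hamiltonians, which holds as soon as $\mathcal{N}$ is allowed to range over gradients of such networks. Both verifications are routine, so the substance of the proof is the layer-level inclusion together with the cited density of the G-SympNet subclass.
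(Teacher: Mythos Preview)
Your proposal is correct and takes essentially the same approach as the paper: establish the set inclusion GR-SympNets $\supseteq$ G-SympNets via the degenerate choice $S_1=S_2=S_3=0$ in Proposition~\ref{thm:GR basis} (yielding $A=I,B=0$ or $A=0,B=I$), then invoke the cited G-SympNet density lemma to transfer density to the larger class. Note that the displayed lemma itself is a citation from \cite{jin2020sympnets} and carries no proof in the paper; you have correctly treated it as a black box and proved the surrounding Theorem~\ref{thm:grsympnet density}, which is precisely what the paper's one-line argument in this appendix does.
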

\begin{proof}\textbf{of Theorem \ref{thm:grsympnet density}}
The set of GR-SympNets is a superset of G-SympNets, and are equal when $A=I$ and $B=0$ or $A=0$ and $B=I$. Therefore, the result follows as a corollary of the above lemma.
\end{proof}

\bibliography{bibliography}

\end{document}